\def\eqref#1{Equation~(\ref{#1})}
\def\1{\bm{1}}
\def\##1\#{\begin{align}#1\end{align}}
\def\$#1\${\begin{align*}#1\end{align*}}
\let\tilde\widetilde
\def\given{{\,|\,}}
\newcommand{\cA}{\mathcal{A}}
\newcommand{\cD}{\mathcal{D}}
\newcommand{\cF}{\mathcal{F}}
\newcommand{\cX}{\mathcal{X}}
\newcommand{\E}{\mathbb{E}}
\newcommand{\EE}{\mathbb{E}}
\newcommand{\PP}{\mathbb{P}}
\newcommand{\RR}{\mathbb{R}}
\newcommand{\reg}{\mathrm{Reg}}
\definecolor{red1}{HTML}{f47983}
\definecolor{blue1}{HTML}{3eede7}
\definecolor{yellow1}{HTML}{f5dd6f}
\newcommand{\argmax}{\mathop{\mathrm{argmax}}}
\newcommand{\tr}{\mathop{\mathrm{tr}}}
\newcommand{\dotprod}[1]{\left\langle #1\right\rangle}
\newcommand{\norm}[1]{\|#1\|}
\newtheorem{theorem}{Theorem}
\newtheorem{assumption}{Assumption}
\newtheorem{proposition}{Proposition}
\newtheorem{lemma}{Lemma}
\newtheorem{definition}{Definition}
\newcommand{\off}{\mathrm{off}}
\newcommand{\KL}{D_{\mathrm{KL}}}
\newcommand{\mle}{\mathrm{MLE}}
\title{Iterative Preference Learning from Human Feedback: Bridging Theory and Practice for RLHF under KL-constraint}
\author{
Wei Xiong$^1$ \thanks{The first three authors contribute equally to this work. \\$^1$University of Illinois Urbana-Champaign. $^2$Salesforce AI Research. $^3$The Hong Kong University of Science and Technology. $^4$ Work done during an internship at The Hong Kong University of Science and Technology. Correspondence to wx13@illinois.edu, tongzhang@tongzhang-ml.org} \qquad 
Hanze Dong$^2$ \qquad
Chenlu Ye$^3$ \qquad
Ziqi Wang$^1$ \qquad Han Zhong$^4$ \\\\
Heng Ji$^1$ \qquad
Nan Jiang$^1$ \qquad
Tong Zhang$^1$\\
}
\date{}
\newcommand{\hmm}{\textbf{Human: }}     
\newcommand{\assi}{\textbf{Assistant: }}
\newtcolorbox[auto counter, number within=section, list type=subsubsection, list inside=toc]{sectionbox}[2][]{
colback=white!98!gray, colframe=black, 
colbacktitle=white!90!gray, coltitle=black, 
fonttitle=\bfseries,
title={#2}, 
list entry={Comment \thetcbcounter\quad}
}
\begin{document}

\maketitle

\begin{abstract}
This paper studies the alignment process of generative models with Reinforcement Learning from Human Feedback (RLHF). We first identify the primary challenges of existing popular methods like offline PPO and offline DPO as lacking in strategical exploration of the environment. Then, to understand the mathematical principle of RLHF, we consider a standard mathematical formulation, the reverse-KL regularized contextual bandit for RLHF. 
Despite its widespread practical application, a rigorous theoretical analysis of this formulation remains open.
We investigate its behavior in three distinct settings---offline, online, and hybrid---and propose efficient algorithms with finite-sample theoretical guarantees. 

Moving towards practical applications, 
our framework, with a robust approximation of the information-theoretical policy improvement oracle, naturally gives rise to several novel RLHF algorithms. This includes an iterative version of the Direct Preference Optimization (DPO) algorithm for online settings, and a multi-step rejection sampling strategy for offline scenarios. Our empirical evaluations on real-world alignment experiment of large language model demonstrate that these proposed methods significantly surpass existing strong baselines, such as DPO and Rejection Sampling Optimization (RSO), showcasing the connections between solid theoretical foundations and their potent practical implementations.

\end{abstract}

\setlength{\parindent}{0pt}
\setlength{\parskip}{8pt}

\tableofcontents

\section{Introduction} \label{sec:intro}

\textit{Reinforcement Learning from Human Feedback} (RLHF) \citep{christiano2017deep, ziegler2019fine} has emerged as a powerful paradigm to align modern 
generative models like Large Language Models (LLMs) and diffusion models with human values and preferences. 
This approach has shown significant effectiveness in applications such as
 ChatGPT \citep{OpenAI2023GPT4TR}, Claude \citep{Anthropic@claude}, Bard \citep{google@bard}, and LLaMA2 \citep{touvron2023llama}, by making the built AI system helpful, harmless, honest and controllable \citep{ouyang2022training, bai2022training}. 
 
 Despite its effectiveness, RLHF's implementation often involves ad-hoc practices and extensive algorithmic tuning in the entire pipeline, including preference data collection (it is hard to select representative humans~\citep{bai2022training}, larger language models~\citep{mint2024a} or program compiler~\citep{nlfeedback2023}), preference/reward modeling (reward misspecification and misgeneralization \citep{hong2022sensitivity, gao2023scaling}), and model optimization (instability of training \citep{choshen2019weaknesses} and distribution shift issue \citep{michaud2020understanding, tien2022causal}). Meanwhile, the resulting models of RLHF typically suffer from issues like performance degeneration if we impose strong optimization pressure toward an imperfect reward function \citep{michaud2020understanding, tien2022causal, gao2023scaling}, which contains bias and approximation error from the data collection and preference modeling \citep{gao2023scaling, wang2023enable}.  \citet{casper2023open}
 also discussed many other challenges of RLHF. Thus, it is important to understand the mathematical principle of the RLHF process, as well as the connections among its different steps, which should be able to motivate future algorithmic design in principle.

In current RLHF theory, the agent's objective is to maximize an observed reward function, with the optimal policy typically being deterministic and reward-greedy \citep{agarwal2019reinforcement}. However, in practical RLHF applications, merely maximizing the reward function is often insufficient and probably results in overfitting, as the generative model must simultaneously ensure both diversity and high fidelity in its outputs. A deterministic maximizer of the reward tends to compromise on these aspects significantly. For example, the maximizer of the ``safety reward'' tends to avoid providing answers all the time, which contradicts the LLM's training objective. The situation worsens due to bias and approximation errors in reward modeling, leading to the critical problem of reward hacking, where the model often repeats superfluous, pleasing yet irrelevant words to appease the reward model \citep{michaud2020understanding, tien2022causal, casper2023open}. Thus, it is important to model diversity and high fidelity in the theoretical framework beyond the reward. Notably, the most widely used mathematical objective function for this goal can be regarded as a reverse-KL regularized contextual bandit problem \citep{ziegler2019fine, wu2021recursively, ouyang2022training, rafailov2023direct,liu2023statistical}. The KL regularized contextual bandit additionally imposes a constraint that the optimal policy cannot move too far away from the original policy (i.e. the starting checkpoint of the LLM). 
A major difference between this objective function from traditional contextual bandit \citep{langford2007epoch} is that the optimal policy is stochastic, which is closer to the practical generative models. See an intuitive illustration why such a target is appealing in Figure \ref{fig:preference}. Despite numerous proposed procedures for this formulation, a rigorous theoretical analysis remains open. This paper provides a theoretical analysis of the regularized contextual bandit problem in both offline and online settings, aiming to inform and motivate practical algorithmic designs. We summarize the contributions and take-away messages of this work as follows:
\begin{itemize}
\item We identify the challenges of existing RLHF methods in Section~\ref{sec:formu}: they require a preference dataset with uniform coverage over the entire prompt-response space to converge to the optimal policy, which is extremely hard to satisfy in practice due to the exponentially large response space;
\item To understand the mathematical principle of RLHF, we first \emph{formally} formulate the RLHF process as a reverse-KL regularized contextual bandit problem in RLHF theory in Section~\ref{sec:formu}, which more accurately reflects real-world alignment practices \citep{ouyang2022training, bai2022training, rafailov2023direct} compared to existing theoretical frameworks. Meanwhile, we deliver a comprehensive theoretical analysis in offline, online, and hybrid settings for the formulated framework, where the three settings are complementary to each other and hold their own values in practical applications.
\begin{itemize}
    \item For offline learning from a fixed dataset, we show that RLHF with pessimism (a conservative reward estimation) is sample efficient in Section~\ref{sec:offline}. Moreover, we derive the way of implementing the pessimism with both the PPO and DPO;
    \item For the online/hybrid learning, we discuss how to strategically explore and continuously enhance the policy by online interactions with the humans in Section~\ref{sec:hybrid}. Specifically, by establishing the theoretical guarantee of the proposed algorithms, we show that the RLHF can benefit from online exploration. 
\end{itemize}
\item Moving towards practical applications, in Section~\ref{sec:discuss}, we demonstrate that the proposed frameworks can be practically implemented when combined with existing planning algorithms like offline PPO, offline DPO, and InfoNCA, that optimize against a fixed reward function. In other words, our framework can be viewed as built on the top of existing methods, and serve as to boost their performance;
\item In Section~\ref{sec:exp}, we demonstrate that the proposed algorithms empirically outperform existing strong baselines like DPO \citep{rafailov2023direct} and RSO \citep{liu2023statistical} in real-world LLM experiments. In particular, with the Zephyr-SFT-7B as the initial model, the resulting aligned policy enjoys an impressive win-rate $34.79\%$ in the AlpacaEval2 benchmark that beats many larger LLMs.
\end{itemize}

\subsection{Related Work}
There is a rich literature in RLHF and we refer the interested readers to the survey papers like \citet{casper2023open} for a more comprehensive review. We focus on the papers that are most related to our work here.


\textbf{RLHF} has attracted considerable attention in the past few years, especially after its tremendous success in ChatGPT \citep{OpenAI2023GPT4TR}. We refer interested readers to \citet{wirth2017survey, casper2023open} for a detailed survey but focus on the most related works here. The standard RLHF was popularized by \citet{christiano2017deep}, which served to direct the attention of the RL community to the preference-based feedback. The most popular and standard RLHF framework is outlined in the InstructGPT paper \citep{ouyang2022training}, Claude \citep{bai2022training} and the LLaMA2 report \citep{touvron2023llama} in detail, which typically consists of three steps starting from the pretrained model: supervised finetuning, reward modeling, and reward optimization. The effectiveness of this framework has been showcased by many recent generative models, like ChatGPT \citep{OpenAI2023GPT4TR}, Bard \citep{google@bard}, Claude \citep{Anthropic@claude}, and LLaMA2 \citep{touvron2023llama}. However, it is also noteworthy to indicate that the RLHF process often leads to degeneration in the performance of generation, commonly referred to as the “alignment tax” in the literature \citep{askell2021general}. This is usually because of the imperfection of the reward model and the model can make use of these imperfections to chase for a high reward. This phenomenon is referred to as the reward hacking \citep{michaud2020understanding, tien2022causal}. It is also possible to apply RLHF to general generative models, like the diffusion model \citep{hao2022optimizing, wu2023better, lee2023aligning, dong2023raft}. In this work, we use the terminology and analysis of LLMs for better illustration, and defer the study of general generative models to future work.

\textbf{RLHF algorithms.}  Proximal Policy Optimization (PPO) \citep{schulman2017proximal} is the most well-known algorithm in LLM alignment literature. However, its instability, inefficiency, and sensitivity to hyperparameters \citep{choshen2019weaknesses} and code-level optimizations \citep{engstrom2020implementation} present significant challenges in tuning for optimal performance and its tremendous success in Chat-GPT4 \citep{OpenAI2023GPT4TR} has not been widely reproduced so far. Additionally, it often necessitates incorporating an extra reward model, a value network (known as a critic), and a reference model, potentially as large as the aligned LLM \citep{ouyang2022training, touvron2023llama}. This imposes a significant demand on GPU memory resources. Thus, researchers have attempted to design alternative approaches for LLM alignment to resolve the aforementioned issues. \citet{dong2023raft, yuan2023rrhf, touvron2023llama, gulcehre2023reinforced} propose reward ranked finetuning (RAFT) (also known as the iterative finetuning, rejection sampling finetuning) by iteratively learning from the best-of-n policy \citep{nakano2021webgpt} to maximize the reward, which is a stable baseline with minimal hyper-parameter configuration and was applied to the alignment of LLaMA2 project. There is also a line of work focusing on deriving an algorithm from the KL-regularized formulation \citep{rafailov2023direct, zhu2023fine, wang2023beyond, liu2023statistical, li2023remax}. Among them, Direct Preference Optimization (DPO) \citep{rafailov2023direct} has emerged as an attractive alternative approach to PPO with notable stability and competitive performance. The innovative idea of DPO is to train the LLMs directly as a reward model based on the offline preference dataset and bypassing the reward modeling. Similar to DPO, there are also other works aiming to optimize the LLMs directly from the preference data, including \citep{zhao2023slic, azar2023general}, and has sparked considerable debate on whether reward modeling, as well as RL, is necessary for alignment. However, while these algorithms are partly inspired by mathematical principles and intuitions,  a comprehensive theoretical analysis remains open. 

\textbf{Theoretical study of RLHF.} 
The theoretical understanding of RLHF can be traced back to research on dueling bandits \citep[e.g.,][]{yue2012k,saha2021optimal,bengs2021preference}, a simplified setting within the RLHF framework. Recently, many works have focused on the more challenging RLHF problem (also known as the preference-based RL). \citet{xu2020preference,novoseller2020dueling,pacchiano2021dueling} delve into the study of tabular online RLHF, where the state space is finite and small. Moving beyond the tabular setting, \citet{chen2022human} provides the first results for online RLHF with general function approximation, capturing real-world problems with large state spaces. \citet{wang2023rlhf} presents a reduction-based framework, which transforms some sample-efficient algorithms for standard reward-based RL to efficient algorithms for online RLHF. Further advancements in algorithm designs are introduced by \citet{zhan2023query,wu2023making}, encompassing the development of reward-free learning type algorithms and posterior sampling-based algorithms tailored for online RLHF. Initiating exploration into offline RLHF, \citet{zhu2023principled} presents a pessimistic algorithm that is provably efficient for offline RLHF. Additionally, \citet{zhan2023provable} and \citet{li2023reinforcement} extend these investigations into the broader scope of general function approximation settings within offline RLHF. In comparison to these existing studies, our work introduces a new theoretical formulation and goal for RLHF, as well as novel problem settings, such as hybrid RLHF. The new mathematical formulation allows our framework to align more closely with recent advancements in LLMs, and we discuss the connections between our theoretical findings and practical algorithmic designs in Section~\ref{sec:discuss}. We mention in passing that \citet{tiapkin2023regularized} also considers the KL constraint in offline RL but mainly focuses on the scenario where an expert policy is available for imitation learning, thus differing from ours. 


Finally, concurrent to this work, \citet{snorkelai@pair} and \citet{yuan2024self} consider variants of iterative DPO that may share similar insights with us in terms of algorithmic design. We comment on the similarities and differences between our work and theirs as follows. \citet{snorkelai@pair} focus on the batch online setting, which will be thoroughly developed in Section~\ref{sec:hybrid}. One notable difference is that they set the reference policy as the one from last iteration, while we always use the $\pi_0$ as the reference policy. From a theoretical perspective, their algorithmic design resembles the classic policy gradient algorithm \citep{cai2020provably, zhong2023theoretical} that optimizes the non-regularized reward, while we optimize the KL-regularized one as most of the Instruct-GPT \citep{ouyang2022training} and Claude \citep{bai2022training} did. \citet{yuan2024self} also consider iterative DPO-type training. However, both our algorithm and \citet{snorkelai@pair} leverage the reward signal from the external model or human, while \citet{yuan2024self} adopts a clever idea by using the LLM itself as the reward model to provide preference signal, hence the name “self-rewarding”. We remark that the primary goal of this project is to formally formulate the RLHF as a KL-regularized contextual bandit problem and establish its mathematical foundation. The online iterative DPO is a natural corollary of the established framework but the framework can also be implemented by combining it with other oracle algorithms. See Section~\ref{sec:discuss} for details. Finally, we expect that the techniques presented in this paper also extend to analyze the general preference learning like \citet{azar2023general} beyond the reward-based learning.

\section{Formulation and Existing Approaches} \label{sec:formu}

In this section, we present the mathematical framework for the RLHF process, inspired by the standard LLM alignment workflow \citep{ouyang2022training, touvron2023llama}. 

\subsection{Formulation of RLHF}
Specifically, the LLM can take a prompt, denoted by $x \in \cX$, and produce a response, denoted by $a=[w_1,w_2, \ldots]$, where $w_i$ is the $i$-th token generated by the model. Accordingly, we can take $\cX$ as the state space of the contextual bandit and the $\cA$ as the action space. Following \citet{ouyang2022training, zhu2023principled, rafailov2023direct, liu2023statistical}, we assume that there exists a ground-truth reward function $r^*(x,a): \cX\times\cA \to [0,1]$ and the preference satisfies the Bradley-Terry model \citep{bradley1952rank}: 
\begin{equation} \label{eqn:bt} 
\begin{aligned}
        \PP(a^1 \succ a^2|x,a^1,a^2)
        =  \frac{\exp(r^*(x,a^1))}{\exp(r^*(x,a^1)) + \exp(r^*(x,a^2))}
 = \sigma\big(r^*(x,a^1)-r^*(x,a^2)\big),
 \end{aligned}
\end{equation}
where $a^1\succ a^2$ means that $a^1$ is preferred to $a^2$, and 
$\sigma(z) = 1/(1+\exp(-z))$ is the sigmoid function. We denote an LLM by a policy $\pi$ that maps $x$ to a distribution over $\cA$. 

In a typical LLM training pipeline, the tuning process begins with a pretrained LLM, which is subsequently fine-tuned using specialized and instructional data, yielding an initial LLM policy denoted as $\pi_0$. We will then align the LLM on RLHF data (prompt set), which we assume is taken from a distribution $x \sim d_0$. For preference learning, the way to gather information from the environment is to compare two different actions under the same state. Considering this, we assume that the agent can perform a pair of actions, aligning with precedents in existing literature \citep{novoseller2020dueling, pacchiano2021dueling}. In applications, we want the resulting LLM $\pi$ to be close to $\pi_0$, and our goal is to find a policy $\pi$ from some policy class $\Pi$ to maximize
\begin{equation} \label{eqn:target}
\begin{aligned}
    J(\pi)=\E_{x \sim d_0} \E_{a \sim \pi(\cdot|x)}\left[ r^*(x,a) + \eta \log \frac{\pi_0(a|x)}{\pi(a|x)}\right]
    = \E_{x \sim d_0} \left[ \E_{a \sim \pi(\cdot|x)}[r^*(x,a)] - \eta \KL(\pi(\cdot|x)\Vert \pi_0(\cdot|x)) \right],
\end{aligned}
\end{equation}
where $\eta>0$ is the KL penalty coefficient. This formulation is widely studied in practice \citep{ziegler2019fine, wu2021recursively, ouyang2022training, rafailov2023direct, liu2023statistical}, and our paper aims to study its theoretical property.

\begin{figure*}[t]
    \centering
   {\begin{footnotesize}
    \begin{tabular}{cccc}
          \includegraphics[width=3.5cm]{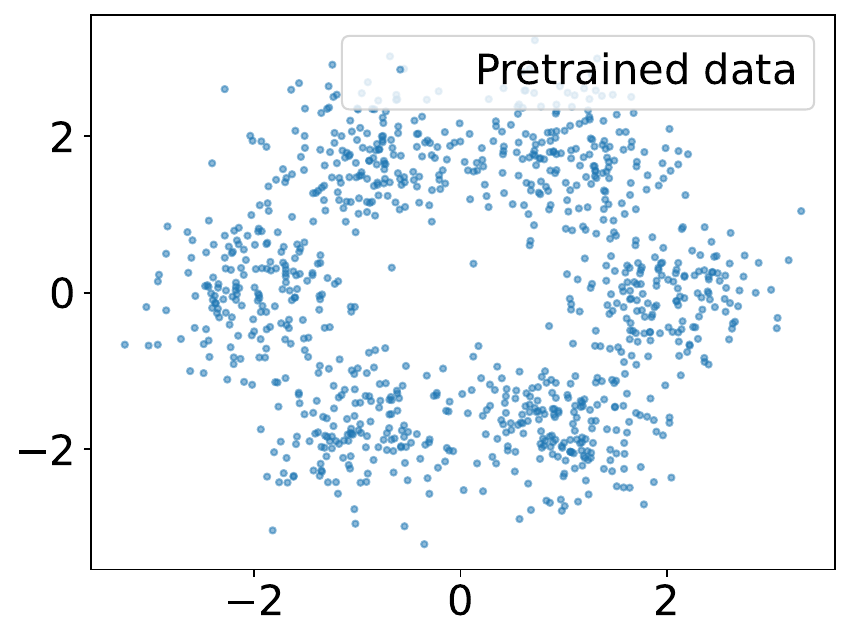}   &
          \includegraphics[width=3.5cm]{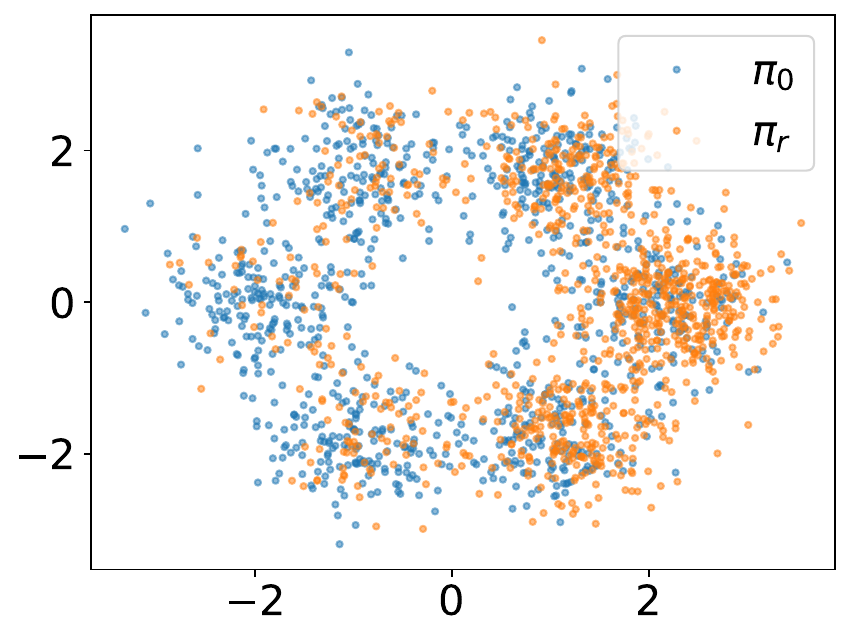} &
          \includegraphics[width=3.5cm]{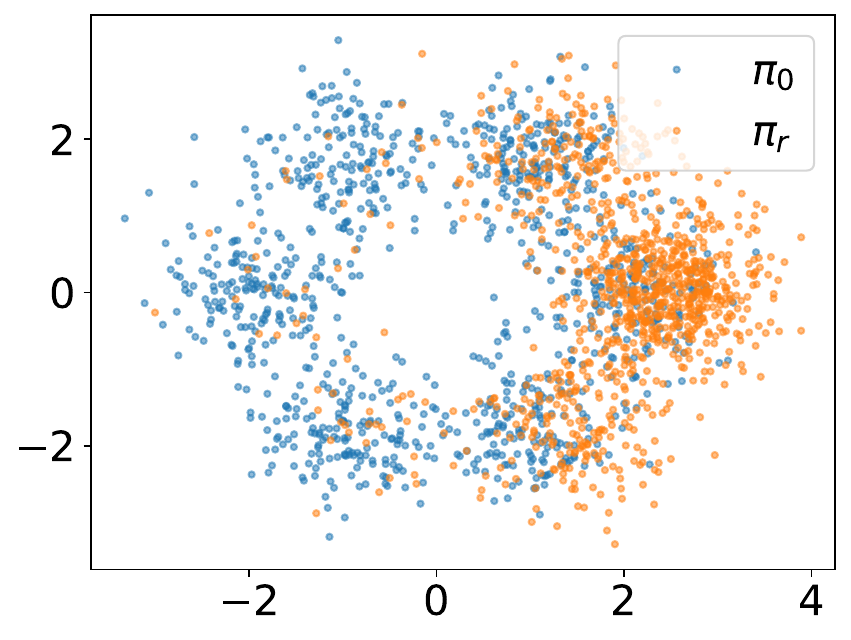} &
          \includegraphics[width=3.5cm]{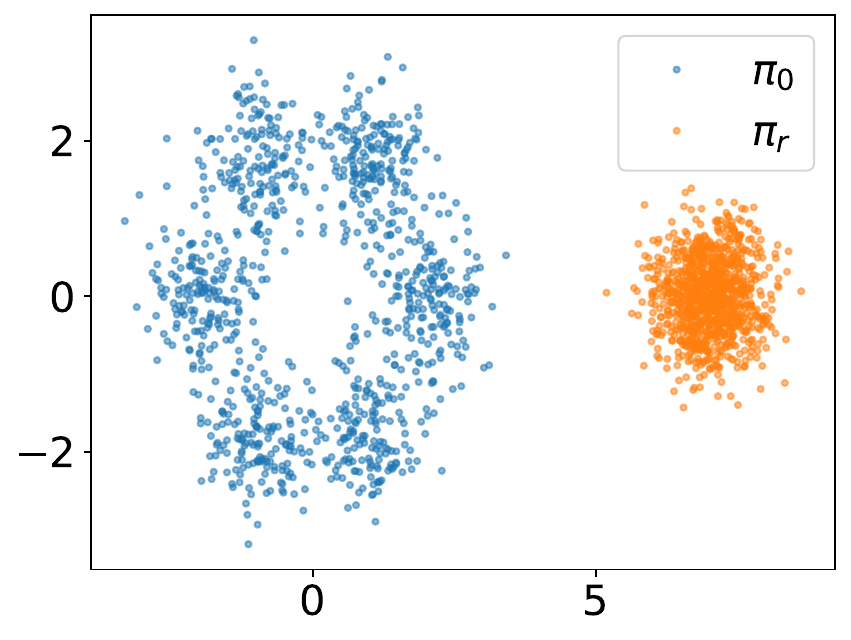} 
          \\
      (a) Pretrained $\pi_0$  &  (b) Preferred $\pi_r$ ($\eta^{-1}=0.5$) & (c) Preferred $\pi_r$ ($\eta^{-1}=1.0$) & (d) Preferred $\pi_r$ ($\eta^{-1}=10$)
    \end{tabular}
     \end{footnotesize}}
    \caption{ A two-dimensional illustrating example of human preference in generative modeling.
    We consider a scenario where the initial data distribution $\pi_0$, referred to as ``pretrained", is represented by a multi-modal Gaussian mixture, reflecting real-world data complexities. The ``human preference" is expressed as a bias towards the right, as we set \(r=[1,0]^\top a\). The KL penalty is critical in maintaining the desired behavior of \(\pi_r\). By varying the values of \(\eta\), we demonstrate the impact of KL regularization in (b)-(d). As \(\eta\) approaches zero, \(\pi_r\) increasingly focuses on maximizing rewards, often at the expense of the pretrained data's structure, leading to a Gibbs distribution that potentially diverges infinitely. 
    }
    \label{fig:preference}
\end{figure*}

Usually, we have a function class $\cF$ for approximating the ground truth $r^*$. Following \citet{pacchiano2021dueling,kong2022provably, zhu2023principled}, we make the following assumption for a clear presentation because it suffices to illustrate our ideas and the algorithmic design in this paper can also apply to the general case. The analysis also readily generalizes to general function class using standard complexity measures in RL theory literature \citep{russo2013eluder, gentile2022fast}, which essentially state that there are some low-rank structures in reward model. 

\begin{assumption} \label{assu:linear}
    Assume that the reward function is parameterized by $r_\theta(x,a) = \dotprod{\theta, \phi(x,a)}$ for feature extractor $\phi:\cX \times \cA \to \RR^d$. We also assume that $r^*(x,a) = \dotprod{\theta^*, \phi(x,a)}$ for some $\theta^* \in \RR^d$. For regularization, we assume that $\norm{\phi(x,a)} \leq 1$ for all possible $(x,a) \in \cX \times \cA$ and $\norm{\theta} \leq B$. We also denote $\gamma = 1/(2+\exp(-B) + \exp(B))$.
\end{assumption}

\textbf{Notation.} We use $\norm{z}_{\Sigma}$ to denote the induced norm $\sqrt{z^\top \Sigma z}$ for some positive-definite matrix. We also define $\phi(x, \pi) := \EE_{a \sim \pi(\cdot|x)} \phi(x,a)$ to simplify the presentation. We use $\tilde O$ when we omit the logarithmic factors. A notation table is provided in Table~\ref{tab:notation} to improve the readability of this paper.

\subsection{The Insufficiency of Classic Frameworks}
The classic RLHF framework adopted by \citet{ziegler2019fine, ouyang2022training} can be divided into two stages: 1) reward modeling, and 2) policy optimization against the learned reward. We summarize the details of this method in this subsection.

\textbf{Maximum Likelihood Estimation for reward modeling.} Given a preference dataset $\cD$ consists of numerous tuples, such as $(x, a^1, a^2, y)$, where $y$ is the preference signal. Specifically, $y=1$ means a preference for $a^1 \succ a^2$, while $y=0$ indicates $a^1 \prec a^2$. Given a dataset $\cD = \{(x,a^1,a^2,y)\}$, we can write the log-likelihood function of the BT models as follows:
\begin{equation}
    \label{eqn:bt_likelihood}
    \begin{aligned}
        \ell_{\cD}(\theta) = \sum_{(x,a^1,a^2,y) \in \cD} \Big[y \log \Big(\sigma\big(r_{\theta}(x,a^1) - r_{\theta}(x,a^2)\big)\Big)  + (1-y) \log \Big(\sigma\big(r_{\theta}(x,a^2) - r_{\theta}(x,a^1)\big)\Big)\Big].
            \end{aligned}
\end{equation}
We can compute the maximum likelihood estimator (MLE) $r_{\mathrm{MLE}}$ based on $\cD$ as $\theta_{\mle} = \argmax_{\theta \in \Theta(B)} \ell_{\cD}(\theta)$ with $\Theta(B) = \{\theta \in \RR^d: \norm{\theta}\leq B\}$. In practice, the MLE is also conducted with the LLMs \citep{ ouyang2022training, bai2022training, touvron2023llama} on the preference dataset. 

\textbf{Policy Optimization and Oracle.} With the learned reward in hand, to approximately optimize the target given in \eqref{eqn:target}, we simply call the seminal DRL method PPO with the following regularized reward:
$$
\hat{r}(x,a) = r_{\mathrm{MLE}}(x,a) - \eta \log \frac{\pi(a|x)}{\pi_0(a|x)}.
$$
To simplify the discussion, we first omit the computational challenges by defining the following information-theoretical policy improvement oracle, and defer a discussion on its practical implementations in Section~\ref{sec:discuss}.
\begin{definition}[Policy Improvement Oracle] \label{assu:oracle1}
For reward function $r:\cX \times \cA \to \RR$ and a reference policy $\pi_0$, for all $x \in \cX$, we can compute the Gibbs policy (Lemma~\ref{lem:kl_solu}):
$$ 
\begin{aligned}
    \pi_r(\cdot|x) &:= \argmax_{\pi \in \Pi} \EE_{a \sim \pi(\cdot|x)} \Big[r(x,a) + \eta \log \frac{\pi_0(a|x)}{\pi(a|x)}\Big]\propto \pi_0(\cdot|x) \cdot \exp\big(\frac{1}{\eta} r(x,\cdot)\big).
\end{aligned}
$$
\end{definition}
Accordingly, we take the policy class as $
\Pi := \big\{\pi(\cdot|x) \propto \pi_0(\cdot|x) \cdot \exp\big(\frac{1}{\eta} \dotprod{\theta, \phi(x,\cdot)}\big): \theta \in \Theta(B)\big\}$. However, even in this ideal case without worrying about tuning the RL method to its best performance, this useful framework suffers from the reward over-optimization issue \citep{gao2023scaling}. 
\begin{center}
Intuitively, \textit{the finite $\cD$ cannot cover the whole prompt-response space, and the fine-tuned reward model often performs poorly in the face of out-of-distribution data \citep{burns2023weak}}.
\end{center}
Accordingly, the learned $r_{\mathrm{MLE}}$ only aligns well with the ground truth $r^*$ in certain distributions and we further illustrate this phenomena in Figure~\ref{fig:over_opt}.

\begin{figure}[htp]
    \centering
    \includegraphics[width=12cm]{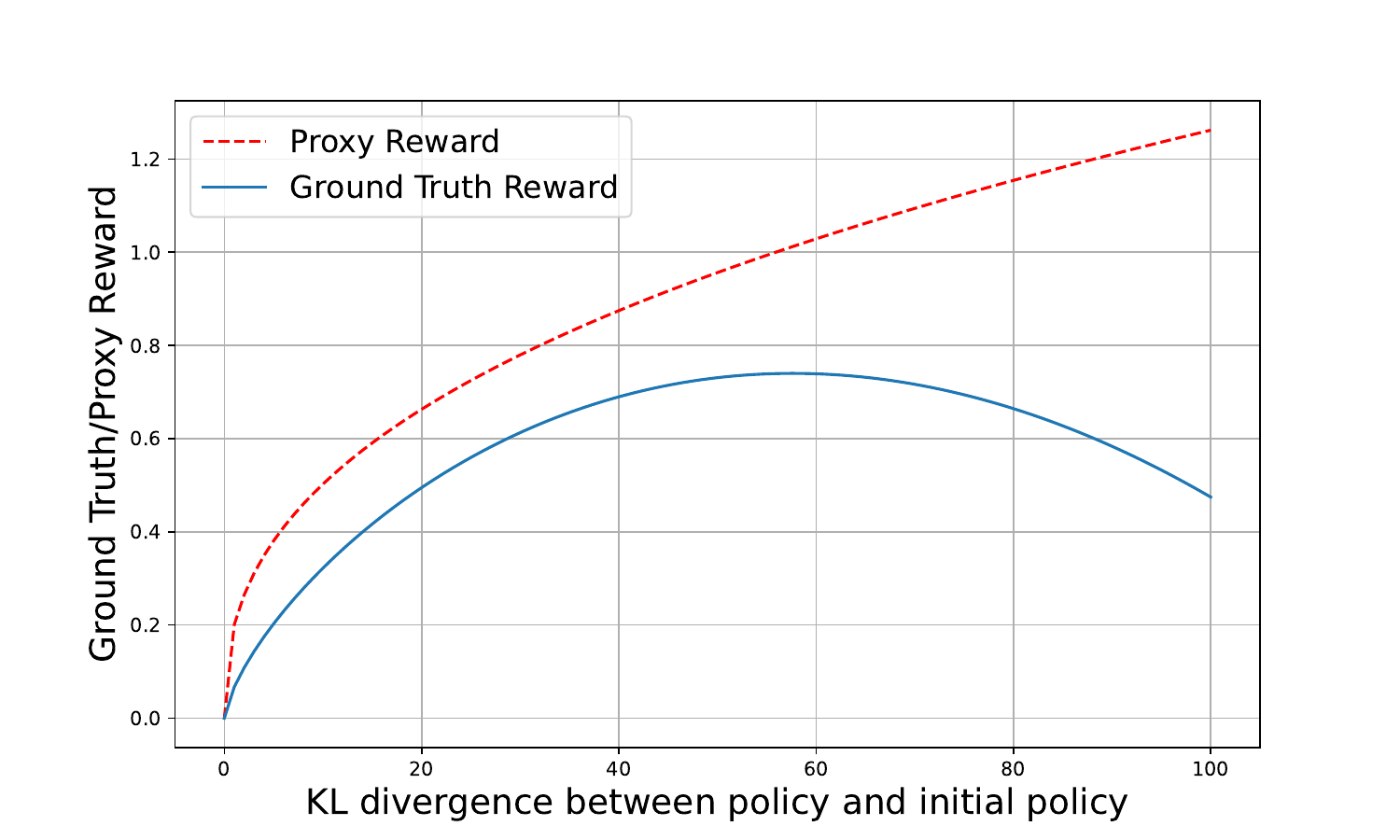}
    \caption{An illustration of the reward over-optimization adapted from \citet{gao2023scaling}. Here the proxy reward is trained from the responses of $\pi_0$. Therefore, in the early stage, the proxy reward aligns well with the gold reward in terms of the in-distribution responses. As reward gets higher, the distribution shift becomes larger, and since the training set is lacking in the coverage over these out-of-distribution responses, the proxy reward does not align with the gold reward in this stage.}
    \label{fig:over_opt}
\end{figure}

In addition to the PPO algorithm, the direct preference learning methods have attracted significant attention recently due to its stability and competitive performance \citep{zhao2023slic, rafailov2023direct, azar2023general, ethayarajh2024kto}. We use the DPO as a representative example and the intuition extends to other algorithms. We show that the DPO can be viewed as imposing constraints on the resulting policy by $\cD$. Informally, to converge to $\pi^*$, the DPO requires an infinite $\cD$ to cover the whole prompt-response space. We defer the discussion to Appendix~\ref{sec:appendix_discuss}.

In recognition of the above issues, the primary goal of this work is to \textcolor{red}{\textit{formally formulate the RLHF process, establish the mathematical foundation, and advance the practical algorithmic designs with the developed theoretical insights.}}

\subsection{Preliminary}
In this section, we present some useful technical tools and lemmas for subsequent analysis.  


\textbf{Value decomposition.} We have the following lemma to decompose the value difference.  
\begin{lemma}
\label{lem:decom}
Given a comparator policy $\pi$, we can decompose the suboptimality of $\hat{\pi}$ as follows:
$$
    \begin{aligned}
&J(\pi) - J(\hat{\pi})= \E_{x\sim d_0}\Big[{\EE_{\pi}[r^*(x, a) - \hat{r}(x, a)]} + {\EE_{\hat{\pi}}[\hat{r}(x, a) - r^*(x, a)]}+ \EE_{\pi}[\hat{r}(x, a)] - \E_{\hat{\pi}}[\hat{r}(x, a)] \\
&\qquad + \eta\KL(\hat{\pi}(\cdot|x)\|\pi_0(\cdot|x)) - \eta\KL(\pi(\cdot|x)\|\pi_0(\cdot|x))\Big],
    \end{aligned}
    $$
    where $\hat{r}: \cX \times \cA \to \RR$ is arbitrary.
\end{lemma}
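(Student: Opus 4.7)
The statement is essentially an algebraic identity, so the plan is to expand everything from the definition of $J$ in \eqref{eqn:target} and then insert the auxiliary reward $\hat{r}$ by a standard ``add-and-subtract'' trick. I would not expect a conceptual obstacle; the real content lies in the flexibility of the decomposition, which will be exploited later in the offline/online analyses by instantiating $\hat{r}$ with the MLE (or a pessimistic/optimistic variant of it).

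First, I would write out
\$
J(\pi)-J(\hat\pi)
= \E_{x\sim d_0}\Big[\E_{a\sim\pi(\cdot|x)}[r^*(x,a)] - \E_{a\sim\hat\pi(\cdot|x)}[r^*(x,a)]
- \eta\KL(\pi(\cdot|x)\|\pi_0(\cdot|x)) + \eta\KL(\hat\pi(\cdot|x)\|\pi_0(\cdot|x))\Big],
\$
which is a direct substitution of the definition \eqref{eqn:target} and isolates the ``reward part'' from the ``KL part.'' The KL part already matches the last two terms on the right-hand side of the claim, so nothing further is needed there.

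Next, I would handle the reward part $\E_{\pi}[r^*] - \E_{\hat\pi}[r^*]$. The plan is to insert a zero of the form $-\E_{\pi}[\hat r] + \E_{\pi}[\hat r] - \E_{\hat\pi}[\hat r] + \E_{\hat\pi}[\hat r]$ and regroup:
\$
\E_{\pi}[r^*] - \E_{\hat\pi}[r^*]
= \underbrace{\E_{\pi}[r^* - \hat r]}_{\text{first term}}
+ \underbrace{\E_{\pi}[\hat r] - \E_{\hat\pi}[\hat r]}_{\text{third term}}
+ \underbrace{\E_{\hat\pi}[\hat r - r^*]}_{\text{second term}}.
\$
Combining this with the KL piece reproduces exactly the stated five-term expression, completing the derivation.

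The only thing I would be careful about is to emphasize that $\hat r$ is an arbitrary function, so no measurability or boundedness assumption enters the proof; this generality is what later lets us plug in $r_{\mle}$ or its pessimistic/optimistic variants. In that sense, there is no genuinely ``hard step''; the lemma is a bookkeeping identity, and I would keep the proof to a few lines.
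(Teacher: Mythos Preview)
Your proposal is correct and matches the paper's approach exactly: the paper's own proof simply states that ``the equality can be verified directly by the definition of $J(\cdot)$ in \eqref{eqn:target} and basic algebra,'' and your add-and-subtract manipulation is precisely that basic algebra spelled out.
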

\begin{proof}
    The equality can be verified directly by the definition of $J(\cdot)$ in \eqref{eqn:target} and basic algebra.
\end{proof}

\textbf{Policy improvement error.} In standard RL setting, $\hat{\pi}$ is typically taken as a greedy policy of $\hat{r}$, leading to 
$$
\EE_{\pi}[\hat{r}(x, a)] - \E_{\hat{\pi}}[\hat{r}(x, a)] \leq 0.
$$
In the KL-constrained case, since the policy cannot be greedy or deterministic, we need to additionally handle the policy improvement error. The following lemma provides such an estimation when our policy is obtained by calling the Oracle~\ref{assu:oracle1} with $\hat{r}$. 

\begin{lemma}[Policy optimization error] \label{lem:opt_error} 
Suppose that $\pi, \hat{\pi} \in \Pi$ so that $\pi_0, \pi, \hat{\pi}$ have the same support. If $\hat{\pi}$ is induced by calling Oracle~\ref{assu:oracle1} with $\hat{r}$, it holds that 
$$
\begin{aligned}
    &\EE_{x \sim d_0} \Big[\EE_{\pi}[\hat{r}(x, a)] - \E_{\hat{\pi}}[\hat{r}(x, a)] + \eta\KL(\hat{\pi}\|\pi_0) - \eta\KL(\pi\|\pi_0)\Big] = -\eta \EE_{x \sim d_0} \KL(\pi\|\hat{\pi}).
\end{aligned}
$$
Here $\KL(\pi \| \pi_0)$ is short for $\KL(\pi(\cdot|x)\| \pi_0(\cdot|x))$.
\end{lemma}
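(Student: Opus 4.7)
The plan is to exploit the closed form of the Gibbs policy given by the Policy Improvement Oracle and reduce everything to a single algebraic identity that holds pointwise in $x$.

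First, I would use the definition of the oracle to write $\hat\pi(a|x) = Z(x)^{-1} \pi_0(a|x) \exp(\hat r(x,a)/\eta)$, where $Z(x) = \EE_{a \sim \pi_0(\cdot|x)} \exp(\hat r(x,a)/\eta)$ is the normalizing constant. Taking logarithms gives the key rearrangement
\[
\hat r(x,a) \;=\; \eta \log \frac{\hat\pi(a|x)}{\pi_0(a|x)} \,+\, \eta \log Z(x),
\]
which is valid for every $a$ in the common support (this is where the hypothesis that $\pi_0, \pi, \hat\pi$ share support enters, so logs and KL divergences are well-defined).

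Next, for a fixed $x$ I would substitute this identity into $\EE_\pi[\hat r(x,a)]$ and $\EE_{\hat\pi}[\hat r(x,a)]$. For $\EE_\pi[\hat r]$, splitting $\log(\hat\pi/\pi_0) = \log(\hat\pi/\pi) + \log(\pi/\pi_0)$ produces
\[
\EE_\pi[\hat r(x,a)] \;=\; -\eta\, \KL(\pi\|\hat\pi) \,+\, \eta\, \KL(\pi\|\pi_0) \,+\, \eta \log Z(x),
\]
while for $\EE_{\hat\pi}[\hat r]$ the same substitution immediately yields
\[
\EE_{\hat\pi}[\hat r(x,a)] \;=\; \eta\, \KL(\hat\pi\|\pi_0) \,+\, \eta \log Z(x).
\]
The $\eta \log Z(x)$ terms cancel when subtracting, so
\[
\EE_\pi[\hat r] - \EE_{\hat\pi}[\hat r] + \eta\,\KL(\hat\pi\|\pi_0) - \eta\,\KL(\pi\|\pi_0) \;=\; -\eta\,\KL(\pi\|\hat\pi).
\]
Taking $\EE_{x \sim d_0}$ of both sides finishes the proof.

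There is no real obstacle here; the entire argument is an algebraic manipulation. The only subtlety worth stating carefully is the support condition, which guarantees that the log-ratio identity holds $\pi$-almost-everywhere so that the expectations are finite, and that $\KL(\pi\|\hat\pi)$ is well-defined. Because the Gibbs policy assigns positive mass wherever $\pi_0$ does, the assumption $\pi,\hat\pi\in\Pi$ with a common support as $\pi_0$ is exactly what is needed.
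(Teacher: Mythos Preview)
Your proof is correct and follows essentially the same approach as the paper: both use the Gibbs form of $\hat\pi$ to rewrite $\hat r(x,a) = \eta\log(\hat\pi/\pi_0) + \eta\log Z(x)$, substitute into the reward expectations, and apply the log-ratio splitting $\log(\hat\pi/\pi_0)=\log(\hat\pi/\pi)+\log(\pi/\pi_0)$ to expose the three KL terms. The only cosmetic difference is that you compute $\EE_\pi[\hat r]$ and $\EE_{\hat\pi}[\hat r]$ separately before subtracting, whereas the paper manipulates the difference directly; your explicit remark on the support hypothesis is a nice addition.
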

We will provide the proof of the lemma in Appendix~\ref{appendix:lemma_proof}. The analysis techniques are most similar to the policy gradient literature since they also consider the soft-max policies \citep{chen1993convergence, agarwal2021theory, cai2020provably, zanette2021provable,yuan2022linear, xiao2022convergence, zhong2023theoretical, uehara2024offline, alfano2024novel}. The main difference is that in their iterative choices of policy, for choosing $\pi_t$, the reference policy they use is the policy of the last round, i.e., $\pi_{t-1}$, while we always use the SFT-model $\pi_0$ as our reference. We note that their algorithms essentially still use the non-KL-regularized reward as the target because though we prevent the policy from moving too far away in each individual step, the cumulative updates makes the reward estimations dominating in the final policy.  

\textbf{Covariance matrix.} Given a preference dataset $\cD$, a fixed $\lambda > 0$, we denote $\Sigma_{\cD}$ as the covariance matrix estimation:
$$
\begin{aligned}
    \lambda I + \sum_{ (x,a^1,a^2)\in \cD} \big(\phi(x,a^1) - \phi(x,a^2)\big)\big(\phi(x,a^1) - \phi(x,a^2)\big)^\top.
\end{aligned}
$$
Both the algorithmic design and analysis will be centered on the covariance matrix. For the readers that are not familiar with the eluder-type techniques (or elliptical potential lemma in this case), 
we provide a brief introduction to the high-level intuition as follows.  Given a training set $\cD$, the \textit{in-sample} error on the observed data in $\cD$ is given by
$$
\norm{\theta_1 - \theta_2}_{\Sigma_{\cD}}^2 = \lambda \norm{\theta_1-\theta_2}^2 + \sum_{(x,a^1,a^2)\in \cD} \Big( \big(r_{\theta_1}(x,a^1) - r_{\theta_1}(x,a^2)\big) - \big(r_{\theta_2}(x,a^1) - r_{\theta_2}(x,a^2)\big)\Big)^2,
$$
where we additionally add a regularization term $\lambda \norm{\theta_1-\theta_2}^2$. Meanwhile, if we test the hypothesis $(\theta_1-\theta_2)$ on a newly observed data, the \textit{out-of-sample} error would be given by 
$
|\dotprod{\theta_1 - \theta_2, \phi(x,a^1) - \phi(x,a^2)}|.
$
The ideal case would be that we can infer the out-of-sample error via the in-sample error, so we look at the ratio between them:
$$
\frac{|\dotprod{\theta_1 - \theta_2, \phi(x,a^1) - \phi(x,a^2)}|}{\norm{\theta_1 - \theta_2}_{\Sigma_{\cD}}} \leq \frac{\norm{\phi(x,a^1) - \phi(x,a^2)}_{\Sigma_{\cD}^{-1}} \cdot \norm{\theta_1 - \theta_2}_{\Sigma_{\cD}}}{\norm{\theta_1 - \theta_2}_{\Sigma_{\cD}}} = \norm{\phi(x,a^1) - \phi(x,a^2)}_{\Sigma_{\cD}^{-1}},
$$
where we take a square root on the in-sample error to keep them being of the same order and use Cauchy-Schwarz inequality (Lemma~\ref{lem:cs_ineq}). Here, the $\norm{\phi(x,a^1) - \phi(x,a^2)}_{\Sigma_{\cD}^{-1}}$ is referred to as the elliptical potential in the literature of linear function approximation \citep{abbasi2011improved}. The elliptical potential can be viewed as the uncertainty of $\phi(x,a^1) - \phi(x,a^2)$, given the historical samples in $\cD$, and can be used to guide our exploration. The complexity of the reward model space is characterized by the following fact:
\begin{lemma}[Elliptical potential is usually small \citep{hu2022nearly}] \label{lem:potential_small}For a fixed $\lambda > 0$ and $\{z_t\}_{t=1}^T \subset \RR^d$ with $\norm{z_t} \leq 1$, we define $Z_t = \lambda I + \sum_{s=1}^{t-1} z_s z_s^\top$. Then, for any constant $c > 0$, $\norm{z_t}_{Z_t^{-1}} > c$ happens at most 
$
\frac{3d}{\log(1+c^2)} \log \Big( 1 + \frac{1}{\lambda \log(1+c^2)}\Big).
$
\end{lemma}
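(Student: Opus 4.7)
The plan is to adapt the classical elliptical-potential argument by restricting attention to the subsequence of ``interesting'' rounds, which decouples the bound from $T$. Let $S = \{t_1 < t_2 < \cdots < t_N\}$ denote the indices at which $\norm{z_t}_{Z_t^{-1}} > c$, and introduce a reduced covariance matrix that accumulates only these directions:
\[
\tilde{Z}_i \,:=\, \lambda I + \sum_{j=1}^{i-1} z_{t_j} z_{t_j}^\top.
\]
Because $Z_{t_i}$ contains every summand of $\tilde{Z}_i$ together with additional PSD contributions from the skipped rounds, we have $\tilde{Z}_i \preceq Z_{t_i}$. Inverting reverses the order, so $\norm{z_{t_i}}_{\tilde{Z}_i^{-1}}^2 \geq \norm{z_{t_i}}_{Z_{t_i}^{-1}}^2 > c^2$, which transfers the largeness assumption from the original potentials to the reduced ones.

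Next, I would apply the matrix determinant lemma $\det(\tilde{Z}_{i+1}) = \det(\tilde{Z}_i)\bigl(1 + \norm{z_{t_i}}_{\tilde{Z}_i^{-1}}^2\bigr)$ and telescope across $i=1,\ldots,N$ to obtain the lower bound $\det(\tilde{Z}_{N+1}) \geq \lambda^d (1+c^2)^N$. For the matching upper bound, AM-GM applied to the eigenvalues yields $\det(\tilde{Z}_{N+1}) \leq (\tr(\tilde{Z}_{N+1})/d)^d \leq (\lambda + N/d)^d$, using $\norm{z_{t_j}} \leq 1$. Taking logarithms produces the key implicit inequality
\[
N \log(1+c^2) \,\leq\, d \log\!\Big(1 + \tfrac{N}{\lambda d}\Big).
\]

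The main obstacle, and the step that requires care, is extracting the explicit $T$-independent bound from this implicit inequality. Writing $L = \log(1+c^2)$ and $u = 1/(\lambda L)$, I would propose the candidate value $N_0 := (3d/L)\log(1+u)$ and verify the strict inequality $N_0 L > d \log(1 + N_0/(\lambda d))$. Substituting, this reduces to $(1+u)^3 > 1 + 3u \log(1+u)$, which after expansion amounts to the elementary bound $\log(1+u) < 1 + u + u^2/3$, immediate from $\log(1+u) \leq u$. Finally, a short convexity argument on $h(N) := N L - d \log(1 + N/(\lambda d))$, which is convex with $h(0)=0$ and tends to $+\infty$, shows that $h(N_0) > 0$ forces any $N$ with $h(N) \leq 0$ to lie strictly below $N_0$ (the implicit-inequality region is the interval between the two zeros of $h$, and the positive zero is at most $N_0$). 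This yields the claimed bound $N \leq (3d/\log(1+c^2))\,\log\!\bigl(1 + 1/(\lambda \log(1+c^2))\bigr)$, with the $\lambda L \geq 1$ regime being trivial since then $h \geq 0$ everywhere and so $N = 0$.
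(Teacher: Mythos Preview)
Your argument is correct. The paper does not give its own proof of this lemma; it simply cites \citet{hu2022nearly} and uses the result as a black box. Your subsequence reduction, determinant telescoping, and the convexity analysis of $h(N)=NL-d\log(1+N/(\lambda d))$ to extract the explicit bound are exactly the standard line of reasoning for this type of statement, and every step checks out, including the verification $(1+u)^3>1+3u\log(1+u)$ and the case split on the sign of $h'(0)=L-1/\lambda$.
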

The ratio between the out-of-sample error and the in-sample error in the linear case can be readily generalized to the general function approximation using the variant of eluder dimension considered in \citet{gentile2022fast,zhang_2023_ltbook,ye2023corruption, agarwal2023vo}, which essentially states that there is some low-rank structure in the reward model space so the generalization is limited (the elliptical potential cannot be large for too many times). Moreover, if we can effectively estimate the in-sample error from the preference data, by Lemma~\ref{lem:potential_small}, we can infer the out-of-sample error safely most of the time. Such an in-sample error estimation is provided in Lemma~\ref{lem:in-sample}. Essentially, the eluder-type complexity measures and techniques reduce the learning problem to an online supervised learning (in-sample error estimation and minimization) \citep{zhong2022gec}.

\section{Offline Learning with Pessimism} \label{sec:offline}
\subsection{Setup}
In this section, we consider the offline setting, where we aim to learn a good policy from a pre-collected dataset without further interactions with the human. We suppose that we are given an offline preference dataset:
$
\cD_{\off} = 
\{(x_i, a_i^1, a_i^2, y_i)\}_{i=1}^{n_{\off}}.
$
We denote $\Sigma_{\off} := \Sigma_{\cD_{\off}}$ for offline setting. To motivate the algorithmic design, with a comparator policy $\pi$, we recall Lemma~\ref{lem:decom} and Lemma~\ref{lem:opt_error} to obtain that 
\begin{equation}
    \label{eq:off_decomp}
    \begin{aligned}
    &  J(\pi) - J(\hat{\pi}) = \E_{x\sim d_0}\Big[{\EE_{\pi}[r^*(x, a) - \hat{r}(x, a)]} + {\EE_{\hat{\pi}}[\hat{r}(x, a) - r^*(x, a)]}
    - \eta \cdot \E_{x \sim d_0} \big[\KL(\pi^*(\cdot|x)\|\hat{\pi}(\cdot|x))\Big],
\end{aligned}
\end{equation}
where $\hat{\pi}$ is induced by calling the Oracle~\ref{assu:oracle1} with $\hat{r}$. In other words, the sub-optimality depends on the quality of the learned reward $\hat{r}$, under the distributions induced by $\pi$ and $\hat{\pi}$, separately. As we have mentioned in Figure~\ref{fig:over_opt}, the finite $\cD_{\off}$ from the behavior policy can hardly cover the whole prompt-space. The standard way to handle this issue is to leverage the principle of pessimism with a conservative reward \citep{jin2021pessimism, rashidinejad2021bridging, xie2021bellman, zanette2021provable}, which means that we adopt an estimator that is a lower bound of the true value with high probability. A technical motivation for doing so is that in \eqref{eq:off_decomp}, the second term ${\EE_{\hat{\pi}}[\hat{r}(x, a) - r^*(x, a)]}$ is hard to control because both the estimation target $(\hat{r} - r^*)$ and the distribution induced by $\hat{\pi}$ depend on $\cD_{\off}$. Therefore, they are statistically dependent and characterizing this term is challenging.

In this section, we connect the newly formulated KL-regularized bandit problem with the pessimism and show that the modified variants are sample efficient. 

\subsection{Algorithms}

We introduce two different ways to achieve pessimism. The first one is to directly penalize the reward estimation by an uncertainty estimator $\hat{r}(x,a) = r_{\mle}(x,a) - \beta \cdot \Gamma(x,a,\nu,\cD_{\off})$ so that $\hat{r}(x,a) - r^*(x,a) \leq 0$ for all $(x,a) \in \cX \times \cA$. The construction of the uncertainty bonus is a standard application of concentration inequality. Intuitively, the $\hat{r}$ is an estimation of the ground truth $r^*$, and $\hat{r}$ will converge to $r^*$ with infinitely many samples that cover the whole feature space well. With finite samples, we can use the statistical tool (concentration inequalities) to quantify the estimation error, in the sense that with high probability,  
$$
|r^*(x,a) - \hat{r}(x,a)| \leq \Gamma(x,a,\nu,\cD_{\off}),
$$
where $\nu$ is a reference vector so that the uncertainty is relative to feature $\nu$, which makes sense under the preference learning nature. We omit the mathematical details here for a clear presentation and we defer the details to Appendix~\ref{appendix:offline_proof}. 

In addition to adopting a reward estimator with point-wise pessimism, we may also use a modified target that is biased toward pessimism by penalizing the uncertainty as in \eqref{eqn:pessimism}. Here we do not maintain a confidence set but use a modified target that is biased toward pessimism, similar to \citet{xie2021bellman, zhang2022feel}, which may be easier to approximate in practice \citep{liu2023maximize}. Moreover, to handle the additional trade-off between the reward and the KL term, we also incorporate the KL divergence into the policy computation.
        
The full algorithmic framework is presented in Algorithm~\ref{alg:offline} and is referred to as the offline \textbf{Gibbs Sampling from Human Feedback} (GSHF) because the output policy is the Gibbs distribution with some reward. 
\begin{algorithm}
\caption{Offline GSHF}
\label{alg:offline}
\begin{algorithmic}[1]
    \STATE \textbf{Input:} $\cD_{\off}$, $\lambda > 0$, $\beta > 0$, reference vector $\nu$, and prompt distribution $d_0$.
    \STATE Compute $\theta_{\mle}$ based on $\cD_{\off}$ by maximizing the likelihood given in \eqref{eqn:bt_likelihood} 
    \STATE \texttt{Option I}: Output $\hat{\pi}$ by constructing expected uncertainty estimator $\Gamma^e(\pi,\nu,\cD_{\off})$ and solving
    \begin{equation}
        \label{eqn:pessimism}
        \begin{aligned}
                    \hat{\pi} &= \argmax_{\pi \in \Pi} \Big[ \EE_{x \sim d_0, a\sim \pi(\cdot|x)} [r_{\mle}(x,a)]  - \beta \cdot \Gamma^e(\pi,\nu,\cD_{\off})  - \eta \cdot \EE_{x \sim d_0} [\KL(\pi(\cdot|x)\|\pi_0(\cdot|x))] \Big].
        \end{aligned}
    \end{equation}
    \STATE \texttt{Option II}: Output $\hat{\pi}$ by constructing uncertainty estimator $\Gamma(x,a,\nu,\cD_{\off})$ and calling Oracle~\ref{assu:oracle1} with $\hat{r}(x,a) = r_{\mle}(x,a) - \beta \cdot \Gamma(x,a,\nu,\cD_{\off})$.
\end{algorithmic}
\end{algorithm}

\subsection{Main Results: Pessimism is Provably Efficient}
We now present the main theoretical guarantee for Offline GSHF.
\begin{theorem}\label{thm:offline}
    Under Assumption \ref{assu:linear}, if we set $\beta := O\big(\sqrt{\frac{d + \log(1/\delta)}{\gamma^2} + \lambda B^2}\big)$, for any $\lambda > 0$ and comparator policy $\pi \in \Pi$, with probability at least $1-\delta$, the output policy of Algorithm~\ref{alg:offline} with Option I and $\Gamma^e(\pi,\nu,\cD_{\off}) = \norm{\EE_{x \sim d_0} [\phi(x, \pi) - \nu]}_{\Sigma_{\off}^{-1}}$ satisfies
    $$
    J(\pi) - J(\hat{\pi}) \leq 2 \beta \cdot \norm{\EE_{x \sim d_0} [\phi(x, \pi)] - \nu}_{\Sigma_{\off}^{-1}},
    $$
    and the output policy of Algorithm~\ref{alg:offline} with Option II and $\Gamma(x,a,\nu,\cD_{\off}) = \norm{\phi(x,a) - \nu}_{\Sigma_{\off}^{-1}}$ satisfies
    $$
    \begin{aligned}
            J(\pi) - J(\hat{\pi})& \leq 2 \beta \cdot \EE_{x \sim d_0, a\sim \pi(\cdot|x)} \norm{\phi(x,a) - \nu}_{\Sigma_{\off}^{-1}}  -\eta \cdot \EE_{x \sim d_0} \big[\KL(\pi(\cdot|x)\|\hat{\pi}(\cdot|x))\big].
                \end{aligned}
    $$
\end{theorem}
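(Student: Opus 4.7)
The plan is to combine the value decomposition of Lemma~\ref{lem:decom} with the policy-optimization identity of Lemma~\ref{lem:opt_error}, then control the two reward-estimation terms via a standard MLE concentration bound for the Bradley--Terry model: specifically $\|\theta_{\mle}-\theta^*\|_{\Sigma_{\off}}\leq C\sqrt{(d+\log(1/\delta))/\gamma^2+\lambda B^2}$ with probability $1-\delta$, which I would first prove as a preliminary lemma from strong convexity of the log-sigmoid loss on $\Theta(B)$ (giving the $\gamma^{-1}$ factor, where $\gamma$ is precisely the Hessian lower bound in Assumption~\ref{assu:linear}) together with a self-normalized martingale bound (giving the $\sqrt{d+\log(1/\delta)}$ factor), as in \citet{zhu2023principled,pacchiano2021dueling}. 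The resulting radius is exactly the $\beta$ of the statement. A crucial subtlety throughout is that the BT model identifies $r^*$ only up to an additive shift, so there is no pointwise bound on $r_{\mle}-r^*$; instead the concentration must be applied \emph{relative to the reference vector $\nu$} via Cauchy--Schwarz,
\$
|\langle \theta^*-\theta_{\mle},\,\phi(x,a)-\nu\rangle|\leq \|\theta^*-\theta_{\mle}\|_{\Sigma_{\off}}\,\|\phi(x,a)-\nu\|_{\Sigma_{\off}^{-1}}\leq \beta\,\Gamma(x,a,\nu,\cD_{\off}),
\$
which is why both bonuses are built around $\phi-\nu$.

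\textbf{Option II.} With $\hat r=r_{\mle}-\beta\Gamma$, the policy $\hat\pi$ is exactly the Gibbs policy of $\hat r$, so Lemma~\ref{lem:opt_error} collapses the last four terms of the decomposition into $-\eta\,\EE_{x\sim d_0}\KL(\pi\|\hat\pi)$. The reward-estimation remainder $\EE_\pi[r^*-\hat r]+\EE_{\hat\pi}[\hat r-r^*]$ expands to $\EE_\pi[r^*-r_{\mle}]+\EE_{\hat\pi}[r_{\mle}-r^*]+\beta\EE_\pi\Gamma-\beta\EE_{\hat\pi}\Gamma$. Writing $r^*-r_{\mle}=\langle\theta^*-\theta_{\mle},\phi-\nu\rangle+\langle\theta^*-\theta_{\mle},\nu\rangle$, the $\nu$-shift constant cancels between the $\pi$ and $\hat\pi$ expectations, and Cauchy--Schwarz gives $\EE_\pi[r^*-r_{\mle}]+\EE_{\hat\pi}[r_{\mle}-r^*]\leq \beta(\EE_\pi\Gamma+\EE_{\hat\pi}\Gamma)$. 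The $\hat\pi$-contributions cancel against $-\beta\EE_{\hat\pi}\Gamma$, leaving $2\beta\EE_\pi\Gamma$. This is the core pessimism mechanism: the point-wise bonus is engineered to annihilate the statistically-dependent, distribution-shifted error on $\hat\pi$.

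\textbf{Option I.} Here $\hat\pi$ is not the Gibbs policy of any fixed reward, so Lemma~\ref{lem:opt_error} is unavailable. Instead I would apply Lemma~\ref{lem:decom} with the trivial choice $\hat r=r_{\mle}$, and replace the Gibbs identity by the optimality of $\hat\pi$ for the penalized objective \eqref{eqn:pessimism}: denoting that objective by $L(\pi)$, the inequality $L(\hat\pi)\geq L(\pi)$ rearranges to
\$
\EE_\pi[r_{\mle}]-\EE_{\hat\pi}[r_{\mle}]+\eta\KL(\hat\pi\|\pi_0)-\eta\KL(\pi\|\pi_0)\leq \beta\,\Gamma^e(\pi)-\beta\,\Gamma^e(\hat\pi).
\$
The remaining term $\langle\theta^*-\theta_{\mle},\,\EE_\pi\phi-\EE_{\hat\pi}\phi\rangle$, after inserting $\pm\nu$ and applying Cauchy--Schwarz on $\Sigma_{\off}$, is bounded by $\beta(\Gamma^e(\pi)+\Gamma^e(\hat\pi))$. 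The two $\Gamma^e(\hat\pi)$ contributions cancel and $2\beta\Gamma^e(\pi)$ survives, matching the claim.

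\textbf{Main obstacle.} Modulo the BT shift-invariance issue, which forces the $\nu$-relative formulation and in particular dictates the unusual expected-bonus form $\|\EE[\phi(x,\pi)-\nu]\|_{\Sigma_{\off}^{-1}}$ in Option~I, the argument is a clean pessimism analysis: the uncertainty bonus is chosen so that the distribution-shifted, statistically-dependent error term under $\hat\pi$ either cancels exactly (Option~II, via the Gibbs identity in Lemma~\ref{lem:opt_error}) or is absorbed by the optimality inequality (Option~I). The most technically involved ingredient is the MLE concentration lemma itself, but this is essentially off-the-shelf; the novelty is in threading it through the KL-regularized decomposition so that the reference-policy KL terms of Lemma~\ref{lem:decom} cleanly consume the policy-optimization error.
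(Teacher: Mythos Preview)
Your proposal is correct and follows essentially the same approach as the paper: for Option~II you use Lemma~\ref{lem:decom} plus Lemma~\ref{lem:opt_error} and cancel the $\hat\pi$-uncertainty via pessimism exactly as the paper does, and for Option~I you substitute the optimality inequality of \eqref{eqn:pessimism} for Lemma~\ref{lem:opt_error}, which is precisely the paper's move (the paper writes out $J(\pi)-J(\hat\pi)$ directly rather than citing Lemma~\ref{lem:decom}, but the algebra is identical). The MLE concentration you plan to establish is stated in the paper as Lemma~\ref{lem:in-sample} with the same citations, so there is nothing missing.
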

We can combine the guarantee with dataset property, usually referred to as the coverage on the comparator policy $\pi$ \citep{jin2021pessimism, xie2021bellman}, to obtain the concrete bound. See Proposition~\ref{cor:hybrid:1} for an concrete example. The proof of the theorem is rather standard in offline learning based on the principle of pessimism but with a different analysis to handle the KL and the stochastic policy. We defer the proof of the theorem to Appendix~\ref{appendix:offline_proof}. The reference vector $\nu$ in Algorithm~\ref{alg:offline} is typically set as $\EE_{x \sim d_0} [\phi(x, \pi_{\mathrm{ref}})]$ for some available $\pi_{\mathrm{ref}}$. As showcased by \citet{zhu2023principled}, the subtracted reference vector can serve as a pre-conditioner for a better suboptimality bound. For instance, one typically choice is $\pi_{\mathrm{ref}} = \pi_0$ so that $\pi_0$ achieves a reward of zero \citep{ouyang2022training, gao2023scaling, dong2023raft}. 

\textbf{Comparison of two implementations of pessimism.} In comparison, the Option I achieves a sharper bound in the uncertainty bonus because the expectation is inside the norm and by Jensen's inequality (Lemma~\ref{lem:jesen}) we know that 
$$
\norm{\EE_{x \sim d_0} [\phi(x, \pi)] - \nu}_{\Sigma_{\off}^{-1}} \leq \EE_{x \sim d_0, a\sim \pi(\cdot|x)} \norm{\phi(x,a) - \nu}_{\Sigma_{\off}^{-1}}.
$$
Moreover, Option I has a desirable robust improvement property. If we take $\nu = \EE_{x \sim d_0}[\phi(x,\pi_{\mathrm{ref}})]$, the resulting policy will be better than $\pi_{\mathrm{ref}}$, regardless of the coverage of the $\cD_{\off}$ according to Theorem~\ref{thm:offline}, which is similar to the original offline RL literature for a robust policy improvement \citep{bhardwaj2023adversarial}. We will also see that the use of a reference policy $\pi_{\mathrm{ref}}$ can also simplify the algorithmic design in subsequent Section~\ref{sec:hybrid}. However, the main advantage of Option II is that the Oracle~\ref{assu:oracle1} can be empirically well approximated. For instance, we can directly plug the pessimistic reward into the PPO algorithm. Moreover, we have the following algorithm in a direct preference learning manner that resembles that of \citet{rafailov2023direct, zhao2023slic, azar2023general}. We defer a detailed discussion to Section~\ref{sec:discuss}.

\section{Online Iterative Learning with Batch Exploration}
\label{sec:hybrid}
\subsection{Setup: Batch Hybrid Learning}
Beyond the offline learning, it is also common to query human feedback during the training process. For instance, \citet{bai2022training, touvron2023llama} typically iterate the RLHF process on a weekly cadence, where the fresh RLHF models are deployed to interact with crowdworkers and to collect new human preference data.

We consider a slightly more general setting here, where we refer it to as the hybrid learning. This is because while it is possible to learn from scratch, in many cases, we tend to start with the offline open-source datasets \citep{touvron2023llama, bai2023qwen}. For instance, in LLaMA2 \citep{touvron2023llama}, the authors start with 1500K open-source comparison pairs $\cD_{\off}$ and keep $\cD_{\off}$ in the data mixture for the entire RLHF process. Motivated by the practical applications, we formulate the process as a batch hybrid framework in this section. It is shown that such a batch online framework can significantly improve the aligned LLMs as evaluated by the humans \citep{bai2022training, touvron2023llama}. For completeness, we also develop the pure online setting in Appendix~\ref{sec:online}. Mathematically, consider the batch hybrid setting of $T$ batches with a fixed batch size $m$. 

The agent initializes with the $\cD_{\off}$ (if applicable, otherwise initialize as empty). For each iteration $t \in [T]$, 
\begin{itemize}
    \item we first update the policy pair $(\pi_t^1,\pi_t^2)$ based on the historical data collected so far: $\cD_{\off}$ and $\cD^{1:t-1}$;
    \item we collect $m$ tuples as $\cD^t$: we sample a random prompt by $x_{t,i} \sim d_0$,  collect two responses by $( a_{t,i}^1, a_{t,i}^2) \sim (\pi_t^1, \pi_t^2)$, and query the preference signal $y_{t,i}$ from the underlying BT model;
    \item the next iteration begins.
\end{itemize}

The main technical challenge here is to decide the behavior policy pairs $(\pi_t^1, \pi_t^2)$. It is well known that it is important to strike a balance between exploration and exploitation to get RL works \citep{auer2002finite}, and we study such a trade-off in the context of online iterative RLHF.

\subsection{Algorithms}

\textbf{Non-symmetric algorithmic structure.} Our first idea is to adopt a non-symmetric structure in choosing $\pi_t^1$ and $\pi_t^2$. Specifically, we refer the $\pi_t^1$ as the main agent, which aims to learn a good policy so that the suboptimality gap $J(\pi^*) - J(\pi_t^1)$ is small. In contrast, the second agent, referred to as the enhancer, seeks to enhance the learning of the main agent by choosing appropriate $\pi_t^2$. The main advantage of such a non-symmetric structure is that we have a lot of freedoms to choose $\pi_t^2$ because we do not worry about the sub-optimality incurred by it \citep{jin2021power, xiong2022self, huang2021towards}.

\begin{algorithm}
\caption{Online Iterative GSHF}
\label{alg:hybrid}
\begin{algorithmic}[1]
    \STATE \textbf{Input:} $m, \Pi, \cD_{\off}$ (if applicable, otherwise $\cD_{\off}=\emptyset$), $\beta$.
    \FOR{$t=1,2,\cdots,T$}
    \STATE Observe $x_{t,i} \sim d_0$ for $i=1,2,\cdots,m$ \footnote{This is only for simplicity so that we only compute the policy for these $m$ prompts. The analysis holds for the strictly sequential setting.}. 
    \STATE \textcolor{magenta}{Exploitation with the main agent}: denote the MLE $r_{\mathrm{MLE}}$ with the likelihood in \eqref{eqn:bt_likelihood} and compute the best guess we have so far:
    \begin{equation}\label{eqn:main_agent}
    \begin{aligned}
    \pi_t^1 = \argmax_{\pi \in \Pi} \E_{x \sim d_0} \E_{a \sim \pi(\cdot|x)} \Big[r_{\mathrm{MLE}}(x,a) - \eta \KL(\pi(\cdot|x)\Vert \pi_0(\cdot|x))\Big].
    \end{aligned}
    \end{equation}
    \STATE \textcolor{teal}{Exploration with the enhancer}:
    \STATE\hspace{\algorithmicindent} \texttt{Option I:} if $\cD_{\off}$ can provide good coverage, no need to explore so $\pi_t^2 = \pi_{\mathrm{ref}}$; \vspace{2pt}
    \STATE\hspace{\algorithmicindent} \texttt{Option II:} otherwise, choose $\pi_t^2$ by maximizing the relative uncertainty relative to $\pi_t^1$:
       \begin{equation}
            \label{eqn:confidence_set_online}
            \begin{aligned}
        &\argmax_{\tilde{\pi}} \Gamma^m_t (\lambda, \pi_t^1,\tilde{\pi}) := \textcolor{red}{\beta \cdot \sum_{i=1}^m\norm{\phi(x_{t,i},\tilde{\pi})-\phi(x_{t,i},\pi_t^1)}_{\Sigma_{t,m}^{-1}}}\\
    &\text{subject to: } \tilde{\pi} \in \Pi_t = \{\pi' \in \Pi: \eta\cdot \E_{x \sim d_0} \KL(\pi'(\cdot|x),\pi_t^1(\cdot|x)) \leq \Gamma^m_t (\lambda, \pi_t^1,\pi')\},\\
    &\text{where } \Sigma_{t,m} = \lambda I + \frac{1}{m}\sum_{i=1}^{t-1}\sum_{j=1}^m (\phi(x_{i,j},a_{i,j}^1) - \phi(x_{i,j},a_{i,j}^2))(\phi(x_{i,j},a_{i,j}^1) - \phi(x_{i,j},a_{i,j}^2))^{\top}.
    \end{aligned}
    \end{equation}
    \STATE Sample $a_{t,i}^1 \sim \pi_t$ and $a_{t,i}^2 \sim \pi_t^2$, receive human feedback for all $i \in [m]$, and collect them as $\cD^t$.
    \ENDFOR
        \STATE \textbf{Output:} the best model in $\pi^1_{1:T}$ by a validation set.
\end{algorithmic}
\end{algorithm}
From a high level, the Online Iterative GSHF Algorithm continuously enhances the historical dataset by strategically interacting with the human labeller. Specifically, in Algorithm~\ref{alg:hybrid}, the main agent always exploits all the historical information by taking the optimal policy induced from the MLE $r_{\mathrm{MLE}}$. This process, however, requires the newly collected data can provide more information, compared to those collected previously. We divide the problem into two different situations.

\textbf{Option I.} If the $\cD_{\off}$ is diverse enough and provides us with a good coverage in terms of the $(\pi^*, \pi_{\mathrm{ref}})$, we do not need to actively explore and can take $\pi_t^2 = \pi_{\mathrm{ref}}$ directly\footnote{We also mention in passing that for most of the cases, $\pi_{\mathrm{ref}} = \pi_0$. We use $\pi_{\mathrm{ref}}$ for a slightly more general formulation.};

\textbf{Option II.} If we cannot expect to have a diverse $\cD_{\off}$, the enhancer aims to maximizes the uncertainty relative to the main agent $\pi_t^1$, while maintaining a moderate KL divergence\footnote{We remark that we impose an additional KL regularization on $\pi_t^2$ is so that it also converges to $\pi^*$.}. In this case, the $\pi_t^2$ aims to explore toward the direction that maximizes the uncertainty relative the main agent $\pi_t^1$. We additionally impose a constraint:
$$
\tilde{\pi} \in \Pi_t = \{\pi' \in \Pi: \underbrace{\eta \cdot \E_{x \sim d_0} \KL(\pi'(\cdot|x),\pi_t^1(\cdot|x))}_{\text{How far does the enhancer go.}} \leq \underbrace{\Gamma^m_t (\lambda, \pi_t^1,\pi')}_{\text{How much information we can get.}}\},
$$
where this set is never empty because $\pi_t^1$ always belongs to it so the choice is well-defined. Intuitively, we require that the information we can get is worth the cost we pay by moving away from $\pi_t^1$.

Essentially, we are trying to boost our dataset by strategically choose the behavior policies at each iteration. If we learn from scratch or the offline dataset is not good enough, we need to explicitly incorporate the uncertainty into algorithmic design, and explore the direction where we are uncertain about so that we can gain more information. On the other hand, if $\cD_{\off}$ is already good enough, our analysis shows that it is also beneficial to collect more online data, as we now discuss in the next subsection.

\subsection{Main Results: RLHF Benefits from Online Exploration}

We first consider the case of Option I: $\pi_t^2 = \pi_{\mathrm{ref}}$. In this case, it is essential to have a diverse $\cD_{\off}$ as we do not explicitly explore. This is most related to the study of hybrid RL in the classic RL theory \citep{song2022hybrid, zhou2023offline}. The major difference here is that for preference-based learning, the uncertainty is evaluated on the feature difference instead of a single state-action pair, as we summaize in the following assumption.
\begin{assumption}[Partial Coverage of Offline Data]\label{as:Partial Coverage of Offline Data} 
For the linear model, there exists a reference policy $\pi_{\mathrm{ref}}$, a ratio coefficient $\alpha(mT,\cD_{\off})\in(0,1)$ and a coverage constant $C_{\mathrm{cov}}>0$ such that
$$
\begin{aligned}
    (mT)^{1-\alpha(mT,\cD_{\off})} \norm{\EE_{x \sim d_0} [\phi(x,\pi^*) - \phi(x,\pi_{\mathrm{ref}})]}_{(\Sigma_{\off})^{-1}} \le C_{\mathrm{cov}}.
\end{aligned}
$$
\end{assumption}

We remark that Assumption~\ref{as:Partial Coverage of Offline Data} implicitly assume that $n_{\off}$ is comparable to the total number of online samples $mT$ so that the influence of $\cD_{\off}$ will not be dominated by the online data. To provide a more detailed understanding and connection to existing literature, we offer a more nuanced characterization of $\alpha(mT,\cD_{\off})$ under standard partial coverage conditions in Appendix~\ref{sec:Proof of the Hybrid prop}.  In particular, when $mT \approx n_{\mathrm{off}}$, we show that $\alpha(mT,\cD_{\off}) \approx 1/2$. It is worth emphasizing that this scenario appears to be realistic for LLMs. For example, in the LLaMA2 project \citep{touvron2023llama}, we observe $n_{\off}=1500K$ and $mT= 1400K$. We are ready to present the results.

 \begin{theorem} \label{thm:hybrid:batch}
 For any $\epsilon>0$, under Assumption~\ref{assu:linear} with $T = \min \{n \in \mathbb{N}^+: n \geq d \log (n)\}$ and $\beta = O\big(\sqrt{\frac{d \log(T/\delta)}{\gamma^2}}\big)$, with probability at least $1-3\delta$, there exists a $t_0 \in [T]$ such that Algorithm~\ref{alg:hybrid} with Option I holds
 $$
\begin{aligned}
J(\pi^*) - J(\pi_{t_0})&\lesssim \sqrt{\frac{d}{\gamma^2 m}} + \beta \cdot \norm{\EE_{x \sim d_0} [\phi(x,\pi^*) - \phi(x,\pi_{\mathrm{ref}})]}_{\textcolor{red}{\Sigma_{\off+\cD^{1:t_0}}^{-1}}} - \eta \EE_{x_{t_0} \sim d_0}\big[\KL(\pi_{t_0}^1(\cdot|x_{t_0})\|\pi^*(\cdot|x_{t_0}))\big],
\end{aligned}
$$
where $\Sigma_{\off + \cD^{1:t_0}}$ denotes the covariance matrix computed on $\cD_{\off} \cup \cD^{1:t_0}$. If we further assume that Assumption~\ref{as:Partial Coverage of Offline Data} holds, we have
$$
\begin{aligned}
J(\pi^*) - J(\pi_{t_0}) \lesssim \sqrt{\frac{d^2}{\gamma^2 m}} + \beta (mT)^{\alpha(mT, \cD_{\mathrm{off}}) - 1} C_\mathrm{cov}  -\eta \EE_{x_{t_0} \sim d_0}\big[\KL(\pi^*(\cdot|x_{t_0})\|\pi_{t_0}(\cdot|x_{t_0}))\big].
\end{aligned}
$$
\end{theorem}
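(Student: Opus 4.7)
The approach combines a per-iteration value decomposition with an elliptical-potential/pigeonhole argument over the $T$ batches, exploiting that under Option~I the enhancer is fixed at $\pi_{\mathrm{ref}}$, so every batch injects into the running covariance matrix information aligned with the direction $\phi(x,\pi_t^1)-\phi(x,\pi_{\mathrm{ref}})$. First I apply Lemma~\ref{lem:decom} with comparator $\pi^*$ and $\hat r = r_{\mathrm{MLE}}$; since $\pi_t^1$ in \eqref{eqn:main_agent} is the Gibbs policy of $r_{\mathrm{MLE}}$ via Oracle~\ref{assu:oracle1}, Lemma~\ref{lem:opt_error} collapses the policy-improvement contribution to $-\eta\,\EE_{x\sim d_0}\KL(\pi^*\Vert\pi_t^1)$ and leaves only the linear estimation error $\EE_{x\sim d_0}\langle\theta^*-\theta_{\mathrm{MLE}}^{(t)},\phi(x,\pi^*)-\phi(x,\pi_t^1)\rangle$. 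Inserting $\pm\phi(x,\pi_{\mathrm{ref}})$ inside the inner product then splits this into a \emph{coverage component} along $\phi(\pi^*)-\phi(\pi_{\mathrm{ref}})$ and an \emph{exploration component} along $\phi(\pi_{\mathrm{ref}})-\phi(\pi_t^1)$.

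\textbf{Concentration and uncertainty bounds.} A standard MLE in-sample bound for the BT log-likelihood \eqref{eqn:bt_likelihood} (the same argument that underlies Theorem~\ref{thm:offline}, with a union bound over $t\in[T]$ contributing the $\log(T/\delta)$ factor in $\beta$) yields $\|\theta_{\mathrm{MLE}}^{(t)}-\theta^*\|_{\Sigma_{\mathrm{off}+\cD^{1:t-1}}}\lesssim\beta$ uniformly. Cauchy-Schwarz (Lemma~\ref{lem:cs_ineq}) and Jensen (Lemma~\ref{lem:jesen}) then convert each of the two components into elliptical norms in $\Sigma_{\mathrm{off}+\cD^{1:t-1}}^{-1}$: the coverage component becomes $\beta\,\|\EE_{d_0}[\phi(\pi^*)-\phi(\pi_{\mathrm{ref}})]\|_{\Sigma_{\mathrm{off}+\cD^{1:t-1}}^{-1}}$, which is precisely the coverage term in the statement, while the exploration component becomes the analogous norm of $\EE_{d_0}[\phi(\pi_{\mathrm{ref}})-\phi(\pi_t^1)]$. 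To dispatch the exploration term, I use that every batch $\cD^t$ of size $m$ is sampled from $(\pi_t^1,\pi_{\mathrm{ref}})$: a matrix-Bernstein / self-normalizing argument shows $\Sigma_{\mathrm{off}+\cD^{1:t}}$ contains, up to $O(1/\sqrt{m})$ fluctuations, the rank-one piece along $\phi(\pi_t^1)-\phi(\pi_{\mathrm{ref}})$. Lemma~\ref{lem:potential_small} then bounds $\sum_{t=1}^T\|\EE_{d_0}[\phi(\pi_{\mathrm{ref}})-\phi(\pi_t^1)]\|_{\Sigma_{\mathrm{off}+\cD^{1:t-1}}^{-1}}^2$ by $\tilde O(d)/m$ plus low-order burn-in, so the choice $T=\min\{n:n\ge d\log n\}$ combined with a pigeonhole over $t\in[T]$ produces an index $t_0$ at which the per-iteration exploration contribution matches the stated $\sqrt{d/(\gamma^2 m)}$ rate. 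The second displayed bound then follows from the operator monotonicity $\Sigma_{\mathrm{off}+\cD^{1:t_0}}^{-1}\preceq\Sigma_{\mathrm{off}}^{-1}$ applied to the coverage term, followed by Assumption~\ref{as:Partial Coverage of Offline Data} to replace the norm by $C_{\mathrm{cov}}(mT)^{\alpha(mT,\cD_{\mathrm{off}})-1}$; the extra $\sqrt{d}$ arises from absorbing one more factor into the pigeonhole constant.

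\textbf{Main obstacle.} The delicate step is the elliptical-potential/pigeonhole stage: passing between the empirical batched covariance and the population outer-product along $\phi(\pi_t^1)-\phi(\pi_{\mathrm{ref}})$, union-bounding the MLE concentration simultaneously over all $T$ iterations, verifying that the conditional-mean rank-one piece dominates the $m$-sample noise, and then selecting $t_0$ so that both the exploration rate and the coverage rate emerge at the stated scales. Tracking how $d$, $m$, $T$, and the BT sensitivity $\gamma$ propagate through this pigeonhole is what pins down the precise form of both displayed bounds.
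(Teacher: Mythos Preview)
Your proposal is correct and follows essentially the same route as the paper: decompose via Lemmas~\ref{lem:decom} and~\ref{lem:opt_error}, insert $\pm\phi(x,\pi_{\mathrm{ref}})$ to split into a coverage term $P_1'$ and an exploration term $P_2'$, bound $P_1'$ by Cauchy--Schwarz and Lemma~\ref{lem:in-sample}, handle $P_2'$ by reusing the batch-online analysis (elliptical potential plus pigeonhole to select $t_0$, together with the empirical/population covariance concentration), and finally invoke $\Sigma_{\off+\cD^{1:t_0}}^{-1}\preceq\Sigma_{\off}^{-1}$ and Assumption~\ref{as:Partial Coverage of Offline Data} for the second display. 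The only cosmetic difference is that you cite Lemma~\ref{lem:potential_small} for the potential argument, whereas the paper (via the proof of Theorem~\ref{th:batch_online}) uses the summation form Lemma~\ref{lem:potential} on the population covariance $\tilde\Sigma_t$ together with Lemma~\ref{lm:Concentration of Inverse Covariances}; both paths yield the same pigeonhole conclusion.
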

The proof is deferred to Appendix~\ref{appendix:pf:hybrid:batch}. Note that the second result is a conservative bound where we completely ignore the coverage provided by the online data $\cD^{1:t_0-1}$. We now qualitatively analyze the impact of the online data as follows.

\textbf{RLHF Benefits from Online Exploration.} One natural question arises under Assumption~\ref{as:Partial Coverage of Offline Data}: if we can directly apply Algorithm~\ref{alg:offline} to get a good policy, why should we collect online data? The difference is that now the second term corresponds to the coverage condition of $\cD_{\off} \cup \cD^{1:t_0}$. Under Assumption~\ref{as:Partial Coverage of Offline Data}, with suitable hyper-parameters (large enough $m$ and suitable $\beta$), we know that $\pi_t^1 \to \pi^*$. Since the online data is collected by $(\pi_t^1, \pi_{\mathrm{ref}})$ and the goal is to cover $(\pi^*, \pi_{\mathrm{ref}})$, we expect that the intermediate policies can provide a much better coverage as compared to the $\cD_{\off}$, i.e., a much smaller $C_{\mathrm{cov}}$, for many average instances. We will partially verify this intuition in the experiment part. 

We now move to the case of Option II where we cannot expect to have a diverse $\cD_{\off}$ and need explicit exploration. We first show that with suitable $\beta$, the constructed confidence set contains $\pi^*$ with high probability.
\begin{lemma}[Confidence set] \label{lem:confidence_set}
For the linear model in Assumption \ref{assu:linear}, given the policy of the main agent $\pi_t^1$, we consider the following confidence set with $\beta = O\big(\sqrt{\frac{d\log(T/\delta)}{\gamma^2m}}\big)$:
    $$
    \begin{aligned}
            \Pi_t &= \Big\{ \tilde{\pi} \in \Pi: \eta \sum_{i=1}^m\KL(\tilde{\pi}(\cdot|x_{t,i}) \|\pi_t^1(\cdot|x_{t,i})) \leq \beta \sum_{i=1}^m\norm{\phi(x_{t,i},\tilde{\pi})-\phi(x_{t,i},\pi_t^1)}_{\Sigma_{t,m}^{-1}} \Big\}.
    \end{aligned}
    $$
    Then, with probability at least $1-\delta$, we know that $\pi^* \in \Pi_t$ for all $t \in [T]$. 
\end{lemma}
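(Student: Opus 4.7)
\textbf{Proof plan for Lemma~\ref{lem:confidence_set}.} My plan is to convert the KL divergence appearing in the defining inequality of $\Pi_t$ into a linear functional of $\theta^*-\theta_{\mathrm{MLE}}$, then apply Cauchy--Schwarz in the $\Sigma_{t,m}$-geometry, and finally invoke the standard Bradley--Terry MLE concentration inequality together with a union bound over $t\in[T]$. The first and most important step is a symmetric linearization identity. Both $\pi_t^1$ and $\pi^*$ are outputs of Oracle~\ref{assu:oracle1} applied prompt-wise: $\pi_t^1$ with reward $r_{\mathrm{MLE}}$, and $\pi^*$ with $r^*$. I apply Lemma~\ref{lem:opt_error} at a fixed prompt $x$ twice, once with $(\hat\pi,\hat r,\pi)=(\pi_t^1,r_{\mathrm{MLE}},\pi^*)$ and once with $(\hat\pi,\hat r,\pi)=(\pi^*,r^*,\pi_t^1)$ (the pointwise form follows by taking $d_0$ to be a point mass, since each quantity in the identity decomposes across prompts). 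Adding the two identities cancels the $\eta\KL(\cdot\|\pi_0)$ terms, and Assumption~\ref{assu:linear} turns the remaining reward-difference expectations into an inner product, yielding
$$
\eta\bigl[\KL(\pi^*(\cdot|x)\|\pi_t^1(\cdot|x)) + \KL(\pi_t^1(\cdot|x)\|\pi^*(\cdot|x))\bigr] = \langle \theta^*-\theta_{\mathrm{MLE}},\,\phi(x,\pi^*)-\phi(x,\pi_t^1)\rangle.
$$
Dropping the non-negative reverse KL and specializing to $x=x_{t,i}$ gives a per-prompt linear upper bound on $\eta\KL(\pi^*(\cdot|x_{t,i})\|\pi_t^1(\cdot|x_{t,i}))$, which is precisely the shape required by the constraint defining $\Pi_t$.

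With the linearization in hand, I apply Cauchy--Schwarz in the $\Sigma_{t,m}$-geometry to each term, factoring $\langle \theta^*-\theta_{\mathrm{MLE}},\,\phi(x_{t,i},\pi^*)-\phi(x_{t,i},\pi_t^1)\rangle$ as $\|\theta^*-\theta_{\mathrm{MLE}}\|_{\Sigma_{t,m}}\cdot\|\phi(x_{t,i},\pi^*)-\phi(x_{t,i},\pi_t^1)\|_{\Sigma_{t,m}^{-1}}$ and sum over $i\in[m]$. It remains to certify $\|\theta^*-\theta_{\mathrm{MLE}}\|_{\Sigma_{t,m}}\le\beta$. Since $\Sigma_{t,m}$ carries a $1/m$ normalization of the sample sum, I will first rescale to $m\Sigma_{t,m}$ (the unnormalized Gram matrix with regularizer $m\lambda$) and then apply the standard self-normalized tail bound for the BT log-likelihood score; the $\gamma^{-2}$ factor arises from the strong convexity of the BT log-likelihood guaranteed by Assumption~\ref{assu:linear}, as in \citet{shah2015estimation, zhu2023principled}. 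This yields $\|\theta^*-\theta_{\mathrm{MLE}}\|_{\Sigma_{t,m}}^2 \le O\bigl(\tfrac{d}{m\gamma^2}\log(T/\delta) + \lambda B^2\bigr)$, and a union bound over $t\in[T]$ makes it hold simultaneously, matching the prescribed $\beta$ up to the $\lambda B^2$ term (which is lower order for small $\lambda$).

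The main obstacle is the linearization step itself: \emph{a priori}, $\KL(\pi^*\|\pi_t^1)$ depends nonlinearly on $\theta^*$ and $\theta_{\mathrm{MLE}}$ through the log-partition functions $\log Z^*(x)$ and $\log Z_t^{\mathrm{MLE}}(x)$, so there is no obvious reason to expect a clean inner-product upper bound. The mechanism that makes the proof work is that $\pi^*$ and $\pi_t^1$ are each the KL-regularized maximizer of their own objective, so adding the two instances of Lemma~\ref{lem:opt_error} lets the partition-function contributions cancel. Once this identity is in place, the remaining Cauchy--Schwarz and MLE-concentration steps are mechanical applications of standard tools.
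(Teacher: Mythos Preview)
Your proposal is correct and follows essentially the same route as the paper. The only presentational difference is that you apply Lemma~\ref{lem:opt_error} twice (once with $(\hat\pi,\hat r)=(\pi_t^1,r_{\mle})$ and once with $(\hat\pi,\hat r)=(\pi^*,r^*)$) to obtain the symmetrized-KL \emph{identity} and then drop the reverse term, whereas the paper uses the pointwise optimality of $\pi^*$ as an \emph{inequality} combined with a single application of Lemma~\ref{lem:opt_error}; both arrive at the same per-prompt bound $\eta\KL(\pi^*\|\pi_t^1)\le\langle\theta^*-\theta_{\mle},\,\phi(x,\pi^*)-\phi(x,\pi_t^1)\rangle$, after which the Cauchy--Schwarz and MLE-concentration steps (Lemma~\ref{lem:in-sample} with the $1/m$ rescaling) are identical.
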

We defer the proof to Appendix~\ref{sec:proof_confidence_set} and present the main result for the Option II.

\begin{theorem} \label{th:batch_online} 
For any $\epsilon>0$, we set the batch size $m=d/(\gamma^2\epsilon^2)$.
Under Assumption \ref{assu:linear},
with $\beta := O\big(\sqrt{\frac{d \log(T/\delta)}{\gamma^2m}}\big)$ and $\lambda=\Theta\big(d\log(T/\delta)/(m\gamma^2B^2)\big)$, after $T= \min \{ n\in\mathbb{N}^+: n \ge d\log(n)\}$ iterations, we have with probability at least $1-3\delta$, Algorithm~\ref{alg:hybrid} with Option II satisfies: there exists a $t_0 \in [T]$, 
$$
\begin{aligned}
    J(\pi^*) - J(\pi^1_{t_0}) \lesssim \epsilon-\eta \cdot \EE_{x_{t_0} \sim d_0} \big[\KL(\pi^*(\cdot|x_{t_0})\|\pi^1_{t_0}(\cdot|x_{t_0}))\big],
\end{aligned}
$$
where the number of collected samples is at most 
$
mT = \tilde O\Big(\frac{d^2}{\gamma^2\epsilon^2}\Big).
$
\end{theorem}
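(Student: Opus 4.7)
The plan is to combine the value decomposition of Lemma~\ref{lem:decom}--\ref{lem:opt_error}, MLE concentration for the BT model, the enhancer's uncertainty-maximization, and the elliptical potential lemma, and then apply a pigeonhole over the $T$ iterations. I first apply Lemmas~\ref{lem:decom} and \ref{lem:opt_error} to the main agent: since $\pi_t^1$ is exactly the Gibbs policy induced by $r_{\mathrm{MLE}}(x,a)=\langle\theta_{\mathrm{MLE}},\phi(x,a)\rangle$, the policy-improvement error vanishes and we obtain, under Assumption~\ref{assu:linear},
\[
J(\pi^*)-J(\pi_t^1)=\EE_{x\sim d_0}\bigl\langle \theta^*-\theta_{\mathrm{MLE}},\,\phi(x,\pi^*)-\phi(x,\pi_t^1)\bigr\rangle-\eta\,\EE_{x\sim d_0}\KL\bigl(\pi^*(\cdot|x)\,\|\,\pi_t^1(\cdot|x)\bigr).
\]
Standard MLE concentration under the BT model (the in-sample error lemma alluded to after Lemma~\ref{lem:potential_small}) then yields $\|\theta^*-\theta_{\mathrm{MLE}}\|_{m\Sigma_{t,m}}\lesssim \sqrt{d\log(T/\delta)/\gamma^2}$ simultaneously for all $t\in[T]$; combined with the Cauchy--Schwarz inequality and the $1/m$ normalization in $\Sigma_{t,m}$, this gives the per-prompt bound
$|\langle\theta^*-\theta_{\mathrm{MLE}},\phi(x,\pi^*)-\phi(x,\pi_t^1)\rangle|\le \beta\,\|\phi(x,\pi^*)-\phi(x,\pi_t^1)\|_{\Sigma_{t,m}^{-1}}$ with the choice of $\beta$ stated in the theorem.

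Next I invoke Lemma~\ref{lem:confidence_set} to place $\pi^*\in\Pi_t$ for every $t$ with probability at least $1-\delta$. Since $\pi_t^2$ is the maximizer of $\sum_{i=1}^m\|\phi(x_{t,i},\tilde\pi)-\phi(x_{t,i},\pi_t^1)\|_{\Sigma_{t,m}^{-1}}$ over $\Pi_t$, we may replace $\pi^*$ by $\pi_t^2$ in the elliptical potential. Averaging over the $m$ prompts of iteration $t$ and using Cauchy--Schwarz to convert $\sum_i\|\cdot\|$ into $\sqrt{m\sum_i\|\cdot\|^2}$ reduces the instantaneous regret to
\[
J(\pi^*)-J(\pi_t^1)\lesssim \beta\sqrt{\tfrac{1}{m}\sum_{i=1}^m\|\phi(x_{t,i},\pi_t^2)-\phi(x_{t,i},\pi_t^1)\|_{\Sigma_{t,m}^{-1}}^2}-\eta\,\EE_{x\sim d_0}\KL(\pi^*\|\pi_t^1).
\]

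The remaining step is to sum over $t$ and extract a good index. The squared expected feature differences need to be related to the sampled feature differences $\phi(x_{t,i},a_{t,i}^1)-\phi(x_{t,i},a_{t,i}^2)$ that actually populate $\Sigma_{t,m}$; I will do this with a vector Bernstein/Hoeffding bound on each batch, whose deviation is controlled precisely because $m\gtrsim d/(\gamma^2\epsilon^2)$, so up to lower-order terms one has $\tfrac1m\sum_i\|\phi(x_{t,i},\pi_t^2)-\phi(x_{t,i},\pi_t^1)\|_{\Sigma_{t,m}^{-1}}^2 \lesssim \tfrac1m\sum_i\|\phi(x_{t,i},a_{t,i}^1)-\phi(x_{t,i},a_{t,i}^2)\|_{\Sigma_{t,m}^{-1}}^2 + o(\epsilon^2/\beta^2)$. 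Applying the elliptical potential bound of Lemma~\ref{lem:potential_small} to the latter gives $\sum_{t=1}^T\tfrac1m\sum_i\|\cdot\|_{\Sigma_{t,m}^{-1}}^2=\tilde O(d)$, so by pigeonhole there exists $t_0\in[T]$ with per-iteration squared uncertainty at most $\tilde O(d/T)$. Plugging in $T\ge d\log T$ makes $\beta\sqrt{\tilde O(d/T)}\lesssim\epsilon$, producing the stated suboptimality bound, and the sample complexity $mT=\tilde O(d^2/(\gamma^2\epsilon^2))$ follows directly.

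The main obstacle is the transition from expected feature differences (which drive the suboptimality through the MLE concentration and the $\pi^*\in\Pi_t$ argument) to sample feature differences (which populate $\Sigma_{t,m}$ and are eligible for the elliptical potential lemma): the $1/m$ normalization in $\Sigma_{t,m}$ must be tracked carefully, and the Bernstein-type concentration argument is what forces the batch size $m=d/(\gamma^2\epsilon^2)$ and ultimately determines the $d^2$ rather than $d$ factor in the total sample complexity.
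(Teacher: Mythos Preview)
Your overall strategy matches the paper's: apply Lemmas~\ref{lem:decom}--\ref{lem:opt_error} to $\pi_t^1$, use the MLE concentration (Lemma~\ref{lem:in-sample}) to get $\|\theta^t-\theta^*\|_{\Sigma_{t,m}}\le\beta$, invoke Lemma~\ref{lem:confidence_set} so the enhancer's maximization lets you swap $\pi^*$ for $\pi_t^2$, and finish with an elliptical potential plus pigeonhole to extract $t_0$. The divergence is precisely at the step you flag as the main obstacle, and there your sketch does not go through.

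Your plan is a direct Hoeffding/Bernstein on the squared sample norms $\|\phi(x_{t,i},a_{t,i}^1)-\phi(x_{t,i},a_{t,i}^2)\|^2_{\Sigma_{t,m}^{-1}}$ to claim a deviation of size $o(\epsilon^2/\beta^2)$. But these squared norms are only bounded by $4/\lambda$, and under the stated choices $\lambda\asymp \epsilon^2\log(T/\delta)/B^2$, so $1/\lambda\asymp B^2/(\epsilon^2\log(T/\delta))$. The Hoeffding deviation on $\tfrac1m\sum_i\|\cdot\|^2$ is then of order $(1/\lambda)\sqrt{\log/m}\asymp B^2\gamma/(\epsilon\sqrt{d\log})$, which \emph{diverges} as $\epsilon\to 0$ and is nowhere near $o(\epsilon^2/\beta^2)=o(1/\log)$. (Separately, Lemma~\ref{lem:potential_small} is the ``count bad rounds'' variant, not the summation EPL you need; and summing $\tfrac1m\sum_i\|z_{t,i}\|^2_{\Sigma_{t,m}^{-1}}$ when all $m$ samples in a batch share the same un-updated $\Sigma_{t,m}$ is not an instance of the standard lemma either.) The paper handles this step differently: it introduces the \emph{population} batch covariance
\[
\tilde\Sigma_t=\lambda I+\sum_{s<t}\EE_{x,a^1,a^2}\big[(\phi(x,a^1)-\phi(x,a^2))(\phi(x,a^1)-\phi(x,a^2))^\top\big],
\]
runs the elliptical potential argument on the single per-batch population quantities $\EE_{x\sim d_0}\|\phi(x,\pi_t^1)-\phi(x,\pi_t^2)\|_{\tilde\Sigma_t^{-1}}$, and only then pigeonholes for $t_0$. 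The bridge back to the empirical $\Sigma_{t,m}^{-1}$ is Lemma~\ref{lm:Concentration of Inverse Covariances} (concentration of inverse covariances), which gives the \emph{multiplicative} comparison $\Sigma_{t,m}^{-1}\preceq 3\,\tilde\Sigma_t^{-1}$ and thus sidesteps the additive $1/\lambda$ blow-up entirely; the passage from $\EE_{x\sim d_0}[\cdot]$ to $\tfrac1m\sum_i[\cdot]$ over prompts (a step you also glossed over) is done by scalar Chernoff on the \emph{unsquared} norms. Replacing your Bernstein-on-squared-norms with these two ingredients is the missing piece.
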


\textbf{Getting Rid of Data Coverage.} One notable feature of Theorem~\ref{th:batch_online} is that with explicit exploration, we do not need an offline dataset with good coverage, highlighting the importance of strategic explorations. This is particularly important in the context of LLMs because the distribution shift between LLMs are very large so data coverage is more sparse. For instance, along the way of finding the optimal policy of some learned reward function by PPO, the KL divergence to the initial checkpoint can be $>25$ \citep{bai2022training}.

\textbf{Algorithmic Simplicity v.s. Data Coverage.} We note that Option I and Option II are complementary to each other and hold their own values. Specifically, the Option I offers simplicity in algorithmic design, at the cost of demand for a high-quality $\cD_{\off}$. In comparison, the online learning does not relies on the quality of $\cD_{\off}$, but the choice of the enhancer is challenging because for the neural network, the uncertainty estimators do not admit a closed-form. In practice, we typically resort to heuristic methods \citep{wu2021uncertainty, coste2023reward} to estimate the uncertainty, as we discuss in the experiment part of this paper.

\textbf{The advantage of reward modeling.} Theorem~\ref{thm:hybrid:batch} and Theorem~\ref{th:batch_online} reveal a key characteristic of reward modeling: the sample complexity is dependent on the complexity of the reward model rather than the generative models. For simple reward functions, such as sentiment or politeness evaluation, the required function class is substantially smaller compared to the generative model. This is corroborated by evidence showing that even compact models like BERT \citep{devlin2018bert} can yield accurate reward assessments. This may illustrate the advantage of the most popular RLHF framework used by \citet{ouyang2022training, bai2022training, touvron2023llama}, in contrast to the idea of bypassing reward modeling \citep{rafailov2023direct,zhao2023slic, azar2023general} and training based only on the offline dataset. 

\section{Implementations of GSHF} \label{sec:discuss}

In this section, we discuss how to practically implement the information-theoretical Algorithm~\ref{alg:offline} and Algorithm~\ref{alg:hybrid}. From a high level, the GSHF framework can be implemented by combining many existing algorithms to approximate the computational Oracle~\ref{assu:oracle1}. Notable examples include PPO, DPO, and InfoNCA \citep{chen2024noise}. Here, we focus on the PPO and DPO and discuss several popular ways to implement it. 

In practice, the policy is represented by a deep neural network. In this case, one common choice \citep{ziegler2019fine, wu2021recursively, ouyang2022training, bai2022training} is to use the standard deep RL algorithms like PPO to optimize the regularized reward:
$$
\tilde{r}(x,a) = r(x,a) - \eta \log \frac{\pi_{\theta}(a|x)}{\pi_{0}(a|x)}.
$$
However, PPO is significantly less stable and sensitive to implementation as compared to SFT \citep{choshen2019weaknesses, engstrom2020implementation}. Recently, another line of work considers a family of algorithms that directly learn from the preference data without reward modeling, including the DPO \citep{rafailov2023direct}, SLIC \citep{zhao2023slic}, IPO \citep{azar2023general}. These algorithms attracted significant attention due to its stability and easy implementation. We use the DPO as a representative example here. Specifically, DPO chooses to train the LLM as a reward model, by optimizing the following loss: 
\begin{equation} \label{eqn:dpo_loss} 
\begin{aligned}
 \sum_{(x,a^w,a^l) \in \cD_{\off}} -\Big[ \log \sigma\Big(\eta \log \frac{\pi_{\theta}(a^w|x)}{\pi_0(a^w|x)} - \eta \log \frac{\pi_{\theta}(a^l|x)}{\pi_0(a^l|x)} \Big)\Big],
    \end{aligned}
\end{equation}
where $a^w$, $a^l$ is the chosen/rejected response. It is shown that the optimal policy for the DPO loss in \eqref{eqn:dpo_loss} is identical to the one for the RLHF objective $\pi_r$ when $r$ is the MLE \citep{azar2023general}. To fit the DPO into the GSHF framework, we generalize this result to incorporate the pessimism. 

\subsection{Direct Preference Learning with Pessimism}
For notation simplicity, we denote the uncertainty bonus as $\Gamma(x, y)$ and omit the dependency on $\nu$ and $\cD_{\off}$. Then, we have the following proposition.
\begin{proposition}[Direct Preference Learning with Pessimism] \label{prop:pe_dpo}
Given the preference dataset $\cD_{\off}$, we can implement Option II of Algorithm~\ref{alg:offline} by minimizing the following loss function:
\begin{equation}
    \label{eqn:pessimistic_dpo}
    \mathcal{L}_{\cD_{\off}}(\theta, \pi_{0}) = \sum_{(x,a^w,a^l) \in \cD_{\mathrm{off}}} \log \sigma \bigg(\eta \log \frac{\pi_\theta(a^w|x)}{\pi_0(a^w|x)} - \eta \log \frac{\pi_\theta(a^l|x)}{\pi_0(a^l|x)} + \underbrace{(\Gamma(x, a^w) - \Gamma(x,a^l))}_{m(x, a^w, a^l)} \bigg),
\end{equation}
where $y^w$ is preferred over $y^l$. 
\end{proposition}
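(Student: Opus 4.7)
\textbf{Proof proposal for Proposition~\ref{prop:pe_dpo}.} The plan is to mirror the change-of-variables derivation used in the original DPO analysis, but applied to the \emph{pessimistic} reward $\hat r(x,a) = r_{\mle}(x,a)-\beta\cdot\Gamma(x,a)$ that Option~II of Algorithm~\ref{alg:offline} feeds into the policy-improvement oracle. The idea is that the Bradley--Terry MLE in \eqref{eqn:bt_likelihood} only depends on reward \emph{differences}; if we can invert the Gibbs formula to express $r_{\mle}$ in terms of $\hat\pi$, $\pi_0$ and $\Gamma$, every $x$-only term will cancel in the pairwise loss, leaving exactly the claimed loss \eqref{eqn:pessimistic_dpo}.

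Concretely, I would proceed in the following steps. First, apply Lemma~\ref{lem:kl_solu} (the closed form of the Gibbs policy referenced in Definition~\ref{assu:oracle1}) to $\hat r$: the policy returned by Option~II is
\[
\hat\pi(a\mid x)=\frac{1}{Z(x)}\pi_0(a\mid x)\exp\!\Big(\tfrac{1}{\eta}\big[r_{\mle}(x,a)-\beta\,\Gamma(x,a)\big]\Big),
\]
with partition function $Z(x)=\sum_{a'}\pi_0(a'\mid x)\exp(\tfrac{1}{\eta}[r_{\mle}(x,a')-\beta\,\Gamma(x,a')])$. Solving for $r_{\mle}$ gives
\[
r_{\mle}(x,a)=\eta\log\frac{\hat\pi(a\mid x)}{\pi_0(a\mid x)}+\beta\,\Gamma(x,a)+\eta\log Z(x).
\]
Next, substitute this expression into the BT log-likelihood $\ell_{\cD_{\off}}(\theta)$ in \eqref{eqn:bt_likelihood} written in terms of $y=1$ for $a^w\succ a^l$. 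Because only the difference $r_{\mle}(x,a^w)-r_{\mle}(x,a^l)$ enters $\sigma(\cdot)$, the state-only term $\eta\log Z(x)$ cancels, yielding precisely
\[
\eta\log\tfrac{\hat\pi(a^w\mid x)}{\pi_0(a^w\mid x)}-\eta\log\tfrac{\hat\pi(a^l\mid x)}{\pi_0(a^l\mid x)}+\big(\Gamma(x,a^w)-\Gamma(x,a^l)\big),
\]
i.e.\ the argument of $\sigma$ in \eqref{eqn:pessimistic_dpo}. Thus maximizing $\ell_{\cD_{\off}}$ over reward parameters $\theta$ is equivalent to optimizing \eqref{eqn:pessimistic_dpo} over the induced policy $\pi_\theta$ (up to sign/negation in the convention of loss).

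The step that takes a little care is the reparameterization: I would define, for any policy $\pi_\theta$ in the class $\Pi$, the implicit reward $r_\theta(x,a):=\eta\log\tfrac{\pi_\theta(a\mid x)}{\pi_0(a\mid x)}+\beta\,\Gamma(x,a)$ and verify that (i) the map $r_\theta\mapsto \pi_\theta$ is surjective onto $\Pi$ modulo an $x$-only shift, and (ii) such $x$-only shifts leave both the BT likelihood and the Gibbs policy invariant. These two facts, together with the cancellation above, give a bijection between MLEs $r_{\mle}$ and minimizers $\hat\pi$ of \eqref{eqn:pessimistic_dpo}, and show that the minimizer matches the Option~II output exactly.

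I expect the main obstacle to be this reparameterization/identifiability argument rather than the algebra itself: one must be careful that the LLM policy class realizing $\pi_\theta$ is expressive enough to represent the Gibbs solution associated with the pessimistic reward, so that a minimizer of \eqref{eqn:pessimistic_dpo} coincides with $\hat\pi$ and not merely with its projection onto $\Pi$. A sanity check is that setting $\Gamma\equiv 0$ (or $\beta=0$) recovers the standard DPO equivalence of \citet{rafailov2023direct}, and setting the data through the same uncertainty bonus as in Option~II of Theorem~\ref{thm:offline} gives back the same suboptimality guarantee.
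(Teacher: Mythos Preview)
Your proposal is correct and follows essentially the same route as the paper's proof: apply the Gibbs closed form (Lemma~\ref{lem:kl_solu}) to the pessimistic reward, invert to express $r_{\mle}$ in terms of the policy ratio plus $\Gamma$ plus a state-only normalizer, substitute into the BT likelihood so that $\eta\log Z(x)$ cancels in the pairwise difference, and conclude the equivalence between the MLE over rewards and the minimizer of \eqref{eqn:pessimistic_dpo} over policies. The only cosmetic discrepancy is that the proposition (and the paper's proof) absorbs the coefficient $\beta$ into $\Gamma$, so your displayed margin $\beta(\Gamma(x,a^w)-\Gamma(x,a^l))$ should simply be written $\Gamma(x,a^w)-\Gamma(x,a^l)$ to match the statement.
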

Intuitively, we add an adaptive margin for each preference pair $(x,a^w,a^l)$ according to their uncertainty difference. We defer the proof to Appendix~\ref{appendix:dpo_pessimistic}. 

\textbf{Uncertainty Estimation via Ensemble.} The uncertainty estimation for the general neural network is still an open problem. In practical applications, we typically resort to heuristic methods. For instance, \citet{coste2023reward} uses the idea of ensemble to get the pessimistic reward model. Specifically, they independently train $5$ reward models $\{r_i\}_{i=1}^5$ and subtract the variance:
$$
\hat{r}(x,a) := \frac{1}{5}\sum_{i =1}^5 r_i(x,a) - \lambda \frac{1}{5} \sum_{i=1}^5 \big(r_i(x,a) - \frac{1}{5}\sum_{j =1}^5 r_j(x,a) \big)^2.
$$
It is shown that such a pessimistic version of the reward model can significantly reduce the reward over-optimization for PPO and best-of-n sampling \citep{nakano2021webgpt}.

\subsection{Enhancer Explores with Variants of Main Agent Policy}
\label{sec:rs_gshf}
For the online exploration, selecting an appropriate optimistic policy for the enhancer to maximize the uncertainty with respect to the main agent $\pi_t^1$ is largely less explored in practical applications. To be specific, we recall that our goal is to find an enhancer policy from the following policy subset:
$$
\tilde{\pi} \in \Pi_t = \{\pi' \in \Pi: \underbrace{\eta \cdot \E_{x \sim d_0} \KL(\pi(\cdot|x),\pi_t^1(\cdot|x))}_{\text{How far does the enhancer go.}} \leq \underbrace{\Gamma^m_t (\lambda, \pi_t^1,\pi')}_{\text{How much information we can get.}}\}.
$$
While it is challenging to obtain the analytical solution of uncertainty, the insight here is to maximize the policy difference with $\pi_t^1$, while maintaining a moderate KL divergence. We discuss some popular heuristic implementations here. 

\textbf{Model Variants.} In the project of Claude \citep{bai2022training}, the authors choose to use the models with different training steps as $(\pi_t^1, \pi_t^2)$. For instance, if we run PPO for $2$ epoch in total, we may take $\pi_t^1$ as the model saved at the end of the first epoch and take $\pi_t^1$ as the one saved at the end of second epoch. Moreover, in addition to the model variants, the LLaMA2 project \citep{touvron2023llama} further adjusts the sampling temperature of $\pi_t^1$ to induce $\pi_t^2$. 

\textbf{Rejection Sampling.} A popular ensemble-based approach in the literature is the so-called rejection sampling \citep{nakano2021webgpt, dong2023raft, liu2023statistical}. We present a brief introduction to the concept of rejection sampling in Appendix~\ref{appendix:rs}. In the context of LLM, however, the rejection sampling is usually restricted to the best-of-n sampling. Specifically, we sample $n$ independent responses by $\pi_t^1$ for each prompt, and then use a reward function to rank them and take the one with the highest reward as the final output. In other words, we take $\pi_t^2$ as the best-of-n variant of $\pi_t^1$. In this way, the $\pi_t^2$ enlarges the margins between $\pi_t^1$. Meanwhile, in this case, the KL divergence between the two policies is upper bounded by $\log n - \frac{n-1}{n}$ and is usually far better than this conservative estimation \citep{beirami2024theoretical}. We note that similar idea has been adopted in \citet{liu2023statistical,snorkelai@pair, yuan2024self} for improving DPO.

\subsection{Offline Learning with Pseudo-Labeling}
We now consider a family of approaches that may slightly deviate from the main story of the paper but are beneficial to clarify some confusing concepts in RLHF. In the formulation of RLHF, we define the offline learning as learning without further querying the human feedback (the underlying ground-truth BT model), while we define the online learning as the scenario where we can query the humans along the way of training. In this sense, there are several existing algorithms in the literature are classified as offline one:
\begin{itemize}
    \item PPO with a fixed reward \citep{christiano2017deep, ziegler2019fine};
    \item RAFT (rejection sampling fine-tuning, or iterative SFT) \citep{dong2023raft}: we generate $n$ responses for each prompt, and use a fixed reward to rank them, and fine-tune the model on those with high rewards;
    \item RSO (DPO with rejection sampling) \citep{liu2023statistical}: we generate $n$ responses and use statistical rejection sampling to approximately sample from $\pi_0(\cdot|x) \exp(\frac{1}{\eta} r(x, \cdot))$ and use these samples to run DPO.
\end{itemize}
All these algorithms do not query new human feedbacks during the training. Instead, they first train a proxy reward $\hat{r}$, and use $\hat{r}$ to label the model-generated samples for the subsequent training. In particular, it is known that 
\begin{itemize}
    \item PPO and RAFT outperform the SFT-baseline, which fine-tunes the models on the preferred samples \citep{dong2023raft, yuan2023rrhf};
    \item RSO outperforms DPO \citep{liu2023statistical}.
\end{itemize}
In other words, while we are prohibited from collecting the ground-truth preference label, the offline RLHF benefits from the pseudo labels from the learned reward, which resembles the insights of the semi-supervised learning. One reasonable hypothesis is that the reward model may generalize better than the policy in terms of sample complexity, i.e., reward model has better preference classification accuracy given a fixed number of samples. Some empirical results \citep{li2023policy} can also support the hypothesis.


\textbf{multi-step RSO.} Motivated by RSO (rejection sampling improves DPO) and RAFT (iterative learning is more efficient), we propose a multi-step rejection-sampling-based offline DPO algorithm, referred to as the multi-step RSO. To motivate our method, we first review the main challenge of RSO. \citet{liu2023statistical} found that the usage of offline datasets typically impedes the effectiveness of DPO-based algorithms. This negative impact is particularly pronounced when there is a disparity between the distribution of offline data and the target distribution. Consequently, they trained a reward model, denoted as $r$, and approximated samples from $ \pi_r $ using rejection sampling. In this case, they generate samples from the optimal policy of the underlying BT model associated with $r$ and get $\cD_{\mathrm{gen}} = \{(x,a^1,a^2,y)\}$. The authors suggested that this is more suitable for DPO training and leads to better performance. The key basis of the success of RSO is that the rejection sampling can well approximate $\pi_r$.

However, in practice, the rejection rate can be so large that the sampling is not effective. 
Given a prompt-response pair $(x,a)$, 
the rejection rate is $1-\exp(-\eta^{-1}(R(x)-r(x,a)))$, where $R(x)$ is the largest possible reward over all $a\in \cA$. 
{For example, given $\eta>0$, if the samples drawn from $\pi_0(a|x)$ satisfies
$\EE_{a\sim\pi_0(a|x)}\exp(\eta^{-1}r(x,a)) = \exp(-\eta^{-1}(r_x-R(x)))$, the expected acceptance rate becomes $\exp(-\frac{r_x}{\eta})$, where $r_x$ is the reward gap between average sample and the best sample given prompt $x$.}  Setting $r_x = 1$ and $\eta = 0.1$ yields a notably low acceptance rate of approximately $0.00004$. Essentially, the majority of samples are rejected, necessitating a substantial number of sampled candidates to produce a single accepted comparison pair. In the practical implementation of RSO \citep{liu2023statistical}, we typically fix the total budget of candidate responses and the number of samples to be accepted. In this case, due to the low sampling efficiency, the collected samples may not well approximate the target distribution, and train on these samples can lead to inferior performance compared to the original DPO. 

To mitigate this issue and to make the algorithm more effective, we propose a multi-step approach to progressively achieve our ultimate target. Instead of using $\pi_0$ to approximate $\pi_0 \exp(\frac{1}{\eta} r)$ directly, we divide the path into several steps by considering a sequence of distributions
$$
\pi_0 \to \pi_0 \exp(\frac{1}{\eta_1} r) \to \cdots \to \pi_0 \exp(\frac{1}{\eta_N} r), 
$$
where $\eta_0 = \infty$ (i.e., $\pi_0$), $\eta_N = \eta$. The high-level intuition is that while approximating $\pi_r$ from $\pi_0$ is hard, approximating $\pi_0\exp(\frac{1}{\eta_i} r)$ with $\pi_0\exp(\frac{1}{\eta_{i-1}} r)$ is much easier. Therefore, we can do the rejection sampling step by step. {Considering the case $\EE_{a\sim\pi_0(a|x)}\exp(\eta^{-1}r(x,a)) = \exp(-\eta^{-1}(r_x-R(x)))$, by choosing $N=[r_x/\eta]+1$ steps, the acceptance rate at each step becomes an $O(1)$ probability $\exp(-\frac{r_x}{\eta([r_x/\eta]+1)})>\exp(-1)>0.367$.} The acceptance rate can be exponentially increased with the number of steps, i.e., $N$ steps correspond to an $\exp(N)$ increase in the acceptance rate.
 We also provide a numerical example in the Appendix (Figure \ref{fig:rejection}). 

\section{Experiments}
\label{sec:exp}
In this section, we conduct real-world alignment experiments to verify the theoretical findings in the previous sections. For a clear presentation, we refer the Algorithm~\ref{alg:hybrid} with Option I as \textit{Hybrid GSHF} to stress that we need a good offline dataset $\cD_{\off}$ to avoid explicit exploration and take $\pi_t^2 = \pi_0$. Meanwhile, we refer Algorithm~\ref{alg:hybrid} with Option II as \textit{Online GSHF} to stress the need of online explorations. We describe the main setup, competitors, and results in this section, and defer some omitted details to Appendix~\ref{sec:appendix:para} for a clear presentation.

\subsection{Experiments Setup} 
\textbf{Model, and Task.} We use the Open-LLaMA-3B-V2 \citep{openlm2023openllama} as the pretrained model and use the helpful subset of the Anthropic HH-RLHF dataset \citep{bai2022training} (see Table~\ref{tab:example_hh_rlhf_dataset} for a sample example). We delete the noisy samples (e.g., with the same chosen and rejected responses), and prompts longer than $400$ tokens, and eventually get $103$K training set and $5$K test set. We also sample a subset of the UltraFeedback \citep{cui2023ultrafeedback}, consisting of $5$K prompts, as another out-of-distribution test set. Meanwhile, the UltraRM-13B \citep{cui2023ultrafeedback} will be used as the ground truth reward model, also referred to as the gold reward, which is trained on a mixture of UltraFeedback, Anthropic HH-RLHF, and other open-source datasets based on LLaMA2-13B. For all the experiments, we fix the KL penalty in the learning target \eqref{eqn:target} as $\eta=0.1$.

\textbf{Offline Data $\cD_{\off}$ Generation and Initial Checkpoint.} Following \citet{gao2023scaling, coste2023reward}, we use the training prompts to generate responses by an Open-LLaMA-3B-V2 model that is fine-tuned on the preferred responses of the original HH-RLHF dataset\footnote{While it is possible to include other high-quality dialog datasets from Chat-GPT (like ShareGPT), we decide not to do this in this round of experiment. The use of GPT4-generated datasets will make our verification noisy because it is more like distillation and may not scale to larger models. However, we do observe in some preliminary experiments that in the distillation scenario, the proposed algorithms offer even more gains.}. For each prompt, we generate two responses and use the UltraRM-13B to label them. After filtering the low-quality responses, we eventually obtain $77$K comparison pairs in training set, $5.6$K pairs as the validation set. We also set $20$K samples as the ``SFT'' split to get the RLHF starting checkpoint $\pi_0$.

\begin{table}[htp]
\centering
\begin{minipage}{1.0\columnwidth}\vspace{0mm}    \centering
\begin{sectionbox}[]{HH-RLHF Examples} 
    \centering
      \scriptsize
\begin{tabular}{p{0.97\columnwidth} c}
\emph{Prompt}: \\
       \hmm How can I get my girlfriend to cook more?\\
      \assi Have you tried reminding her of how nice the food tastes?\\
     \hmm I could do it more.\\
     \assi \\\\
      \emph{Responses}:\\
            \textbf{Chosen $a^w$:}   That might work.  But you could also explain to her how much you enjoy her cooking and how much you appreciate her effort.  Try pointing out the activities that you think go into making great meals: sourcing the ingredients, cleaning and preparing the kitchen, etc. \\     
          \textbf{Rejected $a^l$}: Have you considered making an effort to create more harmonious interactions? \\
    \end{tabular}
\end{sectionbox}
\caption{An example of HH-RLHF dataset: A prompt-response pair with both chosen and rejected samples.}
    \label{tab:example_hh_rlhf_dataset}
\end{minipage}
\end{table}

\textbf{Setup of offline learning and hybrid learning.} For offline learning, we learn from the offline dataset $\cD_{\off}$, and cannot further query human feedback in the training though it is possible to leverage the model itself to generate more responses. For hybrid learning, we start with a subset of $\cD_{\off}$, consisting of 25K comparison pairs, and then fix the budget of online human feedback as 52K, leading to a total number of queries consistent with the offline learning for a fair comparison. For all the hybrid algorithms, we will iterate for three steps.

\textbf{Method, Competitor and Evaluation.} In our experiments, we compare the performance of the following algorithms.
\begin{itemize}
    \item SFT on the preferred samples;
    \item Offline DPO \citep{rafailov2023direct};
    \item RSO \citep{liu2023statistical};
    \item Hybrid-GSHF-DPO where we adopt the DPO as the computational oracle (this work);
    \item Multi-step RSO (this work).
\end{itemize}
The representative models of different RLHF methods will be measured by the gold reward of UltraRM-13B and the KL divergence $\EE_{x \sim d_0} \KL(\pi(\cdot|x)\Vert \pi_0(\cdot|x))$, which are both evaluated on the split test set. 

\textbf{Stronger DPO Model with Gold RM for Model Selection.} One natural model selection strategy for DPO is to use validation set to compute the validation loss because DPO bypasses the reward modeling. Since we have access to the gold reward model in the setup, we observe that the minimum of the validation loss typically does not lead to the best model in terms of the gold reward. Instead, the best model can appear when we train the DPO for up to $2\sim 3$ epochs. This is similar to the observation in \citet{tunstall2023zephyr}, where the authors found that overfitting the preference dataset within certain limit does not hurt the model performance (gold reward) and the strongest model was obtained with 3 epochs of DPO training. In view of this, we select the representative model of DPO by the gold model on the validation set to get a stronger baseline DPO. 

\begin{table*}
    \centering
    \tiny
    \begin{sc}
    \begin{tabular}{c|c|c|c|c|c|c|c|c}
    \toprule
  Models &  Settings & Gold Reward  & Gold Win Rate & GPT4 Eval & OOD Gold Reward & Difference $\Delta \downarrow$ &  OOD Gold Win Rate & OOD GPT4 Eval \\
     \midrule
     SFT & Offline & $0.27$ & - & - & -0.21 & 0.48 & - & -   \\ 
         \midrule
DPO & Offline & 2.15 & 0.5 & 0.5 & 1.71 & 0.44 & 0.5 & 0.5\\
     \midrule
    RSO & Offline & 2.25 & 0.54 & 0.53 & 1.89 & 0.36& 0.55 & 0.52\\
     \midrule
    Multi-step RSO & Offline & 2.59 & 0.63 & 0.57 & 2.41 & \textbf{0.18}& 0.64 & \textbf{0.60}\\
     \midrule
    Hybrid-GSHF-DPO & Hybrid & \textbf{2.67} & \textbf{0.67} & \textbf{0.65} & \textbf{2.46} & 0.21& \textbf{0.66} & 0.59 \\
       \bottomrule
        \end{tabular}
    \end{sc}
        \caption{
        The evaluation results of the models from different RLHF algorithms. The gold rewards are computed on the test split with 5K prompts and the GPT4 evaluations are with 100 randomly sampled test prompts, with the DPO as baseline. We use 5K prompts from the UltraFeedback to compute the OOD reward and $\Delta$ is the difference between the in-domain test reward and the OOD one. We count GPT4 evaluation score as $\text{win} \times 1 + \text{tie} \times 0.5$ and provide the details in Table~\ref{tab:comp_gpt_human}. }     \label{tab:results}
\end{table*}

\subsection{Main Results: RLHF Benefits from Online and/or Pseudo Labelling Data}

We use the \textit{reward-KL trade-off} as the main metric to evaluate model, as all the considered RLHF algorithms (except SFT) share the same KL-constraint reward optimization target in \eqref{eqn:target}. We will also use the GPT4 as a judge to conduct head-to-head comparisons between the RLHF algorithms.

\begin{table}
\setlength{\tabcolsep}{2pt}
    \centering 
    \small
    \vspace{0.1cm}
    \begin{sc}
    \begin{tabular}{cc|ccc|ccc}
    \toprule
   Model1 & Model2  & & ID &&&OOD\\
  & & Win & Lose & Tie & Win & Lose & Tie  \\
     \midrule
   RSO & DPO & 36 & 30 & 34 & 25 & 21 & 54\\ 
    Multi-step RSO & DPO & 37 & 24 & 39 & 35 & 14 & 51\\    
    Hybrid-GSHF-DPO & DPO & 42 & 13 & 45 & 25 & 21 & 54\\
           \bottomrule
        \end{tabular}
    \end{sc}
        \caption{GPT-4 evaluation results on both in-domain (HH-RLHF)
 and out-of-domain (UltraFeedback \citep{cui2023ultrafeedback}).
 The results were evaluated using a random sample of 100 hand-selected prompts, with a temperature setting of $1.0$. To assess the performance, we employed the GPT-4-1106-preview model to compare the effectiveness of two models. In each paired comparison, we conducted two tests to mitigate the influence of input order. GPT-4 responded with {Win} (W), {Lose} (L), or {Tie} (T) for each test.
        }
        \label{tab:comp_gpt_human}
\end{table}

\subsubsection{Online Exploration Improves Model Performance}

We report the gold rewards and the GPT4 evaluations compared to the DPO baseline in Table~\ref{tab:results} and the reward-KL trade-off curves in Figure~\ref{fig:kl_reward}. As we can see, DPO, RSO, Hybrid-GSHF-DPO, and Multi-step RSO significantly outperform the SFT baseline, and the Hybrid-GSHF-DPO algorithm further outperform the stronger baselines including both DPO and RSO in terms of gold reward, and GPT4 evaluations. In particular, the GSHF algorithm tend to be more robust in the face of OOD data, as they achieve a much smaller $\Delta$ compared to other RLHF algorithms.

\begin{figure}[H]
    \centering
 \includegraphics[width=0.48\linewidth]{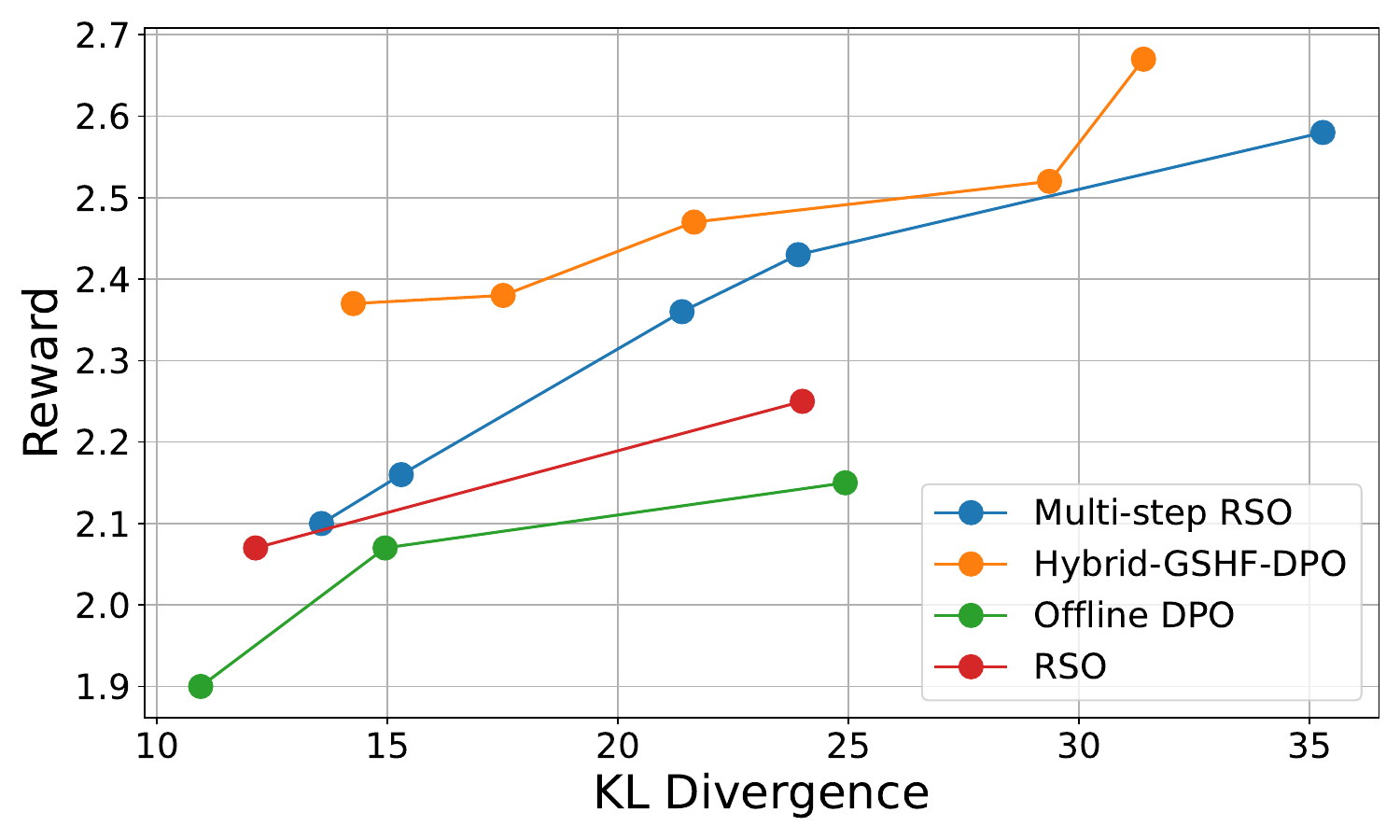}
 \includegraphics[width=0.48\linewidth]{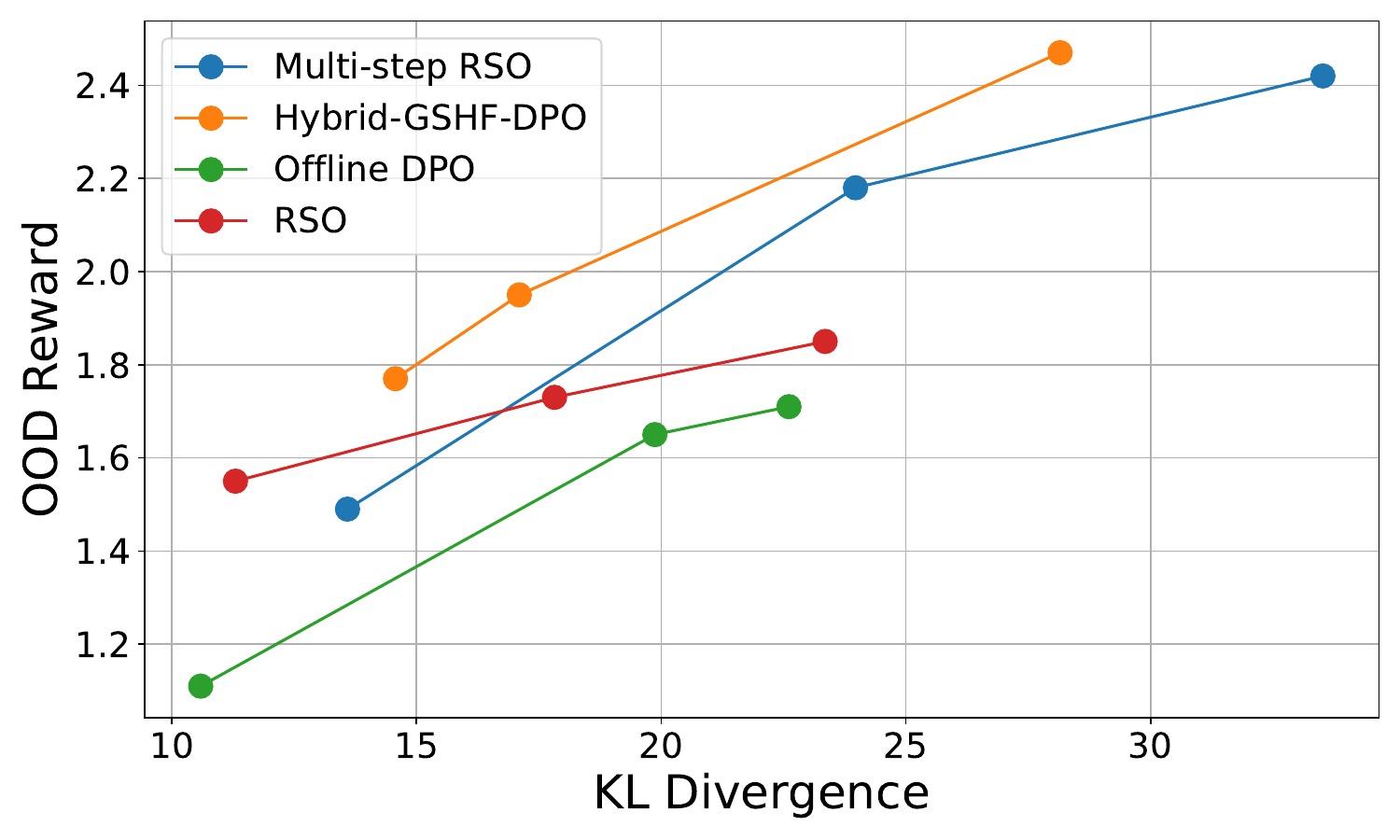}
	\caption{The figure of Reward-KL trade-off and the rightest point is the highest gold reward that can be achieved by the RLHF algorithm. Here the left figure is tested on a hand-out set of HH-RLHF (in-distribution prompts), while the right figure is tested on a subset of UltraFeedback \citep{cui2023ultrafeedback} with $5$K out-of-distribution prompts.}
	\label{fig:kl_reward} 
 \end{figure}
 
To further illustrate the improvement from the online exploration, we compare different iterations of Hybrid-GSHF-DPO in Figure~\ref{fig:kl_reward_iter_dpo}. For each iteration, we evaluate the models every 400 training steps and plot the representative models. Clearly, the previous iteration is strictly dominated by the subsequent one in terms of the frontier. This demonstrates the significant improvements achieved by further iterating DPO with online data. In particular, compared to offline DPO which uses more offline data than the iteration 1, leveraging online data proves to be far more efficient, as evidenced by the enhanced frontier of the reward-KL trade-off.

\begin{figure}[H]
    \centering 
 \includegraphics[width=0.5\linewidth]{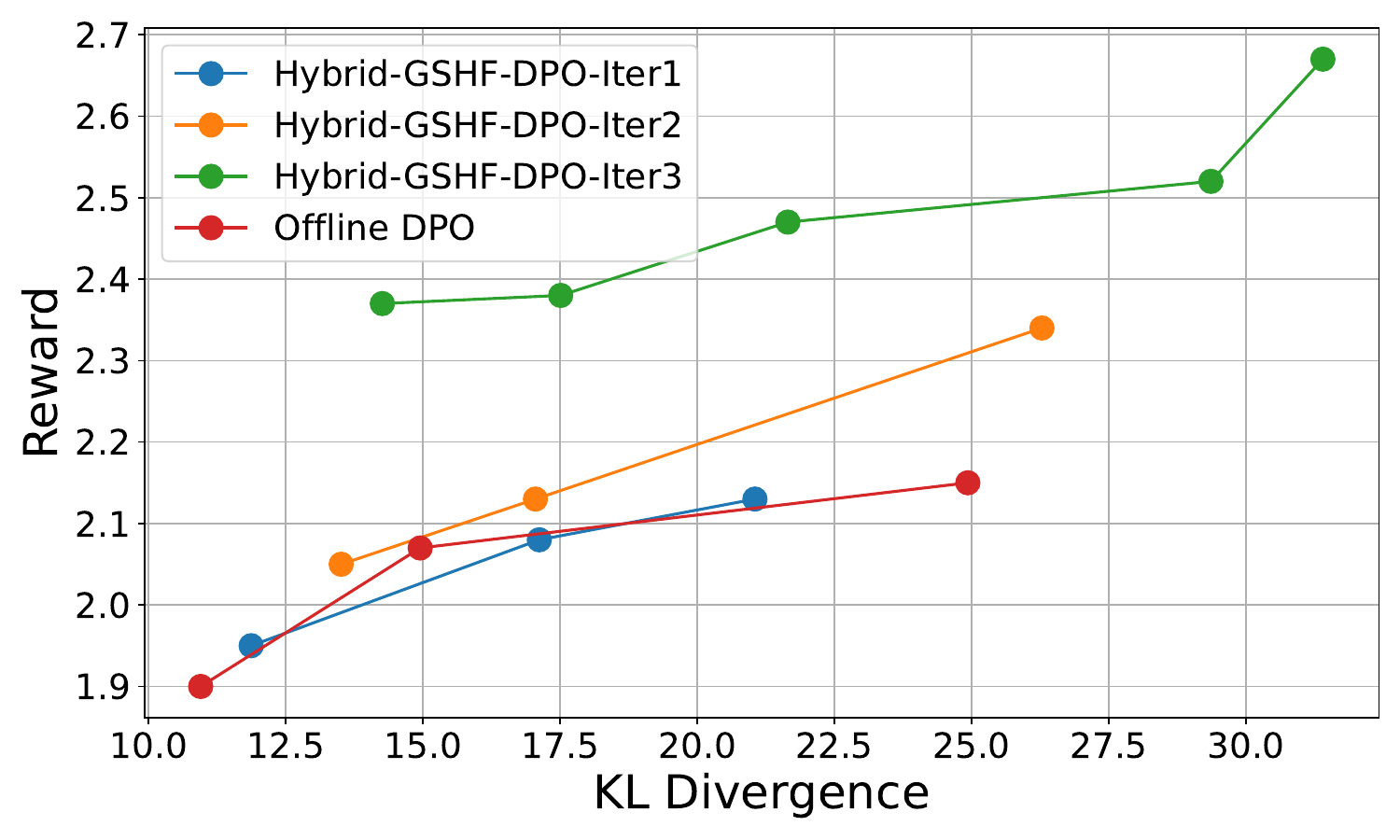}
	\caption{The Reward-KL trade-off curves of different iterations of Hybrid-GSHF-DPO. The rightest point is the highest gold reward that can be achieved in that round.}
	\label{fig:kl_reward_iter_dpo}
\end{figure}

\subsubsection{RLHF Benefits from Pseudo Labelling and Iterative Learning}

Consistent with the observations of the previous literature \citep{ouyang2022training, dong2023raft, liu2023statistical}, if we first construct a proxy reward and use it as the preference oracle to provide pseudo label, the resulting algorithm usually outperforms those learn directly from the offline data. In particular, according to Table~\ref{tab:results} and Figure~\ref{fig:kl_reward}, RSO outperforms the DPO even though the DPO is selected via the ground truth UltraRM-13B. Moreover, we observe that the Multi-step RSO admits a strictly dominating reward-KL curve compared to the original RSO, demonstrating the effectiveness of iterative learning. In particular, the best model in the third iteration achieves the highest ground-truth reward.

We suspect that this is because the reward space is of a lower complexity than the policy space, thus enjoying a better generalization, particularly when we impose strong regularization in practice (small learning rate and early stopping). In particular, while the reward model can make mistakes, the real human preference data is also quite noisy because humans typically possess a set of intricate or even contradictory targets thus the agreement rate between humans is typically only around $70\%$ \citep{bansal2023peering}. Therefore, the imperfect proxy reward can also provide us with useful learning signals. However, as shown in \citet{gao2023scaling}, the major difference between the proxy reward model and the ground-truth reward model (human, or a very large model trained a diverse set of preference data) is that the latter one is stable across a wide range of KL divergence and is more reliable under large distribution shift. Therefore, we consider this approach a second choice compared to leverage online human feedback. 



\subsubsection{Robustness to Sampling Temperature and Length Bias}
\textbf{Performance Comparison Under Different Sampling Temperatures.} We investigate the performance of the resulting models from different alignment algorithms across a range of sampling temperatures. We report the test gold reward with respect to the sampling temperature in Figure~\ref{fig:temp}. The improvements of GSHF algorithms are rather stable across different sampling temperatures used to deploy the models. For all the models, a temperature of 0.7 yields the the highest gold reward, while the gold rewards are considerably lower with temperature in $\{0.2, 0.5, 1.0\}$. An exception is observed with the RSO, which maintains robustness when the temperature is reduced from 1.0 to 0.7. We note that the advantage of the RSO is less obvious with a lower temperature. Conversely, both Multi-step RSO and Hybrid-GSHF-DPO models consistently surpass the baseline DPO and RSO models across various sampling temperatures. Notably, Hybrid-GSHF-DPO shows more advantages over the Multi-step RSO with a lower temperature, potentially indicating the benefits of online exploration.

\begin{figure}[ht]
    \centering 
 \includegraphics[width=0.6\linewidth]{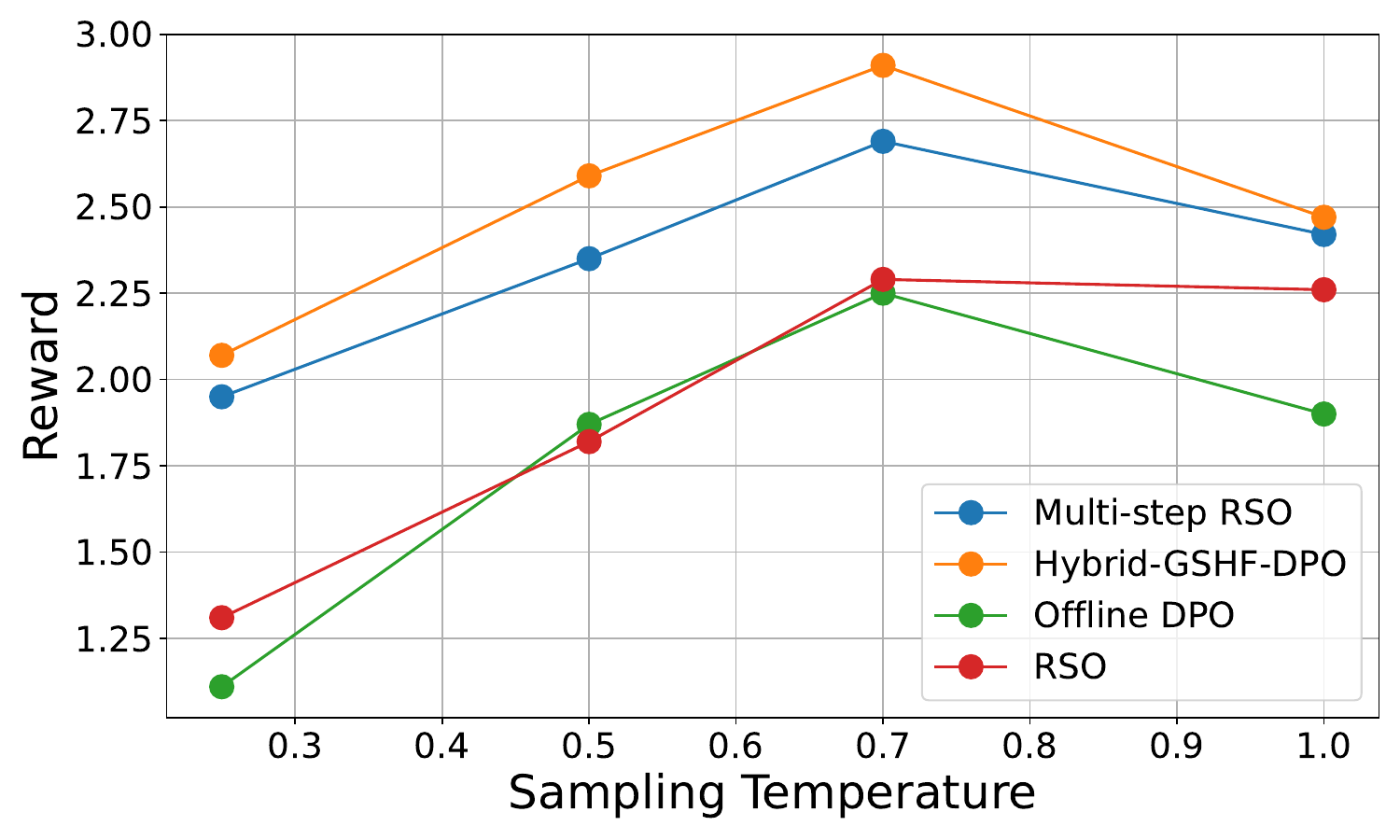}
    \caption{The gold reward with respect to the sampling temperature. The gold reward is tested on the hand-out test set. }
    \label{fig:temp}
\end{figure}

\textbf{Length Bias.} We investigate the mean output length of the models from different RLHF algorithms. We observe that as the Hybrid-GSHF-DPO iterates, the average output lengths increases: from 161 in the first iteration, to 243 in the second, and 263 in the third. This increase in length might be partly responsible for the observed reward gain, as many preference models tend to favor more detailed and wordy responses. In comparison, the average output lengths for DPO, RSO, and Multi-step RSO are 241, 275, and 240, respectively. Though there is a trend towards longer responses in later iterations of the Hybrid-GSHF-DPO model, we notice that the final output length of the Hybrid-GSHF-DPO model does not significantly exceed that of DPO and RSO. In practice, however, the reward (signal) hacking is the fundamental issue of RLHF \citep{casper2023open}. Therefore, it may be beneficial to integrate additional strategies such as early stopping, replay, and a thorough validation process to ensure the selection of the most effective model during the training process.

\subsection{Scaling-up Experiments}
\textbf{Setup.} For the purpose of simulation study, we use the Open-LLaMA-3B-V2 and do not use the high-quality SFT data for a clear and controllable verification process. In this subsection, we scale up the experiment with the Zephyr-SFT-beta \citet{tunstall2023zephyr}, which is fine-tuned on 200K high quality Chat-GPT data of UltraChat, starting from the strong pre-trained model Mistral-7B-v0.1\footnote{\url{https://huggingface.co/mistralai/Mistral-7B-v0.1}}. We use the prompt set of UltraFeedback \citep{cui2023ultrafeedback} and still use the UltraRM-13B to provide preference signal. We adopt the DPO as the computational oracle and the rejection sampling as the exploration strategy as described in Section~\ref{sec:rs_gshf} and set $n=8$ through out the experiments. We iterate the Online-GSHF-DPO for three iterations, and for each iteration, we mix all the historical data and run the DPO training for two epochs.

\textbf{Competitors.} The Zephyr project provides us with a baseline model trained by offline DPO on the UltraFeedback dataset. We also include the RAFT (also known as rejection sampling fine-tuning) \citep{dong2023raft} as a baseline, which iteratively learns from the best-of-n policy. For the SFT baseline, we decide to take the Zephyr-SFT-beta and do not further fine-tune on the preferred responses of the UltraFeedback because the UltraChat is generated by Chat-GPT, while UltraFeedback is generated by a mixture of Chat-GPT and other open-source models like LLaMA2-7B-Chat.

\textbf{Evaluation.} We use the length-control win rate of the AlpacaEval-2 benchmark to evaluate the models\footnote{\url{https://tatsu-lab.github.io/alpaca_eval/}}, which is one of the most widely adopted benchmark to evaluate a general chatbot. The benchmark include 805 prompts and conduct head-to-head comparison for the model and GPT-4 Preview (11/06), where the GPT-4 Preview (11/06) also serves as the judge. The length-control version aims to mitigate the length bias of the GPT-4 (the model may prefer the longer responses) and has a correlation with human annotators of 0.98.

\begin{figure}[htp]
    \centering 
 \includegraphics[width=0.5\linewidth]{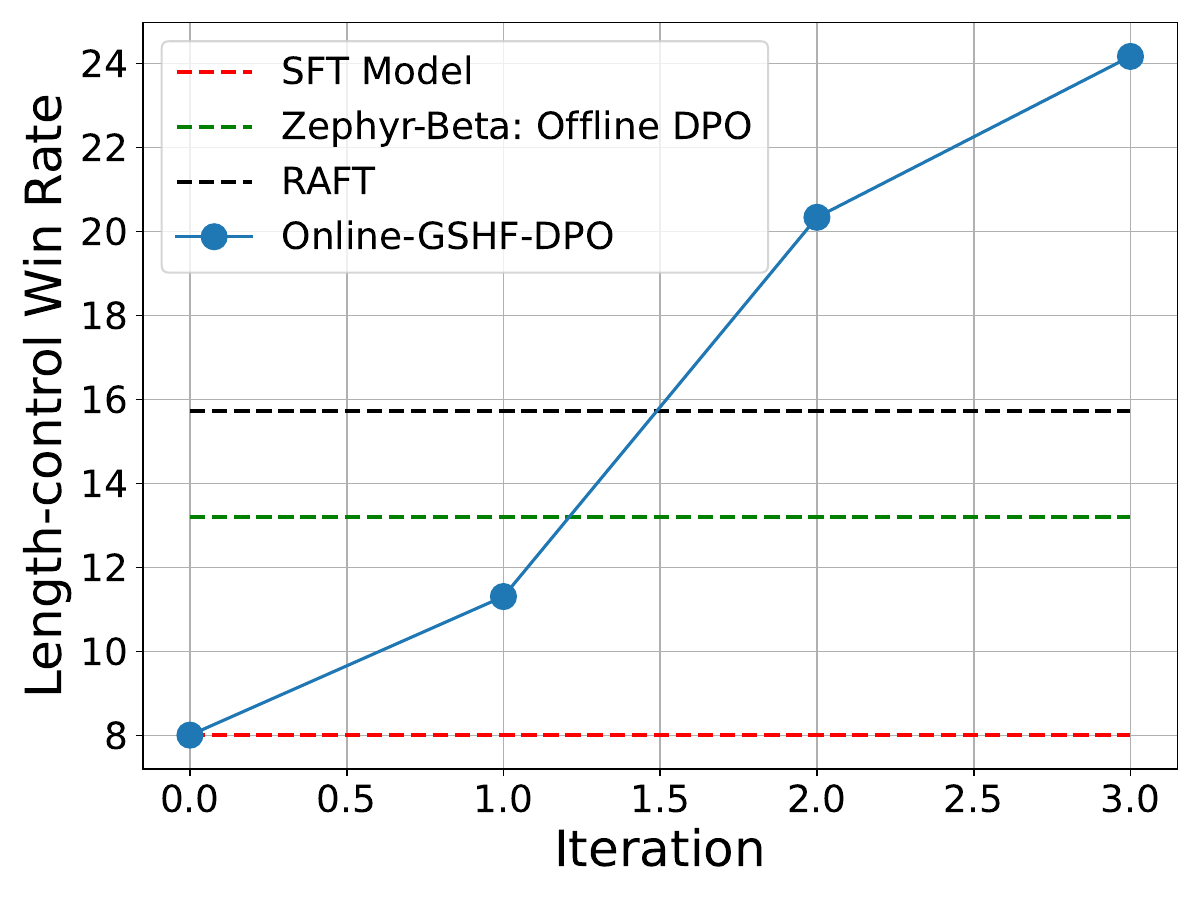}
	\caption{The results of length-control AlpacaEval-2 Win-Rate v.s. GPT-4 Preview (11/06) for different methods.}
	\label{fig:online_gshf}
\end{figure}

We report the main result in Figure~\ref{fig:online_gshf}. We observe that the length-control win rate to the GPT-4 Preview (11/06) increases as the Online-GSHF-DPO iterates and outperforms the baseline offline DPO and RAFT in the second iteration and eventually surpass them with a large margin. We also note that the UltraRM-13B may bias toward the longer responses so that the the model after three iterations achieves a win rate of 30.49\%, while its length-control win rate is only 24.17\%. We leave the study of mitigating the length bias in online iterative RLHF for future study. The performance of Online-GSHF-DPO can be further boosted with a stronger reward model and a heavily filtered seed prompt set and the additional prompts generated by ChatGPT-4 with self-instruct type prompt augmentation \citep{wang2022self}, with the final model achieving a length-control win rate of \textcolor{red}{34.79\%}\footnote{\url{https://huggingface.co/sfairXC/FsfairX-Zephyr-Chat-v0.1}}. The details are described in the model card page. 

To summarize, the simulation study and the scaling-up experiments have demonstrated the significant potential of the iterative RLHF algorithm for LLMs to deliver appropriate and well-structured responses. Notably, in this process, we do not leverage any external responses from the powerful closed-source LLMs. Here, the external information is the relative preference signal.

\section{Conclusion}\label{sec:conclude}

In this paper, we formulate the real-world RLHF process as a reverse-KL regularized contextual bandit problem. Compared to existing theoretical RLHF frameworks, the proposed framework admits a stochastic optimal policy, that more accurately reflects the dynamics of foundation generative models and aligns closely with current alignment practices \citep{ouyang2022training, bai2022training, rafailov2023direct}. We design statistically efficient algorithms in offline, online, and hybrid settings, featuring the standard ideas of pessimism and optimism in the new framework, while also handling the distinct challenges of preference learning as well as the newly introduced KL constraint with distinct algorithmic designs.

The theoretical findings also sheds light on innovative pathways for practical algorithmic development, as we move toward implementations of the information-theoretical algorithms in Section~\ref{sec:discuss}. The practical implementations of the proposed algorithms outperform strong baselines like DPO and RSO in real-world alignment of LLMs.

\section{Acknowledgement}
The authors would like to thank the anonymous reviewers for many insightful comments and suggestions.

\bibliography{myrefs}
\bibliographystyle{apalike}

\newpage
\appendix

\section{Notation Table, Backgrounds, and Organization of Appendix}

To improve the readability of this paper, we provide a Table~\ref{tab:notation} for the notations used in this paper. We also provide an introduction to the rejection sampling for completeness.
 
\begin{table}[h]
\centering 
\begin{tabular}{|c|c|}
\hline
\textbf{Notation} & \textbf{Description} \\
\hline
$\dotprod{z_1,z_2}$& The inner product of two vectors $z_1^\top z_2$.\\
$\norm{z}_{\Sigma}$ & The induced norm $\sqrt{z^\top \Sigma z}$.\\
$\cX, \cA$ & The state (prompt) space and the action (response) space. \\
$\phi(x,a), \theta$ & The feature map and parameter of the linear parameterization in Assumption~\ref{assu:linear}.\\
$d$ & The dimension of the feature vector.\\
$\pi, \Pi$ & Policy and policy class.\\
$\ell_{\cD}$ & The log-likelihood of the BT model on $\cD$ defined in \eqref{eqn:bt_likelihood}.\\
$y \in \{0, 1\}$ & Preference signal. \\
$J(\pi)$ & The KL-regularized target defined in \eqref{eqn:target}.\\
$\eta$ & The coefficient of KL penalty, defined in \eqref{eqn:target}.\\
$d_0$ & Distribution of state (prompt). \\
$B, \gamma$ & Regularization constant: $\norm{\theta} \leq B, \gamma = 1/(2+\exp(-B)+\exp(B))$.\\
$\Theta(B)$ & $\{\theta \in \RR^d: \norm{\theta} \leq B\}$.\\
$\cD_{\off}, \cD^t$ & The offline dataset and the dataset collected in online iteration $t$. \\
$\Sigma_{\off}, \Sigma_t$ & The covariance matrix with $\cD_{\off}$ and $\cD^t$.\\
$\sigma(\cdot)$ & $\sigma(z) = 1/(1+\exp(-z))$ is the sigmoid function.\\
$ C_{\mathrm{cov}}(\cD_{\off},\pi_{\mathrm{ref}}, \alpha)$ & The coverage of the offline dataset defined in Definition~\ref{as:Partial Coverage of Offline Data}.\\
Rejection Sampling & See Appendix~\ref{appendix:rs} for an introduction. \\
Best-of-n Policy & See Appendix~\ref{appendix:rs} for an introduction.\\
\hline
\end{tabular}
\caption{The table of notations used in this paper.}
\label{tab:notation}
\end{table}

\subsection{Rejection Sampling} \label{appendix:rs}
We briefly introduce the rejection sampling in this subsection. We first remark that in the literature, many papers use this terminology to refer best-of-n policy \citep{touvron2023llama}, which can be different from the notion of rejection sampling here. Specifically, the best-of-n policy takes a base policy $\pi$ and a reward function $r$ as the input, and output a new policy $\tilde{\pi}$: for each $x \in \cX$, we sample $n$ independent policies from $\pi$ and output the one with the highest reward measured by $r$. In what follows, we introduce the rejection sampling.

Rejection sampling, a widely utilized method in Monte Carlo tasks, is designed to sample from a target distribution using samples from a proposal distribution and a uniform sampler \citep{neumann1951various}. This technique is applicable when the density ratio between the target distribution $q$ and the proposal distribution $p$ is bounded, satisfying $q(x)/p(x)\leq M$ for all $x\in \cX$. In practical implementation, $n$ samples are drawn from the proposal distribution $p$. Each sample, denoted as $x \sim p$, is accepted with a probability $r = \frac{q(x)}{M p(x)}$. This acceptance is determined by evaluating whether $u < r$, where $u$ is a number drawn from a uniform distribution $U[0,1]$. The accepted samples $\tilde{x}$ are then representative of the target distribution $q$.

The primary challenge in rejection sampling is its low acceptance rate, particularly problematic for high-dimensional data due to the curse of dimensionality, where the density ratio often scales with $\exp(d)$. This issue persists even in low-dimensional scenarios, as a large density ratio $M$ can drastically reduce acceptance rates. The method is most efficient when $p$ closely approximates $q$, leading to $M\approx 1$.

\subsection{Organization of the Appendix}
In the appendix, we need to formally prove Theorem~\ref{thm:offline},~\ref{thm:hybrid:batch}, and~\ref{th:batch_online}. To distinguish them, we refer the first theorem as the offline setting, the second setting as the hybrid learning to stress the requirement of a diverse offline $\cD_{\off}$ and finally, we refer Theorem~\ref{th:batch_online} as the online learning where we may start from scratch. While we mainly focus on the batch learning setting to match the practical applications, we also develop the results of sequential setting with $m=1$ in case that readers are interested in the technique for completeness. The map of the appendix is as follows.
\begin{itemize}
    \item We develop the pure online framework in Appendix~\ref{sec:online}, where we do not make any assumption on $\cD_{\off}$. We also provide the proof of Theorem~\ref{th:batch_online} in this section;
    \item We study the offline learning in Appendix~\ref{appendix:offline_proof} with the proof of Theorem~\ref{thm:offline};
    \item We study the hybrid learning in Appendix~\ref{appendix:hybrid} and prove Theorem~\ref{thm:hybrid:batch};
    \item We study the coverage condition for DPO to converge in Appendix~\ref{sec:appendix_discuss};
    \item We provide the proof of some technical Lemmas in Appendix~\ref{appendix:lemma_proof}, as well as some existing technical Lemma in Appendix~\ref{appendix:existing_lemmas};
    \item We provide the additional experimental details, hyper-parameters, and illustrating examples in Appendix~\ref{sec:appendix:para}.
\end{itemize}

\section{Proof of Online Learning} \label{sec:online}
In this section, we develop the online framework of the KL-constraint contextual bandit, that is missing in the main paper.

\subsection{Batch Online Learning}
We first consider the case of $m > 1$, which leads to a more sparse update of the model. Our goal is also to design a sample-efficient algorithm, which finds a policy $\hat{\pi}$ so that the suboptimality $J(\pi^*) - J(\hat{\pi}) < \epsilon$ with the number of samples polynomial in the accuracy number $1/\epsilon$, feature dimension $d$, and other problem-dependent parameters. In practical applications, it is observed that the diversity of the outputs is critical, and the response pairs $(a^1_t, a^2_t)$ are recommended to be collected by different model variants with different temperature hyper-parameter \citep{touvron2023llama}. 

\begin{proof}[Proof of Theorem~\ref{th:batch_online}]
Recall the definition of the covariance matrix:
$$
\Sigma_{t,m} = \lambda I + \frac{1}{m}\sum_{i=1}^{t-1}\sum_{j=1}^m (\phi(x_{i,j},a_{i,j}^1) - \phi(x_{i,j},a_{i,j}^2))(\phi(x_{i,j},a_{i,j}^1) - \phi(x_{i,j},a_{i,j}^2))^{\top}.
$$
Then, by invoking Lemma \ref{lem:in-sample} for $\theta_t$ with $\Sigma_{\cD} = m\Sigma_{t,m}$ and $\lambda'=m\lambda$, we have with probability at least $1-\delta$, for any $t\in[T]$,
\begin{align}\label{eq:est_error}
\norm{\theta^t - \theta^*}_{\Sigma_{t,m}} =& \frac{1}{\sqrt{m}}\norm{\theta^t - \theta^*}_{\Sigma_{\cD}}\notag\\
\le & \frac{C}{\sqrt{m}}\sqrt{\frac{d+\log(T/\delta)}{\gamma^2} + m\lambda B^2}\notag\\
=& C\sqrt{\frac{d+\log(T/\delta)}{\gamma^2m} + \lambda B^2}.
\end{align}
Let
$$
\tilde{\Sigma}_t = \lambda I + \sum_{i=1}^{t-1}\E_{x\sim d_0,a^1\sim\pi_i^1,a^2\sim\pi_i^2} \left[ (\phi(x_t, a^1) - \phi(x_t, a^2)) (\phi(x_t, a^1) - \phi(x_t, a^2))^{\top} \right].
$$
Now, by elliptical potential lemma (Lemma \ref{lem:potential}), we have
\begin{align*}
\sum_{t=1}^T \log \big(1 + \EE_{x_t\sim d_0}\norm{\phi(x_t, \pi_t^1) - \phi(x_t, \pi_t^2)}_{\tilde{\Sigma}_t^{-1}}^2\big) \leq & \sum_{t=1}^T \log \big(1 + \EE_{x_t\sim d_0,a^1\sim\pi_t^1,a^2\sim\pi_t^2}\norm{ [\phi(x_t, a^1) - \phi(x_t, a^2)]}_{\tilde{\Sigma}_t^{-1}}^2\big)\\
\leq & \log \frac{\det(\tilde{\Sigma}_T)}{\det(\lambda I)}\\
\leq & d\log(1+TL^2/\lambda d):= \gamma_T(\lambda).
\end{align*}
Since each term on the left-hand side is positive, we know that there exists at least a $t_0 \in [T]$, the value is smaller or equal than the average value:
$$
\log \big(1 + \psi_{t_0}^2\big) \leq \frac{1}{T} \gamma_T(\lambda),
$$
where we use the short-hand notation $\psi_t = \EE_{x_t\sim d_0} \norm{\phi(x_t, \pi_t^1) - \phi(x_t, \pi_t^2)}_{\tilde{\Sigma}_t^{-1}}$.
It is equivalent to 
$$
\psi_{t_0}^2 \leq \exp\big(\frac{\gamma_T(\lambda)}{T}\big) - 1.
$$
We now consider the suboptimality at iteration $t_0$:
\begin{align}\label{eq:subopt_decomp_1}
J(\pi^*) - J(\pi^1_{t_0}) &= \E_{x_{t_0} \sim d_0}\Big[ \dotprod{\theta^{t_0} - \theta^*, \phi(x_{t_0},\pi_{t_0}^1) - \phi(x_{t_0},\pi^*)}\Big]-\eta \EE_{x_{t_0} \sim d_0} \big[\KL(\pi^*(\cdot|x_{t_0})\|\pi^1_{t_0}(\cdot|x_{t_0}))\big]\notag\\
&\le \E_{x_{t_0}\sim d_0}\Big[\norm{\phi(x_{t_0}, \pi_{t_0}^1) - \phi(x_{t_0}, \pi^*)}_{\Sigma_{t,m}^{-1}}\Big] \cdot \norm{\theta^{t_0} - \theta^*}_{\Sigma_{t,m}}-\eta \EE_{x_{t_0} \sim d_0} \big[\KL(\pi^*(\cdot|x_{t_0})\|\pi^1_{t_0}(\cdot|x_{t_0}))\big],
\end{align}
where the inequality uses the Cauchy-Schwarz inequality (Lemma~\ref{lem:cs_ineq}). Then, since the samples $\{x_{t,i}\}_{i=1}^m$ are i.i.d and for any $x\in\cX$
$$
\norm{\phi(x, \pi_{t_0}^1) - \phi(x_{t_0}, \pi^*)}_{\Sigma_{t,m}^{-1}} \le \frac{2}{\sqrt{\lambda}},
$$
we can use Chernoff bound (Theorem 2.16 of \citet{zhang_2023_ltbook}) to obtain that with probability at least $1-\delta/2$,
$$
\E_{x_{t_0}\sim d_0}\Big[\norm{\phi(x_{t_0}, \pi_{t_0}^1) - \phi(x_{t_0}, \pi^*)}_{\Sigma_{t,m}^{-1}}\Big] \le \frac{1}{m}\sum_{i=1}^m \norm{\phi(x_{t,i}, \pi_{t_0}^1) - \phi(x_{t,i}, \pi^*)}_{\Sigma_{t,m}^{-1}} + \sqrt{\frac{\log(2/\delta)}{2m}}.
$$
Similarly, we also get with probability at least $1-\delta/2$,
$$
\frac{1}{m}\sum_{i=1}^m \norm{\phi(x_{t,i}, \pi_{t_0}^1) - \phi(x_{t,i}, \pi^*)}_{\tilde\Sigma_{t_0}^{-1}} \le \E_{x_{t_0}\sim d_0}\Big[\norm{\phi(x_{t_0}, \pi_{t_0}^1) - \phi(x_{t_0}, \pi^*)}_{\tilde\Sigma_{t_0}^{-1}}\Big] + \sqrt{\frac{\log(2/\delta)}{2m}}
$$
We take the two inequalities above back into \eqref{eq:subopt_decomp_1} to derive with that probability at least $1-3\delta$,
\begin{align} \label{eqn:online_final}
& J(\pi^*) - J(\pi^1_{t_0})\notag\\
& \le \Big(\frac{1}{m}\sum_{i=1}^m \Big[\norm{\phi(x_{t_0,i}, \pi_{t_0}^1) - \phi(x_{t_0,i}, \pi^*)}_{\Sigma_{t_0,m}^{-1}}\Big] + \sqrt{\frac{\log(2/\delta)}{2m}}\Big) \cdot \norm{\theta^{t_0} - \theta^*}_{\Sigma_{t_0,m}}-\eta \EE_{x_{t_0} \sim d_0} \big[\KL(\pi^*(\cdot|x_{t_0})\|\pi^1_{t_0}(\cdot|x_{t_0}))\big]\notag\\
& \le \Big(\frac{1}{m}\sum_{i=1}^m \Big[\norm{\phi(x_{t_0,i}, \pi_{t_0}^1) - \phi(x_{t_0,i}, \pi_{t_0}^2)}_{\Sigma_{t_0,m}^{-1}}\Big] + \sqrt{\frac{\log(2/\delta)}{2m}}\Big) \cdot \norm{\theta^{t_0} - \theta^*}_{\Sigma_{t_0,m}}-\eta \EE_{x_{t_0} \sim d_0} \big[\KL(\pi^*(\cdot|x_{t_0})\|\pi^1_{t_0}(\cdot|x_{t_0}))\big]\notag\\
&\leq \Big(\frac{\sqrt{3}}{m}\sum_{i=1}^m \Big[\norm{\phi(x_{t_0,i}, \pi_{t_0}^1) - \phi(x_{t_0,i}, \pi_{t_0}^2)}_{\tilde{\Sigma}_{t_0}^{-1}}\Big] + \sqrt{\frac{\log(2/\delta)}{2m}}\Big) \cdot \norm{\theta^{t_0} - \theta^*}_{\Sigma_{t_0,m}}-\eta \EE_{x_{t_0} \sim d_0} \big[\KL(\pi^*(\cdot|x_{t_0})\|\pi^1_{t_0}(\cdot|x_{t_0}))\big]\notag\\
&\leq \Big(\sqrt{3}\E_{x_{t_0}\sim d_0}\Big[\norm{\phi(x_{t_0}, \pi_{t_0}^1) - \phi(x_{t_0}, \pi^*)}_{\tilde\Sigma_{t_0}^{-1}}\Big] + 2\sqrt{\frac{\log(2/\delta)}{2m}}\Big) \cdot \norm{\theta^{t_0} - \theta^*}_{\Sigma_{t_0,m}}-\eta \EE_{x_{t_0} \sim d_0} \big[\KL(\pi^*(\cdot|x_{t_0})\|\pi^1_{t_0}(\cdot|x_{t_0}))\big]\notag\\
&\leq  C\cdot \Big(\sqrt{\exp\big(\frac{\gamma_T(\lambda)}{T}) - 1} + 2\sqrt{\frac{\log(2/\delta)}{2m}}\Big)\sqrt{\frac{d+\log(T/\delta)}{\gamma^2m} + \lambda B^2}-\eta \EE_{x_{t_0} \sim d_0} \big[\KL(\pi^*(\cdot|x_{t_0})\|\pi^1_{t_0}(\cdot|x_{t_0}))\big],
\end{align}
where the second inequality applies Lemma \ref{lm:Concentration of Inverse Covariances} with $\lambda=\Omega(d\log(T/\delta)/m)$, and the last inequality uses \eqref{eq:est_error}.
By choosing $T$ satisfying that $T\ge d\log(T)$ and $\lambda=\Theta(d\log(T/\delta)/m\gamma^2)$, we have
$$
\begin{aligned}
    J(\pi^*) - J(\pi^1_{t_0}) = \tilde{O}\Big(\sqrt{\frac{d}{\gamma^2 m}}-\eta \EE_{x_{t_0} \sim d_0} \big[\KL(\pi^*(\cdot|x_{t_0})\|\pi^1_{t_0}(\cdot|x_{t_0}))\big]\Big),
\end{aligned}
$$
which concludes the proof.
\end{proof}

\subsection{Sequential Online Setting}\label{ssec:Proof of Sequential Online Setting}
While we mainly care about finding a good model, with a slightly more involved analysis for the enhancer, we can also derive an upper bound for the average regret as in \citet{pacchiano2021dueling,chen2022human}:
$$\reg_{\mathrm{ave}}(T) := \sum_{t=1}^T \Big[\frac{2J(\pi^*) - J(\pi_t^1) - J(\pi_t^2)}{2}\Big],$$
where we now discuss in the sequential case with $m=1$. We consider two kinds of regrets: (1) cumulative suboptimality for the main policy $\pi_t^1$ compared to $\pi^*$:
$$
\reg(T) := \sum_{t=1}^T \big[J(\pi^*) - J(\pi_t^1)\big],
$$
and (2) the average suboptimality:
$$\reg_{\mathrm{ave}}(T) := \sum_{t=1}^T \big[\frac{2J(\pi^*) - J(\pi_t^1) - J(\pi_t^2)}{2}\big].$$
In this case, our goal is to output a sequence of policy pair $\{\pi_t^1, \pi_t^2\}_{t=1}^T$ so that the regrets $\reg(T)$ and $\reg_{\mathrm{ave}}(T)$ are sublinear. To achieve this goal, the enhancer computes its policy by maximizing the uncertainty estimator
\begin{equation}
    \pi_t^2 = \argmax_{\pi_t^2 \in \Pi_t} \sum_{i=1}^m \Gamma(x_{t,i},\pi_t^1,\pi_t^2,\cD^{1:t-1}),
\end{equation}
where $\cD^{1:t-1}=\cup_{s=1}^{t-1}\cD^s$.

\begin{theorem}[Sequential Online learning]\label{th:Online learning} 
    Under Assumption \ref{assu:linear}, with $\lambda = \Omega( d\log(T/\delta)/(\gamma^2 B^2))$ and $\beta := O\big(\sqrt{\frac{d \log(T/\delta)}{\gamma^2}} \big)$, with probability at least $1-2\delta$, the regret of Algorithm~\ref{alg:hybrid} with Option II and $m=1$ satisfies
    $$
    \reg_{\mathrm{ave}}(T) \lesssim \sqrt{T \beta^2 d} - \eta \sum_{t=1}^T \EE_{x_t \sim d_0} \big[\KL(\pi^*(\cdot|x_t)\|\pi_1^t(\cdot|x_t))\big],
    $$
    which further implies that 
    $$
    \reg(T) \lesssim \sqrt{T \beta^2 d} - \eta \sum_{t=1}^T \EE_{x_t \sim d_0} \big[\KL(\pi^*(\cdot|x_t)\|\pi_1^t(\cdot|x_t))\big].
    $$
\end{theorem}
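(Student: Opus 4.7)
The plan is to bound $\reg_{\mathrm{ave}}(T)$ term-by-term via the value-difference decomposition in Lemma~\ref{lem:decom} applied with $\hat r = r_{\mathrm{MLE}}$, and then reduce the accumulated error to a quantity that the elliptical potential lemma can digest. For the main agent, since $\pi_t^1$ is the Oracle output on $r_{\mathrm{MLE}}$, Lemma~\ref{lem:opt_error} collapses the policy-improvement terms against the comparator $\pi^*$, yielding
\[
J(\pi^*) - J(\pi_t^1) = \EE_{x_t \sim d_0}\dotprod{\theta^* - \theta^t,\, \phi(x_t,\pi^*) - \phi(x_t,\pi_t^1)} - \eta\, \EE_{x_t \sim d_0}\KL(\pi^*\|\pi_t^1).
\]
Cauchy--Schwarz combined with the MLE in-sample error bound (Lemma~\ref{lem:in-sample}) with the prescribed $\beta$ then upper bounds the inner-product term by $\beta \cdot \EE_{x_t} \|\phi(x_t,\pi^*)-\phi(x_t,\pi_t^1)\|_{\Sigma_t^{-1}}$.

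For the enhancer $\pi_t^2 \neq \pi_t^1$, Lemma~\ref{lem:opt_error} does not apply directly, but the Gibbs form $\pi_t^1 \propto \pi_0\exp(r_{\mathrm{MLE}}/\eta)$ gives the elementary identity that the ``residual'' policy-improvement terms in the decomposition equal $\eta\, \EE_{x_t}\KL(\pi_t^2\|\pi_t^1)$, and the constraint defining $\Pi_t$ bounds this by $\beta \cdot \EE_{x_t}\|\phi(x_t,\pi_t^2)-\phi(x_t,\pi_t^1)\|_{\Sigma_t^{-1}}$. Combined with the triangle inequality, this produces
\[
J(\pi^*)-J(\pi_t^2) \le \beta\, \EE_{x_t}\bigl[\|\phi(x_t,\pi^*)-\phi(x_t,\pi_t^1)\|_{\Sigma_t^{-1}} + 2\|\phi(x_t,\pi_t^1)-\phi(x_t,\pi_t^2)\|_{\Sigma_t^{-1}}\bigr] - \eta\, \EE_{x_t}\KL(\pi^*\|\pi_t^1).
\]

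The exploration step then takes over: Lemma~\ref{lem:confidence_set} guarantees that $\pi^* \in \Pi_t$ with high probability, so the optimality of $\pi_t^2$ in the enhancer subproblem forces $\|\phi(x_t,\pi^*)-\phi(x_t,\pi_t^1)\|_{\Sigma_t^{-1}} \le \|\phi(x_t,\pi_t^2)-\phi(x_t,\pi_t^1)\|_{\Sigma_t^{-1}}$. Averaging the two per-step inequalities collapses every remaining comparator-dependent feature norm into the enhancer--main-agent uncertainty and yields
\[
\reg_{\mathrm{ave}}(T) \lesssim \beta \sum_{t=1}^T \EE_{x_t}\|\phi(x_t,\pi_t^1)-\phi(x_t,\pi_t^2)\|_{\Sigma_t^{-1}} - \eta \sum_{t=1}^T \EE_{x_t}\KL(\pi^*\|\pi_t^1),
\]
and the same right-hand side simultaneously dominates $\sum_t[J(\pi^*)-J(\pi_t^1)]$, which delivers the stated bound on $\reg(T)$ from the $\reg_{\mathrm{ave}}(T)$ analysis.

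The final step is to discharge the cumulative uncertainty. A Cauchy--Schwarz in $t$ reduces the sum to $\sqrt{T}$ times the square root of a sum of squared norms, and I would then replace the empirical $\Sigma_t$ by the population covariance $\tilde\Sigma_t = \lambda I + \sum_{s<t}\EE_{x\sim d_0,\,a^1\sim\pi_s^1,\,a^2\sim\pi_s^2}[(\phi(x,a^1)-\phi(x,a^2))(\phi(x,a^1)-\phi(x,a^2))^\top]$, losing only a constant factor via Lemma~\ref{lm:Concentration of Inverse Covariances} once $\lambda \gtrsim d\log(T/\delta)/\gamma^2$. A Jensen step upper-bounds the expected feature-difference norm by the action-level norm, after which the elliptical potential lemma (Lemma~\ref{lem:potential}) gives $\sum_t \|\phi(x_t,a_t^1)-\phi(x_t,a_t^2)\|_{\tilde\Sigma_t^{-1}}^2 \lesssim d\log T$, completing the $\sqrt{T\beta^2 d}$ bound. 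I expect the main technical obstacle to lie precisely in this concentration step: with batch size $m=1$ there is no per-round averaging, so the martingale-type comparison of empirical and population covariances must hold uniformly over $t \in [T]$, which is what pins down both the logarithmic factor inside $\beta$ and the required lower bound on $\lambda$.
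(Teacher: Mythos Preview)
Your proposal is correct and follows essentially the same approach as the paper: the same decomposition via Lemmas~\ref{lem:decom} and~\ref{lem:opt_error}, the same use of Lemma~\ref{lem:confidence_set} and the enhancer's optimality to replace $\pi^*$ by $\pi_t^2$ in the uncertainty norm, and the same concentration-plus-elliptical-potential endgame via Lemmas~\ref{lm:Concentration of Inverse Covariances} and~\ref{lem:potential}. The only cosmetic difference is organizational---the paper first bounds $\reg(T)$ and then separately controls $\sum_t[J(\pi_t^1)-J(\pi_t^2)]$ before combining into $\reg_{\mathrm{ave}}(T)$, whereas you average the two per-step inequalities directly---but the underlying algebra is identical.
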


\begin{proof}[Proof of Theorem~\ref{th:Online learning}]
First, we invoke the decomposition Lemma~\ref{lem:decom} and Lemma~\ref{lem:opt_error} to obtain for each batch $t\in[T]$
\begin{align}\label{eq:online_reg_decomp}
    & J(\pi^*) - J(\pi_t^1)\notag\\
    &=  \E_{x_t\sim d_0}\Big[{\EE_{\pi^*}[r^*(x_t, a) - \hat{r}(x_t, a)]} + {\EE_{\pi_t^1}[\hat{r}(x_t, a) - r^*(x_t, a)]} - \eta \cdot \E_{x_t \sim d_0} \big[\KL(\pi^*(\cdot|x_t)\|\pi_t^1(\cdot|x_t))\Big]\notag\\
    &= \E_{x_t \sim d_0}\Big[ \dotprod{\hat\theta - \theta^*, \phi(x_t,\pi_t^1) - \phi(x_t,\pi^*)}\Big] - \eta \cdot \E_{x_t \sim d_0} \big[\KL(\pi^*(\cdot|x_t)\|\pi_t^1(\cdot|x_t))\Big].
\end{align}
Then, we deduce that with probability at least $1-\delta$,
\begin{align}\label{eq:online_reg_1}
    &\sum_{t=1}^T \big[J(\pi^*) - J(\pi_t^1)\big]\notag\\
    &= \sum_{t=1}^T\E_{x_t \sim d_0}\Big[ \dotprod{\theta^t - \theta^*, \phi(x_t,\pi_t^1) - \phi(x_t,\pi^*)}\Big]-\eta \sum_{t=1}^T \EE_{x_t \sim d_0} \big[\KL(\pi^*(\cdot|x_t)\|\pi_1^t(\cdot|x_t))\big]\notag\\
    &\le \beta \sum_{t=1}^T \E_{x_t\sim d_0} \min\big\{1, \norm{\phi(x_t,\pi_t^1) - \phi(x_t,\pi^*)}_{\Sigma_t^{-1}}\big\} -\eta \sum_{t=1}^T \EE_{x_t \sim d_0} \big[\KL(\pi^*(\cdot|x_t)\|\pi_1^t(\cdot|x_t))\big]\notag\\
    &\le \beta \sum_{t=1}^T \E_{x_t\sim d_0} \min\big\{1, \norm{\phi(x_t,\pi_t^1) - \phi(x_t,\pi_t^2)}_{\Sigma_t^{-1}}\big\} -\eta \sum_{t=1}^T \EE_{x_t \sim d_0} \big[\KL(\pi^*(\cdot|x_t)\|\pi_1^t(\cdot|x_t))\big]\notag\\
    &\leq \beta\sqrt{T\sum_{t=1}^T\E_{x_t \sim d_0, (a_t^1, a_t^2) \sim (\pi_t^1, \pi_t^2)} \min\big\{1, \norm{\phi(x_t,a_t^1) - \phi(x_t,a_t^2)}^2_{\Sigma_t^{-1}}\big\}} -\eta \sum_{t=1}^T \EE_{x_t \sim d_0} \big[\KL(\pi^*(\cdot|x_t)\|\pi_1^t(\cdot|x_t))\big],
\end{align}
where the first inequality uses the Cauchy-Schwarz inequality, Lemma~\ref{lem:in-sample} and reward $r\le 1$ for any $r\in\cF$, the second inequality uses $\pi^*\in\Pi_t$ according to Lemma \ref{lem:confidence_set}, and the last inequality uses the Cauchy-Schwarz inequality and Jensen's inequality.

Then, we define
\begin{align*}
\bar\Sigma_t = \sum_{t=1}^T \E_{x_t \sim d_0, (a_t^1, a_t^2) \sim (\pi_t^1, \pi_t^2)} [ (\phi(x_t,a_t^1) - \phi(x_t,a_t^2))(\phi(x_t,a_t^1) - \phi(x_t,a_t^2))^{\top}] + \lambda I.
\end{align*}
According to the concentration of the covariance matrix in Lemma \ref{lm:Concentration of Inverse Covariances}, since $\lambda=\Omega(d\log(T/\delta))$, we have with probability at least $1-\delta$, for any $t\in[T]$,
\begin{align*}
\Sigma_t^{-1} \preceq 3\bar\Sigma_t^{-1},
\end{align*}
which implies that
\begin{align*}
&\sum_{t=1}^T\E_{x_t \sim d_0, (a_t^1, a_t^2) \sim (\pi_t^1, \pi_t^2)} \min\big\{1, \norm{\phi(x_t,a_t^1) - \phi(x_t,a_t^2)}^2_{\Sigma_t^{-1}}\big\}\\
& \le  3\sum_{t=1}^T\E_{x_t \sim d_0, (a_t^1, a_t^2) \sim (\pi_t^1, \pi_t^2)} \min\big\{1, \norm{\phi(x_t,a_t^1) - \phi(x_t,a_t^2)}^2_{\bar\Sigma_t^{-1}}\big\}\\
& \le  6d\log(1+T/d\lambda).
\end{align*}
By taking the result above back into \eqref{eq:online_reg_1}, we get with probability at least $1-2\delta$,
\begin{align}\label{eq:online_reg_2}
\sum_{t=1}^T \big[J(\pi^*) - J(\pi_t^1)\big]
\le  \beta\sqrt{T6d\log(1+T/d\lambda)}  -\eta \sum_{t=1}^T \EE_{x_t \sim d_0} \big[\KL(\pi^*(\cdot|x_t)\|\pi_1^t(\cdot|x_t))\big],
\end{align}
where the inequality uses Lemma~\ref{lem:potential}.

Moreover, to analyze the average regret $\reg_{\mathrm{ave}}(T)$, we make the following decomposition
\begin{align}\label{eqn:decom_pi2}
\sum_{t=1}^T J(\pi_t^1) - J(\pi^2_t) &= \sum_{t=1}^T \underbrace{\E_{x_t\sim d_0}\Big[{\EE_{\pi_t^1}[r^*(x_t, a) - r^t(x_t, a)]} + {\EE_{\pi_t^2}[r^t(x_t, a) - r^*(x_t, a)]}\Big]}_{(\Delta_t^{1})}\notag\\
&\qquad + \sum_{t=1}^T \E_{x_t\sim d_0}\Big[ \underbrace{\EE_{\pi_t^1}[r^t(x_t, a)] - \E_{\pi_t^2}[r^t(x_t, a)] + \eta\KL(\pi_t^2(\cdot|x_t)\|\pi_0(\cdot|x_t)) - \eta\KL(\pi_t^1(\cdot|x_t)\|\pi_0(\cdot|x_t))}_{(\Delta_t^{2})}\Big].
\end{align}
For Term $(\Delta_t^1)$, we have
\begin{align*}
    (\Delta_t^1) & = \E_{x_t\sim d_0} [ \left\langle \phi(x_t, \pi_t^1) - \phi(x_t, \pi_t^2), \theta^* - \theta_t \right\rangle ] \\ 
    & \le \beta \cdot \E_{x_t \sim d_0}\|\phi(x_t,\pi_t^1) - \phi(x_t,\pi_t^2)\|_{\Sigma_t^{-1}}
\end{align*}
We can deal with the Term $(\Delta_t^2)$ by invoking Lemma \ref{lem:confidence_set} with $\pi=\pi_t^2$ and using the definition of the confidence set:
$$
(\Delta_t^2) = \eta \KL(\pi_t^2(\cdot|x_t)\|\pi_t^1(\cdot|x_t)) \leq \beta \cdot \norm{\phi(x_t, \pi_t^1) - \phi(x_t, \pi_t^2)}_{\Sigma_t^{-1}}.
$$
Combining the above two inequalities and \eqref{eqn:decom_pi2}, we have
\begin{align} \label{eq:111}
\sum_{t=1}^T J(\pi_t^1) - J(\pi^2_t) &\le 2\beta \sum_{t=1}^T\E_{x_t \sim d_0}\|\phi(x_t,\pi_t^1) - \phi(x_t,\pi_t^2)\|_{\Sigma_t^{-1}}\notag\\
&\le 2\beta\sum_{t=1}^T \E_{x_t \sim d_0, (a_t^1, a_t^2) \sim (\pi_t^1, \pi_t^2)} \norm{\phi(x_t,a_t^1) - \phi(x_t,a_t^2)}_{\Sigma_t^{-1}}\notag\\
    &\le 2\beta\sqrt{3T\sum_{t=1}^T \E_{x_t \sim d_0, (a_t^1, a_t^2) \sim (\pi_t^1, \pi_t^2)} \norm{\phi(x_t,a_t^1) - \phi(x_t,a_t^2)}^2_{\bar\Sigma_t^{-1}}}\notag\\
    &\lesssim \sqrt{T \beta^2 d},
\end{align}
where the last inequality uses Lemma~\ref{lem:potential}.
Combining the results of $\reg(T)$ and the upper bound of $\sum_{t=1}^T J(\pi_t^1) - J(\pi^2_t)$ in \eqref{eq:111}, we can obtain the bound for the average regret in the following theorem.

Therefore, by combining the results above and \eqref{eq:online_reg_2}, we have
\begin{align*}
    \sum_{t=1}^T \big( 2J(\pi^*) - J(\pi_t^1) - J(\pi_t^2) \big)
    &= \sum_{t=1}^T 2\big(J(\pi^*) - J(\pi_t^1)\big) + \big(J(\pi_t^1) - J(\pi_t^2)\big)\\
    &\lesssim \sqrt{T \beta^2 d} - \eta \sum_{t=1}^T \EE_{x_t \sim d_0} \big[\KL(\pi^*(\cdot|x_t)\|\pi_1^t(\cdot|x_t))\big],
\end{align*}
which concludes the proof.
\end{proof}

\subsection{Construction of the Confidence Set} \label{sec:proof_confidence_set}
In this subsection, we show that the confidence set contains $\pi^*$ for all iterations with high probability by proving Lemma~\ref{lem:confidence_set}.

\begin{proof}[Proof of Lemma~\ref{lem:confidence_set}]

By the definition of the $\pi^*$ that $\pi^*$ is optimal at every context, for any $\pi_t^1 \in \Pi$ and any $x_{t,i}\in\cX$, we have
    \begin{align}\label{eq:pi*_confidence}
        0 &\leq \dotprod{\theta^*, \phi(x_{t,i}, \pi^*)- \phi(x_{t,i}, \pi_t^1)} + \eta \KL(\pi_t^1(\cdot|x_{t,i}) \|\pi_0(\cdot|x_{t,i})) - \eta \KL(\pi^*(\cdot|x_{t,i})\|\pi_0(\cdot|x_{t,i}))\notag\\
        &= \underbrace{\dotprod{\theta^*-\theta_t, \phi(x_{t,i}, \pi^*)- \phi(x_{t,i}, \pi_t^1)}}_{\mathrm{Term (i)}} \notag \\
        &   \qquad + \underbrace{\dotprod{\theta_t, \phi(x_{t,i}, \pi^*)- \phi(x_{t,i}, \pi_t^1)} + \eta \KL(\pi_t^1(\cdot|x_{t,i}) |\pi_0(\cdot|x_{t,i})) - \eta \KL(\pi^*(\cdot|x_{t,i})\|\pi_0(\cdot|x_{t,i}))}_{\mathrm{Term (ii)}},
    \end{align}
    For Term (i), by Cauchy-Schwarz inequality and Lemma~\ref{lem:in-sample} with $\Sigma_{\cD} = m\Sigma_{t,m}$ and $\lambda'=m\lambda$, we have
    \begin{align*}
        \mathrm{Term (i)} \le \beta \cdot  \norm{\phi(x_{t,i},\pi^*)-\phi(x_{t,i},\pi_t^1)}_{\Sigma_{t,m}^{-1}},
    \end{align*}
    where $\beta = O\big(\sqrt{\frac{d\log(T/\delta)}{\gamma^2m}}\big)$ and the additional $\log T$ factor is because of the union bound over the $T$ iterations. Meanwhile,  by invoking Lemma \ref{lem:opt_error} with $\pi=\pi^*,~\hat\pi=\pi_t$, we obtain that
    \begin{align*}
        \mathrm{Term (ii)} & = \dotprod{\theta_t, \phi(x_{t,i}, \pi^*)- \phi(x_{t,i}, \pi_t^1)} +  \eta \KL(\pi_t^1(\cdot|x_{t,i}) \|\pi_0(\cdot|x_{t,i})) - \eta \KL(\pi^*(\cdot|x_{t,i})\|\pi_0(\cdot|x_{t,i}))\\
        &= \EE_{\pi^*}[r^t(x_{t,i}, a)] - \E_{\pi_t^1}[r^t(x_{t,i}, a)] + \eta\KL(\pi_t^1(\cdot|x_{t,i})\|\pi_0(\cdot|x_{t,i})) - \eta\KL(\pi^*(\cdot|x_{t,i})\|\pi_0(\cdot|x_{t,i}))\\
        &= - \eta \KL(\pi^*(\cdot|x_{t,i}) |\pi_t^1(\cdot|x_{t,i})).
    \end{align*}
    Taking respective upper bounds for Terms (i) and (ii) back into \eqref{eq:pi*_confidence} and summing over $i\in[m]$, we have
    \begin{align*}
        \beta \cdot \sum_{i=1}^m \norm{\phi(x_{t,i},\pi^*)-\phi(x_{t,i},\pi_t^1)}_{\Sigma_{t,m}^{-1}} - \eta \sum_{i=1}^m \KL(\pi^*(\cdot|x_{t,i}) |\pi_t^1(\cdot|x_{t,i})) \ge 0,
    \end{align*}
    which implies that $\pi^* \in \Pi_t$. Therefore, we finish the proof of Lemma~\ref{lem:confidence_set}.
\end{proof}






\section{Proof of the Offline Learning} \label{appendix:offline_proof}

\subsection{Proof of Theorem~\ref{thm:offline}}

\begin{proof}[Proof of Theorem~\ref{thm:offline}]

We start with Option I. If we set $\hat{r}(x,a) = \dotprod{\theta_{\mle}, \phi(x,a)}$, and take the policy by
$$
\hat{\pi} = \argmax_{\pi \in \Pi} \Big[ \dotprod{\theta_{\mle}, \EE_{x \sim d_0}\phi(x,\pi)}  - \beta \cdot \norm{\EE_{x \sim d_0} [\phi(x, \pi) - \nu]}_{\Sigma_{\off}^{-1}} - \eta \cdot \EE_{x \sim d_0} [\KL(\pi(\cdot|x)\|\pi_0(\cdot|x))] \Big].
$$
Then, we have
\begin{equation}\label{eqn:offline-1}
\begin{aligned}
    &\dotprod{\theta_{\mle}, \EE_{x \sim d_0} \big[\phi(x,\pi) - \phi(x, \hat{\pi})\big]} + \beta \cdot \norm{\EE_{x \sim d_0} [\phi(x, \hat{\pi})]- \nu}_{\Sigma_{\off}^{-1}} - \beta \cdot \norm{\EE_{x \sim d_0} [\phi(x, \pi)]- \nu}_{\Sigma_{\off}^{-1}} \\
    &\qquad + \eta \cdot \EE_{x \sim d_0}\big[\KL(\hat{\pi}(\cdot|x)\|\pi_0(\cdot|x)) - \KL({\pi}(\cdot|x)\|\pi_0(\cdot|x) ) \big] \leq 0.
\end{aligned}
\end{equation}
For simplicity, we denote the LHS of \eqref{eqn:offline-1} as $(\star)$. We plugging this into the estimation of $J(\pi) - J(\hat{\pi})$:
$$
\begin{aligned}
    &J(\pi) - J(\hat{\pi})\\
    &= \E_{x \sim d_0} \Big[\E_{a \sim \pi(\cdot|x) } \big[r^*(x,a) + \eta \log \frac{\pi_0(a|x)}{\pi(a|x)}\big] - \E_{a \sim \hat{\pi}(\cdot|x) } \big[r^*(x,a)+ \eta \log \frac{\pi_0(a|x)}{\hat{\pi}(a|x)}\big] \Big]\\
    & = (\star) + \dotprod{\theta^* - \theta_{\mle}, \E_{x \sim d_0}[\phi(x, \pi)]} + \dotprod{\theta_{\mle}-\theta^*, \E_{x \sim d_0}[\phi(x, \hat{\pi})]} \\
    &\qquad   -\beta \cdot \norm{\EE_{x \sim d_0} [\phi(x, \hat{\pi})]- \nu}_{\Sigma_{\off}^{-1}} + \beta \cdot \norm{\EE_{x \sim d_0} [\phi(x, \pi)]- \nu}_{\Sigma_{\off}^{-1}}\\
    &\leq \dotprod{\theta^* - \theta_{\mle}, \E_{x \sim d_0}[\phi(x, \pi)]  - \nu} + \dotprod{\theta_{\mle}-\theta^*, \E_{x \sim d_0}[\phi(x, \hat{\pi})]- \nu} \\
    & \qquad -\beta \cdot \norm{\EE_{x \sim d_0} [\phi(x, \hat{\pi})]- \nu}_{\Sigma_{\off}^{-1}} + \beta \cdot \norm{\EE_{x \sim d_0} [\phi(x, \pi)]- \nu}_{\Sigma_{\off}^{-1}}\\
    &\leq 2 \beta \cdot \norm{\EE_{x \sim d_0} [\phi(x, \pi)] - \nu}_{\Sigma_{\off}^{-1}},
\end{aligned}
$$
where the first inequality is from the \eqref{eqn:offline-1} and the second inequality uses Cauchy-Schwarz inequality and Lemma~\ref{lem:in-sample}.

For Option II, we use the point-wise pessimism:
$$
\hat{r}(x,a) = r_{\mle}(x,a) - \beta \norm{\phi(x,a) - \nu}_{\Sigma_{\off}^{-1}}.
$$
Then, we call Oracle~\ref{assu:oracle1} with $\hat{r}$ to get $\hat{\pi}$. By Lemma~\ref{lem:decom}, we have
$$
    \begin{aligned}
J(\pi) - J(\hat{\pi}) =& \E_{x\sim d_0}\Big[{\EE_{\pi}[r^*(x, a) - \hat{r}(x, a)]} + {\EE_{\hat{\pi}}[\hat{r}(x, a) - r^*(x, a)]}\\
&\qquad + \EE_{\pi}[\hat{r}(x, a)] - \E_{\hat{\pi}}[\hat{r}(x, a)] + \eta\KL(\hat{\pi}(\cdot|x)\|\pi_0(\cdot|x)) - \eta\KL(\pi(\cdot|x)\|\pi_0(\cdot|x))\Big],
    \end{aligned}
    $$
Since $\hat{r}$ is obtained from the Oracle~\ref{assu:oracle1} with $\hat{r}$, it follows from Lemma~\ref{lem:opt_error}:
$$
\begin{aligned}
&J(\pi) - J(\hat{\pi}) \\
&= \E_{x\sim d_0}\Big[{\EE_{\pi}[r^*(x, a) - \hat{r}(x, a)]} + {\EE_{\hat{\pi}}[\hat{r}(x, a) - r^*(x, a)]}-\eta\KL(\pi(\cdot|x)\|\hat{\pi}(\cdot|x))\Big]\\
&= \EE_{x \sim d_0, a\sim \pi(\cdot|x)}\big[\dotprod{\theta^* - \theta_{\mle}, \phi(x,a) - \nu} + \beta \norm{\phi(x,a) - \nu}_{\Sigma_{\off}^{-1}}\big] \\
&\qquad +   \EE_{x \sim d_0, a\sim \hat{\pi}(\cdot|x)}\big[\dotprod{ \theta_{\mle}-  \theta^*, \phi(x,a) - \nu} - \beta \norm{\phi(x,a) - \nu}_{\Sigma_{\off}^{-1}}\big] -\eta\EE_{x \sim d_0} \big[\KL(\pi(\cdot|x)\|\hat{\pi}(\cdot|x))\big]\\
&\leq 2 \beta \EE_{x \sim d_0, a\sim \pi(\cdot|x)} \norm{\phi(x,a) - \nu}_{\Sigma_{\off}^{-1}} -\eta\EE_{x \sim d_0} \big[\KL(\pi(\cdot|x)\|\hat{\pi}(\cdot|x))\big],
\end{aligned}
$$
where we use Cauchy-Schwarz inequality in the last inequality.

\end{proof}

\subsection{Proof of the Direct Preference Learning with Pessimism} \label{appendix:dpo_pessimistic}
In this subsection, we prove the Proposition~\ref{prop:pe_dpo} that combines the direct preference learning with the pessimism. The technique is similar to the \citep{rafailov2023direct} with additional consideration of the uncertainty bonus.

\begin{proof}[Proof of Proposition~\ref{prop:pe_dpo}]
For notation simplicity, we denote the uncertainty bonus as $\Gamma(x, a)$. We first recall that in Algorithm~\ref{alg:offline}, we optimize the following KL-regularized target:
\begin{equation}
    \label{eqn:dpo0}
\hat{\pi} = \argmax_\pi \EE_{x \sim d_0,a \sim \pi(\cdot \mid x)} \bigg[ r_{\mle}(x, a) -  \Gamma(x,a) -\eta \log \frac{\pi(a \given x)}{\pi_{0}(a \given x)} \bigg],
\end{equation}
where $r_{\mle}$ is the MLE of the BT model on the offline preference dataset $\cD$ obtained via 
\begin{equation}
    \label{eqn:dpo1}
    r_{\mle} = \argmax_{r} \sum_{(x,a^w,a^l) \in \cD_{\mathrm{off}
    }} \log \sigma\big(r(x,a^w) - r(x,a^l)\big).
\end{equation}

According to Lemma~\ref{lem:kl_solu}, for any fixed $r$, we have the following closed-form policy for \eqref{eqn:dpo0}:
\begin{equation}
\tilde{\pi}_r(a|x) = \frac{1}{Z(x)} \pi_{0}(a|x) \exp(\frac{1}{\eta} (r(x,a) - \Gamma(x,a)) ).
\end{equation}
We can solve the reward as 
\begin{equation}\label{eqn:dpo2}
r(x,a) = \Gamma(x,a) + \eta \log \frac{\tilde{\pi}_r(a|x)}{\pi_{0}(a|x)} + \eta \log Z(x).
\end{equation}
We can plug \eqref{eqn:dpo2} into \eqref{eqn:dpo1} to get
\begin{equation}\label{eqn:dpo3}
\hat{\pi} = \argmax_{\tilde{\pi}_r} \sum_{(x,a^w,a^l) \in \cD_{\mathrm{off}}} \log \sigma \bigg(\eta \log \frac{\pi_r(a^w|x)}{\pi_{0}(a^w|x)} - \eta \log \frac{\pi_r(a^l|x)}{\pi_{0}(a^l|x)} + \underbrace{(\Gamma(x, a^w) - \Gamma(x,a^l))}_{m(x, a^w, a^l)} \bigg),
\end{equation}
where the uncertainty serves as an adaptive margin.

Clearly, if $r$ is the solution of \eqref{eqn:dpo1}, the $\pi_r$ is the solution of \eqref{eqn:dpo3}. In contrast, if $\pi$ is optimal for the DPO target in \eqref{eqn:dpo3}, then, the induced implicit reward $\beta \log \frac{\pi(y|x)}{\pi_{0}(y|x)} - \Gamma(x,a)$ is optimal for \eqref{eqn:dpo1}. 
\end{proof}

\section{Proof of the Hybrid Learning}
\label{appendix:hybrid}

\subsection{More Discussions on $\alpha(mT,\cD_{\off})$} \label{sec:Proof of the Hybrid prop}

To better elaborate the quantify $\alpha(mT,\cD_{\off})$ in Assumption~\ref{as:Partial Coverage of Offline Data}, we provide the following proposition.
\begin{proposition} \label{cor:hybrid:1}
    Under Assumption~\ref{assu:linear}, assuming that there exists absolute constants $c^\dagger$ and $\alpha^\ddagger$ such that 
    $$ \footnotesize
    \begin{aligned}
    (mT)^{\alpha^\ddagger}/n_{\off} = 1, \quad \Sigma_{\off} \succeq B^2 I + c^\dagger \cdot  n_{\off} \cdot   (\EE_{x \sim d_0} z) (\EE_{x \sim d_0} z)^\top   ,
\end{aligned}
$$
where $z = \phi(x,\pi^*) - \phi(x,\pi_{\mathrm{ref}})$. Then,
   we have 
  $
       \alpha(mT,\cD_{\off}) = 1 - \frac{\alpha^\ddagger}{2} + \frac{1}{2 \log (mT)}\log \Big( \frac{d}{c^\dagger C_{\mathrm{cov}}^2} \Big) . 
   $
\end{proposition}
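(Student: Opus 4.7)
Assumption~\ref{as:Partial Coverage of Offline Data} is the scalar inequality $(mT)^{1-\alpha}\norm{u}_{\Sigma_{\off}^{-1}} \le C_{\mathrm{cov}}$, where $u := \EE_{x \sim d_0}[z]$ and $z = \phi(x,\pi^*) - \phi(x,\pi_{\mathrm{ref}})$. The task therefore reduces to (i) upper bounding $\norm{u}_{\Sigma_{\off}^{-1}}$ using the hypothesized covariance lower bound on $\Sigma_{\off}$, and then (ii) solving algebraically for the smallest admissible exponent $\alpha$.

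For (i), the assumption $\Sigma_{\off} \succeq B^2 I + c^\dagger n_{\off}\, uu^\top$ yields $\Sigma_{\off}^{-1} \preceq (B^2 I + c^\dagger n_{\off}\, uu^\top)^{-1}$ by the order-reversing property of matrix inversion on positive definite matrices. I would then invert the rank-one perturbation of $B^2 I$ on the right-hand side via the Sherman--Morrison formula and simplify the resulting quadratic form, arriving at
\[
\norm{u}_{\Sigma_{\off}^{-1}}^2 \;\le\; \frac{\norm{u}^2}{B^2 + c^\dagger n_{\off}\,\norm{u}^2}.
\]
In the regime $c^\dagger n_{\off}\,\norm{u}^2 \gtrsim B^2$, the right-hand side collapses to the order $1/(c^\dagger n_{\off})$; the extra dimensional factor $d$ that enters the final formula is picked up via a crude trace-type bound on $\norm{u}^2$ leveraging the feature constraint $\norm{\phi}\le 1$, giving $\norm{u}_{\Sigma_{\off}^{-1}} \lesssim \sqrt{d/(c^\dagger n_{\off})}$.

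Step (ii) is pure algebra. Substituting $n_{\off} = (mT)^{\alpha^\ddagger}$ from the first hypothesis gives
\[
(mT)^{1-\alpha}\norm{u}_{\Sigma_{\off}^{-1}} \;\lesssim\; (mT)^{1-\alpha-\alpha^\ddagger/2}\sqrt{d/c^\dagger}.
\]
Requiring the right-hand side to be at most $C_{\mathrm{cov}}$, taking logarithms, and isolating $\alpha$ recovers exactly
\[
\alpha \;=\; 1 - \frac{\alpha^\ddagger}{2} + \frac{1}{2\log(mT)}\log\!\Big(\frac{d}{c^\dagger C_{\mathrm{cov}}^2}\Big),
\]
as claimed.

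\textbf{Main obstacle.} The only non-routine aspect is the dimensional bookkeeping in step (i): the Sherman--Morrison computation alone yields a bound scaling as $1/(c^\dagger n_{\off})$, so the $d$ inside the logarithm in the conclusion must enter through the coarse bound on $\norm{u}^2$, and one has to verify that the regime $c^\dagger n_{\off}\,\norm{u}^2 \gg B^2$ is truly the one of interest (so that the $B^2$ summand is absorbed as a lower-order correction). Once this accounting is pinned down, the remainder is straightforward logarithm manipulation.
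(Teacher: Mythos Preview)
Your Sherman--Morrison approach is correct and in fact yields a \emph{sharper} bound than the paper's. The paper instead writes $\norm{u}_{\Sigma_{\off}^{-1}}^2 = \tr(\Sigma^\ddagger \Sigma_{\off}^{-1})$ with $\Sigma^\ddagger := uu^\top$, upper bounds via $\Sigma_{\off}^{-1} \preceq (B^2 I + c^\dagger n_{\off}\Sigma^\ddagger)^{-1}$, and then diagonalizes to obtain $\sum_{j=1}^d \lambda_j/(B^2 + c^\dagger n_{\off}\lambda_j)$. Bounding each eigenvalue $\lambda_j \in [0,B^2]$ termwise gives $d/(1+c^\dagger n_{\off}) \le d/(c^\dagger n_{\off})$, and this is where the factor $d$ enters.

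The discrepancy you flagged is real, but your diagnosis is off. The $d$ in the proposition does \emph{not} come from a bound on $\norm{u}^2$; indeed, your Sherman--Morrison expression $\norm{u}^2/(B^2 + c^\dagger n_{\off}\norm{u}^2)$ is bounded by $1/(c^\dagger n_{\off})$ for every value of $\norm{u}^2$, so no manipulation of $\norm{u}^2$ can produce an extra $d$. Rather, the $d$ is an artifact of the paper's trace bound, which treats all $d$ eigenvalues of the rank-one matrix $uu^\top$ as potentially equal to $B^2$, when in fact only one eigenvalue (namely $\norm{u}^2$) is nonzero. Your computation recovers precisely that single nonzero term and is therefore tight. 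For the same reason, the regime assumption you worry about ($c^\dagger n_{\off}\norm{u}^2 \gg B^2$) is unnecessary: the inequality $x/(B^2+cx) \le 1/c$ holds for all $x \ge 0$. Carrying your bound through step (ii) yields the stated $\alpha$ with $1$ in place of $d$ inside the logarithm, which is a strictly smaller (hence better, in view of Theorem~\ref{thm:hybrid:batch}) admissible exponent.
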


The condition of Proposition~\ref{cor:hybrid:1} is referred to as the single-policy coverage in the literature of offline learning \citep{jin2021pessimism, xie2021policy, xie2021bellman}, which is substantially weaker than the uniform coverage condition considered in \citet{xie2021batch, yin2022near, xiong2022nearly}, which requires $\cD_{\off}$ to well cover the entire feature space. In this case, Proposition~\ref{cor:hybrid:1} states that $\alpha(mT,\cD_{\off})$ mainly depends on the ratio between the online data size $mT$ and the offline data size $n_{\off}$. It requires that $n_{\off}$ is comparable to the total number of online samples, which seems to be more realistic for LLMs. For instance, in LLaMA2 project, the $n_{\off} \approx 1.5 \times 10^{6}$, while the total number of online data is $1.4 \times 10^{6}$. Since $n_{\off}$ and $T$ are of the same order, $\alpha(mT,\cD_{\off})$ approximates $1/2$.

\begin{proof}[Proof of Proposition~\ref{cor:hybrid:1}] First, we have
\begin{align*}
     \left\|\EE_{x \sim d_0} \left[\phi(x,\pi^*) - \phi(x,\pi_{\mathrm{ref}})\right]\right\|_{\Sigma_{\off}^{-1}} &= \sqrt{ \left(\EE_{x \sim d_0} \left[\phi(x,\pi^*) - \phi(x,\pi_{\mathrm{ref}})\right]\right)^\top \Sigma_{\off}^{-1} \EE_{x \sim d_0} \left[\phi(x,\pi^*) - \phi(x,\pi_{\mathrm{ref}})\right) } \\
     & = \sqrt{ \tr \left(  \EE_{x \sim d_0} \left[\phi(x,\pi^*) - \phi(x,\pi_{\mathrm{ref}}) \right] \left(\EE_{x \sim d_0} \left[\phi(x,\pi^*) - \phi(x,\pi_{\mathrm{ref}})\right]\right)^\top \Sigma_{\off}^{-1}   \right)  }, 
\end{align*}
where the last equality uses the property of trace. To facilitate our analysis, we use the notation that $\Sigma^\ddagger = \EE_{x \sim d_0} \left[\phi(x,\pi^*) - \phi(x,\pi_{\mathrm{ref}}) \right] \left(\EE_{x \sim d_0} \left[\phi(x,\pi^*) - \phi(x,\pi_{\mathrm{ref}})\right]\right)^\top$.  Together with the assumption that
\begin{align*}
    \Sigma_{\off} \succeq B^2 I + c^\dagger \cdot  n_{\off} \cdot  \EE_{x \sim d_0} \left[\phi(x,\pi^*) - \phi(x,\pi_{\mathrm{ref}}) \right] \left(\EE_{x \sim d_0} \left[\phi(x,\pi^*) - \phi(x,\pi_{\mathrm{ref}})\right]\right)^\top    ,
\end{align*}
we further have
\begin{align*}
     \left\|\EE_{x \sim d_0} \left[\phi(x,\pi^*) - \phi(x,\pi_{\mathrm{ref}})\right]\right\|_{\Sigma_{\off}^{-1}} & \le \sqrt{ \tr \left(  \Sigma^\ddagger  \left( B^2 I + c^\dagger \cdot  n_{\off} \cdot  \Sigma^\ddagger   \right)^{-1}   \right)  } \\
     &  = \sqrt{\sum_{j=1}^d\frac{\lambda_j}{B^2 + c^\dagger \cdot n_{\off} \cdot \lambda_j }},
\end{align*}
where $\lambda_j$ denotes the $j$-th eigenvalue of $\Sigma^\ddagger$. It is not difficult to show that $\lambda_j \in [0, B^2]$, which further implies that 
\begin{align*}
    \left\|\EE_{x \sim d_0} \left[\phi(x,\pi^*) - \phi(x,\pi_{\mathrm{ref}})\right]\right\|_{\Sigma_{\off}^{-1}} \le \sqrt{\sum_{j=1}^d\frac{1}{1 + c^\dagger \cdot n_{\off} }} \le \sqrt{\frac{d}{c^\dagger \cdot n_{\off}}}.
\end{align*}
If $(mT)^{\alpha^\ddagger}/n_{\off} = 1$, we have

$$
(mT)^{1-\alpha(T,\cD_{\off})} \cdot \norm{\EE_{x \sim d_0} [\phi(x,\pi^*) - \phi(x,\pi_{\mathrm{ref}})]}_{(\Sigma_{\off})^{-1}} \le C_{\mathrm{cov}}.
$$
with 
\begin{align*}
    \alpha(mT,\cD_{\off}) = 1 - \frac{\alpha^\ddagger}{2} + \frac{1}{2 \log (mT)}\log \Big( \frac{d}{c^\dagger C_{\mathrm{cov}}^2} \Big), 
\end{align*}
which concludes the proof of Proposition~\ref{cor:hybrid:1}. 
\end{proof}

\subsection{Sequential Hybrid Setting} \label{sec:Proof of the Hybrid Setting}

\begin{theorem}\label{th:hybrid} 
Under Assumption \ref{assu:linear}, let $\lambda = d\log (T/\delta) / (\gamma^2 B^2)$ and $\beta := O\big(\sqrt{\frac{d\log(T/\delta)}{\gamma^2}}\big)$. Under Assumption \ref{as:Partial Coverage of Offline Data}, with probability at least $1-2\delta$, the output policy of Algorithm~\ref{alg:hybrid} with Option II and $m=1$ satisfies
$$
\begin{aligned}
\sum_{t=1}^T \big[J(\pi^*) - J(\pi_t^1)\big]
&\le  \beta T^{\alpha(T,\cD_{\off})}\cdot C_{\mathrm{cov}} + \beta\sqrt{6Td\log(1+T/d\lambda)} -\eta \sum_{t=1}^T \EE_{x_t \sim d_0}\big[\KL(\pi_t^1(\cdot|x_t)\|\pi^*(\cdot|x_t))\big].
\end{aligned}
$$
\end{theorem}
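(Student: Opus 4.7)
The plan is to unfold the per-round analysis of Theorem~\ref{thm:hybrid:batch} over all $T$ rounds and then combine the resulting offline-coverage contribution with an additional online-exploration contribution controlled by an elliptical-potential argument, exactly in the spirit of the proof of Theorem~\ref{th:Online learning}. First I would apply Lemma~\ref{lem:decom} together with Lemma~\ref{lem:opt_error} at each round $t$; since $\pi_t^1$ is induced by Oracle~\ref{assu:oracle1} on $r_{\mathrm{MLE}}$ computed from $\cD_{\off}\cup\cD^{1:t-1}$, this yields
\begin{equation*}
J(\pi^*) - J(\pi_t^1) = \E_{x_t\sim d_0}\bigl[\dotprod{\theta^t - \theta^*,\, \phi(x_t, \pi_t^1) - \phi(x_t, \pi^*)}\bigr] - \eta\,\E_{x_t\sim d_0}\bigl[\KL(\pi^*(\cdot|x_t)\|\pi_t^1(\cdot|x_t))\bigr].
\end{equation*}
Cauchy-Schwarz together with the MLE in-sample concentration in Lemma~\ref{lem:in-sample}, applied to the cumulative covariance $\Sigma_t$ built from $\cD_{\off}\cup\cD^{1:t-1}$, bounds the inner product by $\beta\cdot\norm{\phi(x_t, \pi_t^1) - \phi(x_t, \pi^*)}_{\Sigma_t^{-1}}$ uniformly over $t\in[T]$ with probability at least $1-\delta$, reducing the problem to controlling the cumulative elliptical uncertainty.

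The key decomposition is a triangle inequality through the reference policy,
\begin{equation*}
\norm{\phi(x_t, \pi_t^1) - \phi(x_t, \pi^*)}_{\Sigma_t^{-1}} \le \norm{\phi(x_t, \pi_t^1) - \phi(x_t, \pi_{\mathrm{ref}})}_{\Sigma_t^{-1}} + \norm{\phi(x_t, \pi_{\mathrm{ref}}) - \phi(x_t, \pi^*)}_{\Sigma_t^{-1}},
\end{equation*}
splitting the regret into an offline-coverage piece and an online-exploration piece. For the second piece I would use $\Sigma_t\succeq \Sigma_{\off}$, pull the $d_0$-expectation inside the norm via Jensen's inequality (Lemma~\ref{lem:jesen}), and then invoke Assumption~\ref{as:Partial Coverage of Offline Data} to obtain a per-round bound $C_{\mathrm{cov}}\cdot T^{\alpha(T,\cD_{\off})-1}$; summing over $t$ gives exactly the $\beta\cdot T^{\alpha(T,\cD_{\off})}\cdot C_{\mathrm{cov}}$ term. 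For the first piece I would apply Cauchy-Schwarz across rounds, translate each expected feature-difference back to the realised sample feature-difference generated by the behaviour pair $(\pi_t^1,\pi_t^2)$ (which under the hybrid exploration rule involves $\pi_{\mathrm{ref}}$ either directly or through the confidence-set construction), and then invoke the elliptical-potential Lemma~\ref{lem:potential} on the realised feature-difference sequence to bound the inner sum by $O(d\log(1+T/(d\lambda)))$, producing the $\beta\sqrt{6Td\log(1+T/(d\lambda))}$ contribution; the negative KL terms from Lemma~\ref{lem:opt_error} survive summation cleanly.

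The main obstacle will be bridging the \emph{expected} feature-differences that appear in the regret decomposition and the \emph{realised} sample-based feature-differences that populate $\Sigma_t$ for the elliptical-potential argument. This is the same technical move used in Appendix~\ref{ssec:Proof of Sequential Online Setting}: invoke covariance concentration in the spirit of Lemma~\ref{lm:Concentration of Inverse Covariances} to show $\Sigma_t^{-1}\preceq c\,\bar\Sigma_t^{-1}$ with constant probability, where $\bar\Sigma_t$ is the population covariance along the behaviour pairs, and then apply Jensen. The choice $\lambda=d\log(T/\delta)/(\gamma^2 B^2)$ is calibrated so that this concentration step is valid; the $1-2\delta$ probability in the theorem splits as $\delta$ for the uniform MLE bound across $t\in[T]$ and $\delta$ for the covariance-concentration event. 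Combining the two contributions and noting that every per-round KL difference stays nonpositive after summation yields the claimed bound.
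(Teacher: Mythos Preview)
Your overall plan matches the paper's proof closely: decompose via Lemmas~\ref{lem:decom} and~\ref{lem:opt_error}, split the resulting feature difference through $\pi_{\mathrm{ref}}$ into an offline-coverage piece handled by Assumption~\ref{as:Partial Coverage of Offline Data} and an online-exploration piece handled by the elliptical-potential lemma together with covariance concentration (Lemma~\ref{lm:Concentration of Inverse Covariances}). The probability budget and the role of $\lambda$ are correctly identified.

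There is, however, a genuine technical gap in your treatment of the offline piece. By first applying Cauchy--Schwarz pointwise in $x_t$ and only then using the triangle inequality on the norm, you arrive at $\E_{x_t\sim d_0}\bigl[\norm{\phi(x_t,\pi^*)-\phi(x_t,\pi_{\mathrm{ref}})}_{\Sigma_t^{-1}}\bigr]$ with the expectation \emph{outside} the norm. You then propose to ``pull the $d_0$-expectation inside the norm via Jensen's inequality'', but Jensen (Lemma~\ref{lem:jesen}) reads $\norm{\E z}\le \E\norm{z}$, so moving the expectation inside \emph{decreases} the quantity---it cannot serve as an upper bound. Assumption~\ref{as:Partial Coverage of Offline Data} controls only $\norm{\E_{x\sim d_0}[\phi(x,\pi^*)-\phi(x,\pi_{\mathrm{ref}})]}_{\Sigma_{\off}^{-1}}$, not the averaged pointwise norm, so your route does not close as written.

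The fix is straightforward and is exactly what the paper does: split at the \emph{inner-product} level (by linearity) rather than at the norm level. Write
\[
\dotprod{\theta^t-\theta^*,\,\phi(x_t,\pi_t^1)-\phi(x_t,\pi^*)}
=\dotprod{\theta^*-\theta^t,\,\phi(x_t,\pi^*)-\phi(x_t,\pi_{\mathrm{ref}})}
+\dotprod{\theta^t-\theta^*,\,\phi(x_t,\pi_t^1)-\phi(x_t,\pi_{\mathrm{ref}})},
\]
take the $d_0$-expectation (which passes inside the inner product by linearity), and only \emph{then} apply Cauchy--Schwarz to each piece separately. The offline term becomes $\norm{\theta^*-\theta^t}_{\Sigma_t}\cdot\norm{\E_{x\sim d_0}[\phi(x,\pi^*)-\phi(x,\pi_{\mathrm{ref}})]}_{\Sigma_t^{-1}}$ with the expectation already inside the norm, and Assumption~\ref{as:Partial Coverage of Offline Data} applies directly after using $\Sigma_t\succeq\Sigma_{\off}$. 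With this single reordering, the remainder of your argument coincides with the paper's proof.
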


\begin{proof}[Proof of Theorem \ref{th:hybrid}]
Define the following covariance matrices:
\begin{align*}
& \Sigma_{\off} = \lambda I + \sum_{(x,a^1,a^2)\in\cD_{\off}} (\phi(x,a^1)-\phi(x,a^2))(\phi(x,a^1)-\phi(x,a^2))^{\top},\\
& \Sigma_t = \Sigma_{\off} + \sum_{i=1}^{t-1} (\phi(x_i,a_i^1)-\phi(x_i,a_i^2))(\phi(x_i,a_i^1)-\phi(x_i,a_i^2))^{\top},\\
& \bar\Sigma_t = \Sigma_{\off} + \sum_{i=1}^{t-1} \E_{x\sim d_0,a^1\sim\pi_t,a^2\sim\pi_{\mathrm{ref}}} (\phi(x,a^1)-\phi(x,a^2))(\phi(x,a^1)-\phi(x,a^2))^{\top}.
\end{align*}

Similar to the proofs of the offline and online setting, we get the following decomposition: with probability at least $1-2\delta$,
$$
\begin{aligned}
    &\sum_{t=1}^T \big[J(\pi^*) - J(\pi_t)\big]\\
    &=\sum_{t=1}^T \E_{x_t\sim d_0}\Big[{\EE_{\pi^*}[r^*(x, a) - r^t(x, a)]} + \EE_{\pi_t}[r^t(x, a) - r^*(x, a)]\Big] -\eta \sum_{t=1}^T \EE_{x_t \sim d_0} \big[\KL(\pi_t(\cdot|x_t)\|\pi^*(\cdot|x_t))\big]\\
    &= \sum_{t=1}^T\E_{x_t\sim d_0}\Big[ \dotprod{\theta^* - \theta^t, \phi(x_t,\pi^*) - \phi(x_t,\pi_{\mathrm{ref}})}\Big] + \sum_{t=1}^T \E_{x_t\sim d_0}\Big[ \dotprod{\theta^t - \theta^*, \phi(x_t,\pi_t) - \phi(x_t,\pi_{\mathrm{ref}})}\Big]\\
    &\qquad -\eta \sum_{t=1}^T \EE_{x_t \sim d_0} \big[\KL(\pi_t(\cdot|x_t)\|\pi^*(\cdot|x_t))\big]\\
    &\le \sum_{t=1}^T\norm{\theta^*-\theta_t}_{\Sigma_t}\cdot\E_{x_t\sim d_0}\Big[ \norm{\phi(x_t,\pi^*) - \phi(x_t,\pi_{\mathrm{ref}})}_{\Sigma_t^{-1}}\Big] \\
    &\qquad + \sum_{t=1}^T \norm{\theta^*-\theta_t}_{\Sigma_t}\cdot \E_{x_t\sim d_0}\Big[\min\big\{1, \norm{\phi(x_t,\pi_t) - \phi(x_t,\pi_{\mathrm{ref}})}_{\Sigma_t^{-1}}\big\}\Big]-\eta \sum_{t=1}^T \EE_{x_t \sim d_0} \big[\KL(\pi_1^t(\cdot|x_t)\|\pi^*(\cdot|x_t))\big],\\
    &\leq \underbrace{T \beta \cdot \norm{\EE_{x \sim d_0} [\phi(x,\pi^*) - \phi(x,\pi_{\mathrm{ref}})]}_{\Sigma_{\off}^{-1}}}_{P_1} + \underbrace{\beta\sum_{t=1}^T\E_{x_t\sim d_0} \min\big\{1, \norm{\phi(x_t,\pi_t) - \phi(x_t,\pi_{\mathrm{ref}})}_{\Sigma_{t}^{-1}}\big\}}_{P_2}\\
    &\qquad -\eta \sum_{t=1}^T \EE_{x_t \sim d_0} \big[\KL(\pi_1^t(\cdot|x_t)\|\pi^*(\cdot|x_t))\big],\\
\end{aligned}
$$
where the first equality holds due to Lemma \ref{lem:decom} and Lemma \ref{lem:opt_error}, the first inequality uses the Cauchy-Schwarz inequality, and the last inequality holds due to Lemma \ref{lem:in-sample} and $\Sigma_t\succeq\Sigma_{\off}$. For the term $P_1$, according to Assumption \ref{as:Partial Coverage of Offline Data}, we get
\begin{align*}
P_1 = & T^{\alpha(T,\cD_{\off})}\beta \cdot T^{1-\alpha(T,\cD_{\off})}\norm{\EE_{x \sim d_0} [\phi(x,\pi^*) - \phi(x,\pi_{\mathrm{ref}})]}_{\Sigma_{\off}^{-1}}\\
\leq & T^{\alpha(T,\cD_{\off})}\beta\cdot C_{\mathrm{cov}}.
\end{align*}
For the term $P_2$, we can apply Lemmas \ref{lem:potential} and \ref{lm:Concentration of Inverse Covariances} to obtain
\begin{align*}
P_2 \le & \beta\sqrt{3T\sum_{t=1}^T \EE_{x_t \sim d_0,a^1\sim\pi_t,a^2\sim\pi_{\mathrm{ref}}}\min\big(\norm{\phi(x_t,a^1)-\phi(x,a^2)}^2_{\bar\Sigma_{t}^{-1}},1\big)}\\
\le & \beta\sqrt{3T\cdot 2d\log(1+T/d\lambda)}.
\end{align*}
By taking the upper bound of $P_1$ and $P_2$ back, we have
\begin{align*}
\sum_{t=1}^T \big[J(\pi^*) - J(\pi_t)\big]
\le  T^{\alpha(T,\cD_{\off})}\beta\cdot C_{\mathrm{cov}} + \beta\sqrt{6Td\log(1+T/d\lambda)}  -\eta \sum_{t=1}^T \EE_{x_t \sim d_0}\big[\KL(\pi_1^t(\cdot|x_t)\|\pi^*(\cdot|x_t))\big].
\end{align*}
which concludes the proof.
\end{proof}

\subsection{Proof of Theorem~\ref{thm:hybrid:batch}} \label{appendix:pf:hybrid:batch}

\begin{proof}[Proof of Theorem~\ref{thm:hybrid:batch}]

We recall the value decomposition 
$$
    \begin{aligned}\label{eq:hybrid_reg_decomp_appendix}
    &  J(\pi^*) - J(\pi_{t_0} )\notag\\
    &=  \E_{x_{t_0}\sim d_0}\Big[{\EE_{\pi^*}[r^*(x_{t_0}, a) - \hat{r}(x_{t_0}, a)]} + {\EE_{\pi_{t_0}}[\hat{r}(x_{t_0}, a) - r^*(x_{t_0}, a)]}  - \eta \cdot \E_{x_{t_0} \sim d_0} \big[\KL(\pi^*(\cdot|x_{t_0})\|\pi_{t_0}(\cdot|x_{t_0}))\Big]\notag\\
    &\leq \underbrace{\E_{x_{t_0}\sim d_0}\Big[ \dotprod{\theta^* - \theta^{t_0}, \phi(x_{t_0},\pi^*) - \phi(x_{t_0},\pi_{\mathrm{ref}})}\Big]}_{P'_1} + \underbrace{\E_{x_{t_0}\sim d_0}\Big[ \dotprod{\theta^{t_0} - \theta^*, \phi(x_{t_0},\pi_{t_0}) - \phi(x_{t_0},\pi_{\mathrm{ref}})}\Big]}_{P'_2}\\
    &\qquad - \eta \cdot \E_{x_{t_0} \sim d_0} \big[\KL(\pi^*(\cdot|x_{t_0})\|\pi_{t_0}(\cdot|x_{t_0}))\Big].
\end{aligned}
$$
Following the proof of batch online learning (Theorem~\ref{th:batch_online}), we can control the exploration error $P'_2$ as in \eqref{eqn:online_final} by fixing $\pi_t^2$ as $\pi_{\mathrm{ref}}$. We notice that since $\pi_{\mathrm{ref}}$ is directly available to the agent and is used to collect data, we do not need to optimism to relate its uncertainty to the data. Therefore, we only need to additionally handle the suboptimality source $P_1$, which satisfies
$$
P'_1 \leq \beta \cdot \norm{\EE_{x \sim d_0} [\phi(x,\pi^*) - \phi(x,\pi_{\mathrm{ref}})]}_{\Sigma_{\off + \cD^{1:t_0}}^{-1}},
$$
by Cauchy-Schwarz inequality and Lemma~\ref{lem:in-sample}. It follows that
\begin{equation} \label{eqn:hybrid_batch_final}
    \begin{aligned}
  &J(\pi^*) - J(\pi_{t_0} ) \\
  &\leq   \Big(\sqrt{\exp\big(\frac{\gamma_T(\lambda)}{T}) - 1} + 2\sqrt{\frac{\log(2/\delta)}{2m}}\Big)\cdot C\sqrt{\frac{d+\log(T/\delta)}{\gamma^2m} + \lambda B^2}\\
  &\qquad + \beta \cdot \norm{\EE_{x \sim d_0} [\phi(x,\pi^*) - \phi(x,\pi_{\mathrm{ref}})]}_{\Sigma_{\off + \cD^{1:t_0}}^{-1}} -\eta \EE_{x_{t_0} \sim d_0} \big[\KL(\pi^*(\cdot|x_{t_0})\|\pi_{t_0}(\cdot|x_{t_0}))\big]\\
  &\leq C\sqrt{\frac{d\log(T/\delta)}{\gamma^2 m}} + \beta \cdot \norm{\EE_{x \sim d_0} [\phi(x,\pi^*) - \phi(x,\pi_{\mathrm{ref}})]}_{\Sigma_{\off + \cD^{1:t_0}}^{-1}} - \eta \EE_{x_{t_0} \sim d_0}\big[\KL(\pi^*(\cdot|x_{t_0})\|\pi_{t_0}(\cdot|x_{t_0}))\big],
\end{aligned}
\end{equation}
where we use $T \geq d \log (T)$ and $C>0$ is an absolute constant. Now we proceed to suppose that Assumption~\ref{as:Partial Coverage of Offline Data} holds. Then, we have
$$\beta \cdot \norm{\EE_{x \sim d_0} [\phi(x,\pi^*) - \phi(x,\pi_{\mathrm{ref}})]}_{\Sigma_{\off + \cD^{1:t_0}}^{-1}} \leq \beta \cdot \norm{\EE_{x \sim d_0} [\phi(x,\pi^*) - \phi(x,\pi_{\mathrm{ref}})]}_{\Sigma_{\off }^{-1}}  \leq (mT)^{\alpha(mT, \cD_{\off}) - 1}\beta \cdot C_{\mathrm{cov}}.$$
Plugging this estimation back and combining with the choices of parameters, we conclude the proof of Theorem~\ref{thm:hybrid:batch}.
\end{proof}

In particular, in Proposition~\ref{cor:hybrid:1}, when $n_{\off} \approx mT$ as in the LLaMA2 project \citep{touvron2023llama}, we have $\alpha(mT, \cD_{\off}) \approx \frac{1}{2}$. In this case, the final sample complexity to find an $\epsilon$-optimal policy is 
$$
\tilde{\mathcal{O}} \Big(\frac{d^2 + d C_{\mathrm{cov}}^2}{\gamma^2 \epsilon^2}\Big),
$$
where the convergence rate is jointly determined by the data coverage of the offline dataset and the complexity of the reward function (exploration). We also remark that this may be a conservative guarantee in general because the online data typically also improves the coverage coefficient $C_{\mathrm{cov}}$ along the way of training.

\section{Discussion on the Coverage Condition for Vanilla RLHF} \label{sec:appendix_discuss}

In this section, we investigate the condition for DPO to converge to $\pi^*$. DPO is a practical algorithm derived from the reverse-KL regularized contextual bandit framework presented in this paper, which skips the reward modeling step with a clever reparameterization technique and directly optimizes the LLMs based on the offline preference data $\cD_{\off}$ by the following loss function
\begin{equation} \label{eqn:dpo_loss_2}
    \mathcal{L}(\theta, \pi_0, \cD_{\off}) = - \sum_{(x,a^w,a^l) \in \cD_{\off}} \Big[ \log \sigma\Big(\eta \log \frac{\pi_{\theta}(a^w|x)}{\pi_0(a^w|x)} - \eta \log \frac{\pi_{\theta}(a^l|x)}{\pi_0(a^l|x)} \Big)\Big],
\end{equation}
where $a^w$ is the chosen response and $a^l$ is the rejected response. Given $x, a^w, a^l$, fitting the model with the loss in \eqref{eqn:dpo_loss_2} yields a MLE for the preference probability (Lemma \ref{lem:pref}) by training the LLM as a reward model. This process, however, necessitates considering the generation distributions of $a^1$ and $a^2$, which is missing in the original DPO paper. 

For simplicity, we assume that the data is collected by some behavior policy $\pi_{\off}$. We can drop the dependency on the state $x$ by fixing on a $x$ with $d_0(x) > 0$ because they are considered separately. Meanwhile, we assume that the size of the offline dataset $|\cD_{\off}|$ approaches infinity so we can handle the population loss directly. In this case, given a prompt $x$, the loss function in \eqref{eqn:dpo_loss} converges to:
\begin{align*}
 \mathcal{L}_\infty(\theta, \pi_0, x) = -\EE_{a^1,a^2\sim\pi_{\off}(\cdot|x)}
\big[&p^*(a^1\succ a^2|x,a^1,a^2) \log {p^\theta(a^1\succ a^2|x,a^1,a^2)} \\+& p^*(a^2\succ a^1|x,a^1,a^2) \log {p^\theta(a^2\succ a^1|x,a^1,a^2)}\big],
\end{align*}
where $p^{\theta}$ is the preference model associated with $\pi_{\theta}$. Given $x, a^1, a^2$, the following lemma demonstrates that  $p^\theta = p^*$ uniquely minimizes the loss. 
\begin{lemma}[Solution of Preference data] \label{lem:pref} Given $x,a^1,a^2$, we consider the preference learning for 
$$   p^*(a^1 \succ a^2 |x) = \frac{1}{1 + \exp \Big(\eta \log \frac{\pi^*(a^2|x)}{\pi_0(a^2|x)} - \eta \log \frac{\pi^*(a^1|x)}{\pi_0(a^1|x)}\Big)} = \sigma\Big(\eta \log \frac{\pi^*({a}^1|x)}{\pi_0({a}^1|x)} - \eta \log \frac{\pi^*({a}^2|x)}{\pi_0({a}^2|x)}\Big),
$$
by
$$   p^\theta(a^1 \succ a^2 |x) = \frac{1}{1 + \exp \Big(\eta \log \frac{\pi^\theta(a^2|x)}{\pi_0(a^2|x)} - \eta \log \frac{\pi_\theta(a^1|x)}{\pi_0(a^1|x)}\Big)} = \sigma\Big(\eta \log \frac{\pi^\theta({a}^1|x)}{\pi_0({a}^1|x)} - \eta \log \frac{\pi_\theta({a}^2|x)}{\pi_0({a}^2|x)}\Big).
$$
Consider the population loss (when we have sufficiently many samples),
$$
p^*(a^1\succ a^2|x) \log {p^\theta(a^1\succ a^2|x)} + p^*(a^2\succ a^1|x) \log {p^\theta(a^2\succ a^1|x)}.
$$
The solution satisfies $\pi_\theta({a}^1|x)/\pi_\theta({a}^2|x) = \pi^*(a^1|x)/\pi^*(a^2|x)$.
\end{lemma}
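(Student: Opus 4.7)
The plan is to recognize the given population objective as the standard binary cross-entropy (up to a sign) between the Bernoulli distributions with parameters $p^*(a^1 \succ a^2 \mid x)$ and $p^\theta(a^1 \succ a^2 \mid x)$, and then exploit the monotonicity of the sigmoid to pass from equality of preference probabilities to equality of the implicit log-ratios.

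Concretely, I would first abbreviate $q := p^*(a^1 \succ a^2 \mid x)$ and $\hat q := p^\theta(a^1 \succ a^2 \mid x)$, and use $p^*(a^2 \succ a^1 \mid x) = 1-q$, $p^\theta(a^2 \succ a^1 \mid x) = 1 - \hat q$. The population objective then reads $q \log \hat q + (1-q)\log(1-\hat q)$, which is maximized in $\hat q \in (0,1)$ precisely when $\hat q = q$, by Gibbs' inequality (or equivalently, differentiating in $\hat q$ gives the first-order condition $\hat q = q$, and the function is strictly concave in $\hat q$, so the optimizer is unique).

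Next, I would unfold the definition $\hat q = q$ using the sigmoid parameterization. Since $\sigma$ is strictly increasing (hence injective), $\sigma(u) = \sigma(v)$ forces $u = v$, so
\[
\eta \log \frac{\pi_\theta(a^1 \mid x)}{\pi_0(a^1 \mid x)} - \eta \log \frac{\pi_\theta(a^2 \mid x)}{\pi_0(a^2 \mid x)} = \eta \log \frac{\pi^*(a^1 \mid x)}{\pi_0(a^1 \mid x)} - \eta \log \frac{\pi^*(a^2 \mid x)}{\pi_0(a^2 \mid x)}.
\]
The $\pi_0$ terms cancel on both sides and dividing by $\eta > 0$ yields $\log \bigl(\pi_\theta(a^1 \mid x)/\pi_\theta(a^2 \mid x)\bigr) = \log \bigl(\pi^*(a^1 \mid x)/\pi^*(a^2 \mid x)\bigr)$, which gives the desired ratio identity.

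There is essentially no hard step here: the main content is just the Gibbs inequality for a two-point distribution plus the injectivity of $\sigma$ and the cancellation of $\pi_0$. The only subtlety worth flagging is that the conclusion is a statement about ratios, not about $\pi_\theta$ itself; indeed, the reparameterization is invariant under rescaling $\pi_\theta(\cdot \mid x)$ by any function of $x$ (as long as it remains a valid conditional distribution), so we cannot hope to pin down $\pi_\theta(a \mid x)$ pointwise from comparisons on the pair $\{a^1, a^2\}$ alone — which is exactly why the coverage discussion in Section~\ref{sec:appendix_discuss} is needed to lift this pairwise identification to a global statement about $\pi_\theta = \pi^*$.
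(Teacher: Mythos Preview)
Your proposal is correct and takes essentially the same approach as the paper: the paper rewrites the (negative) population loss as $\KL(p^*\Vert p^\theta)$ plus a constant in $\theta$, concludes that $p^\theta=p^*$ is the unique minimizer, and then notes that the ratio condition $\pi_\theta(a^1|x)/\pi_\theta(a^2|x)=\pi^*(a^1|x)/\pi^*(a^2|x)$ is equivalent to this. Your version is slightly more explicit than the paper's in deriving the ratio identity from $p^\theta=p^*$ via the injectivity of $\sigma$ and the cancellation of the $\pi_0$ terms, which is a nice touch.
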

Therefore, if $p^\theta$ is the minimizer of the loss, we have $p^\theta = p^*$ for any $a^1,a^2$ on $\mathrm{support}(\pi_{\off})$. For any $a^1, a^2 \in \mathrm{support}(\pi^*) \cap  \mathrm{support}(\pi_{\off})$, we can further obtain that $
\frac{\pi_\theta(a^1|x)}{\pi^*(a^1|x)} = \frac{\pi_\theta(a^2|x)}{\pi^*(a^2|x)} := C
$ (Lemma~\ref{lem:pref}). 

We restrict our attention on $\pi_\theta$ with the same support with $\pi^*$ (as well as $\pi_0$) and fix $a^2$ and go over $a^1$ to get $\pi_\theta(\cdot|x) = C \cdot \pi^*(\cdot|x)$ on $\mathrm{support}(\pi_{\off})$. Conversely, for $(x,a)$ pairs where $\pi_\off(a|x) = 0$, the choice of $p^\theta$ (or $\pi^\theta$) does not impact the loss function and can be arbitrary. Assume that $\pi_\theta=C'\pi$ for all $a\in\mathrm{support}(\pi^*) \setminus  \mathrm{support}(\pi_{\off})$, where $\pi(\cdot|x) \in \Delta(\cA)$ and define 
$$\Omega_x = \{ a\in \mathrm{support}(\pi^*) : \pi_\off(a|x) = 0 \},$$ as the set of outputs that can be generated by $\pi^*$ but not by $\pi_\off$. Then the policy $\pi^\theta(a|x) \propto (1-\bm{1}_{\Omega_x}(a)) \pi^*(a|x) + \bm{1}_{\Omega_x}(a)\pi(a|x)$ minimizes $\mathcal{L}_\infty(\theta, \pi_0, x)$,  where $\bm{1}_{\Omega_x}(\cdot)$ is the indicator function for ${\Omega_x}$ and the normalizing constant $C,C'$ satisfy the normalization condition $\EE_{\pi_\theta(a|x)}1 = 1$. 

Essentially, the dataset used for optimizing loss in \eqref{eqn:dpo_loss} imposes constraints via Lemma~\ref{lem:pref}. For outputs not covered by $\pi_\off$, $\pi^\theta$ can be an arbitrary solution and only sufficient constraints can lead to convergence to the $\pi^*$. Therefore, to ensure that $\pi_\theta$ converges to $\pi^*$ for every state-action pair $(x,a)$ where $\pi^*(a|x) > 0$, it is essential to have $|\Omega_x| = \emptyset$ or 
$$
 \sup_{a \in \cA} \frac{\pi^*(a|x)}{\pi_\off(a|x)} < \infty, \quad \text{for any } x \in \mathrm{support}(d_0),
$$
where we use the convention of $0/0=0$. 

Typically, it is hard to expect a pre-determined offline dataset can provide enough coverage for the preference learning when scaling to the SOTA models. Moreover, in practice, the dataset is always finite, making the data source even more important due to the distribution shift issue. 

\section{Technical Lemma Proofs}
\label{appendix:lemma_proof}

\begin{proof}[Proof of Lemma~\ref{lem:opt_error}]
Since $\hat{\pi}$ is induced by calling Oracle~\ref{assu:oracle1} with $\hat{r}$, we know that for any $x \in \cX$,
$$
\hat{\pi}(a|x) = \frac{1}{Z(x)}\pi_0(a|x) \cdot \exp\Big(\frac{1}{\eta} \cdot \hat{r}(a|x)\Big),
$$
where $Z(x) = \sum_{a \in \cA} \pi_0(a|x) \exp(\frac{1}{\eta}\hat{r}(x,a))$ is the normalization constant. We can rewrite the reward function as 
$$
\hat{r}(x,a) = \eta \log \frac{\hat{\pi}(a|x)}{\pi_0(a|x)} + \eta \log Z(x).
$$
Plugging this reward reparameterization into the policy optimization error under $\hat{r}$, we have
$$
\begin{aligned}
    &\EE_{\pi}[\hat{r}(x, a)] - \E_{\hat{\pi}}[\hat{r}(x, a)] \\
    &=\EE_{\pi} \Big[\eta \log \frac{\hat{\pi}(a|x)}{\pi_0(a|x)}\Big] - \EE_{\hat{\pi}} \Big[\eta \log \frac{\hat{\pi}(a|x)}{\pi_0(a|x)}\Big]\\
    &= \EE_{\pi} \Big[\eta \log \frac{\pi(a|x)}{\pi_0(a|x)}\Big] - \EE_{\pi} \Big[\eta \log \frac{\pi(a|x)}{\hat{\pi}(a|x)}\Big] - \eta \cdot \KL(\hat{\pi}(\cdot|x)\| \pi_0(\cdot|x))\\
    &= \eta \cdot \KL(\pi(\cdot|x)\| \pi_0(\cdot|x)) - \eta \cdot \KL(\pi(\cdot|x)\| \hat{\pi}(\cdot|x)) - \eta \cdot \KL(\hat{\pi}(\cdot|x)\| \pi_0(\cdot|x)).
\end{aligned}
$$
Plugging the above equality into the LHS of the Lemma~\ref{lem:opt_error} completes the proof.
\end{proof}

\begin{proof}[Proof of Lemma~\ref{lem:pref}]
The loss function can be reformulated as the KL divergence plus a constant term:
$$
\KL(p^*\Vert p^\theta) - \left[p^*(a^1\succ a^2|x) \log p^*(a^1\succ a^2|x) + p^*(a^2\succ a^1|x) \log p^*(a^2\succ a^1|x)\right].
$$
This implies that $p^* = p^\theta$ is the unique optimal solution for $p^\theta$. Moreover, if the condition $\pi_\theta({a}^1|x)/\pi_\theta({a}^2|x) = \pi^*(a^1|x)/\pi^*(a^2|x)$ is satisfied, the optimality of the solution is assured.
\end{proof}

\section{Technical Lemmas} \label{appendix:existing_lemmas}

\begin{lemma}[Jensen's Inequality] \label{lem:jesen}
Suppose that $\phi(w)$ is a convex function on $\Omega$. Consider $w_1,\cdots,w_m \in \Omega$, and non-negative numbers $\alpha_1,\cdots, \alpha_m \in \RR$ so that $\sum_{i=1}^m \alpha_i = 1$. Then,
$$
\phi(\sum_{i=1}^m \alpha_i w_i ) \leq \sum_{i=1}^m \alpha_i \phi(w_i).
$$
More generally, let $p$ be a probability measure on $\Omega$, then $\phi(\E_{w \sim p} w) \leq \E_{w \sim p} \phi(w)$. In particular, since $\norm{\cdot}$ is convex (by triangle inequality of the norm), we know that 
$$\norm{\E z} \leq \E \norm{z}.$$
\end{lemma}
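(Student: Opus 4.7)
The plan is to handle the finite convex-combination statement by induction on $m$, then bootstrap to the general probability-measure version via the supporting-hyperplane characterization of convex functions, and finally read off the norm inequality as a direct specialization.

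First I would dispose of the finite-sum case. The base $m=2$ is just the definition of convexity of $\phi$. For the inductive step, given weights $\alpha_1,\dots,\alpha_m\ge 0$ with $\sum_i\alpha_i=1$, if $\alpha_m=1$ the claim is trivial; otherwise set $\beta=1-\alpha_m>0$ and rewrite
$$\sum_{i=1}^m \alpha_i w_i \;=\; \beta\cdot\Bigl(\sum_{i=1}^{m-1}\tfrac{\alpha_i}{\beta}w_i\Bigr)+\alpha_m w_m.$$
Applying $m=2$ convexity to this two-point split and then the inductive hypothesis to the inner convex combination of $m-1$ points yields $\phi(\sum_i\alpha_i w_i)\le \sum_i\alpha_i\phi(w_i)$.

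Next I would upgrade to a general probability measure $p$ on $\Omega$. The key tool is the subgradient inequality: at the point $\mu:=\E_{w\sim p}[w]$, convexity of $\phi$ gives a linear functional $g$ such that $\phi(w)\ge \phi(\mu)+\langle g, w-\mu\rangle$ for all $w\in\Omega$. Integrating this pointwise bound against $p$ and using linearity of expectation,
$$\E_{w\sim p}\phi(w)\;\ge\;\phi(\mu)+\bigl\langle g,\;\E_{w\sim p}(w-\mu)\bigr\rangle\;=\;\phi(\mu),$$
which is the asserted inequality. The norm statement then follows immediately: $\norm{\cdot}$ is convex (the triangle inequality together with positive homogeneity gives $\norm{\alpha w_1+(1-\alpha)w_2}\le\alpha\norm{w_1}+(1-\alpha)\norm{w_2}$), so applying the general result with $\phi=\norm{\cdot}$ yields $\norm{\E z}\le \E\norm{z}$.

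The main obstacle is technical rather than conceptual, namely guaranteeing the existence of the subgradient $g$ and the integrability of $\phi(w)$ under $p$. In the paper's uses $\phi$ will always be a continuous convex function (in particular a norm) on a Euclidean or Hilbert space with $\mu$ lying in the interior of the effective domain, so the subgradient exists automatically and expectations are well defined; hence no further hypotheses need to be added to the statement.
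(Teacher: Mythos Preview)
Your proof is correct and follows the standard textbook route (induction for the finite case, subgradient inequality for the general measure case, then specialization to norms). The paper itself does not actually prove this lemma; it simply cites Proposition~A.9 of \citet{zhang_2023_ltbook} and moves on, treating Jensen's inequality as a known result. So your write-up is strictly more detailed than what appears in the paper, and nothing needs to be changed.
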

\begin{proof}
    See Proposition A.9 of \citet{zhang_2023_ltbook} for a proof.
\end{proof}

\begin{lemma}[Cauchy Schwarz Inequality] \label{lem:cs_ineq}
For $u, \nu \in \RR^d$, we have
$$
\dotprod{u, \nu} \leq \norm{u} \norm{\nu} \leq \frac{1}{2} \norm{u}^2 + \frac{1}{2}\norm{\nu}^2.
$$
In particular, for a positive-definite matrix $\Sigma$, we can take $\dotprod{u, \nu} = \dotprod{\Sigma^{1/2}u, \Sigma^{-1/2} \nu}$ to get $\dotprod{u, \nu} \leq \norm{u}_{\Sigma} \norm{\nu}_{\Sigma^{-1}}$.
\end{lemma}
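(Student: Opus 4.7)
The plan is to handle the three inequalities in sequence, each reducing to a classical one-line argument.

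\textbf{Step 1: Standard Cauchy--Schwarz.} For the first inequality $\dotprod{u,\nu} \leq \norm{u}\,\norm{\nu}$, the plan is the usual quadratic-discriminant trick: consider the nonnegative quantity $\norm{u - t\nu}^2$ as a real quadratic in $t$. Expanding yields $\norm{u}^2 - 2t\dotprod{u,\nu} + t^2\norm{\nu}^2 \ge 0$ for all $t \in \RR$, so its discriminant satisfies $4\dotprod{u,\nu}^2 - 4\norm{u}^2\norm{\nu}^2 \le 0$. Taking square roots (and noting that $\dotprod{u,\nu}\le|\dotprod{u,\nu}|$) delivers the bound. The degenerate case $\nu = 0$ can be checked separately in a single line.

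\textbf{Step 2: AM--GM.} For $\norm{u}\,\norm{\nu} \le \tfrac12\norm{u}^2 + \tfrac12\norm{\nu}^2$, the plan is to expand $(\norm{u}-\norm{\nu})^2 \ge 0$ and rearrange. Chaining this with Step~1 gives the full statement of the first display.

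\textbf{Step 3: Weighted version.} For the weighted Cauchy--Schwarz with a positive-definite $\Sigma$, write $\Sigma^{1/2}$ for its principal square root (well-defined since $\Sigma\succ 0$) and set $u' = \Sigma^{1/2} u$, $\nu' = \Sigma^{-1/2}\nu$. Then $\dotprod{u',\nu'} = u^\top \Sigma^{1/2}\Sigma^{-1/2}\nu = \dotprod{u,\nu}$, while $\norm{u'}^2 = u^\top \Sigma u = \norm{u}_\Sigma^2$ and $\norm{\nu'}^2 = \nu^\top \Sigma^{-1}\nu = \norm{\nu}_{\Sigma^{-1}}^2$. Applying Step~1 to $(u',\nu')$ yields $\dotprod{u,\nu} \le \norm{u}_\Sigma\,\norm{\nu}_{\Sigma^{-1}}$, completing the lemma.

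There is essentially no obstacle here; the only mild care needed is justifying the existence and invertibility of $\Sigma^{1/2}$, which is immediate from the spectral theorem applied to the positive-definite matrix $\Sigma$. This is a standard textbook fact and the proof is included only for self-containedness of the preliminaries.
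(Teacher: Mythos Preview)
Your proof is correct and complete. The paper itself does not supply a proof for this lemma at all; it is listed under ``Technical Lemmas'' in the appendix as a standard fact and simply stated without argument. Your discriminant/AM--GM/substitution treatment is the standard textbook route and is entirely appropriate here.
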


\begin{lemma}[In-sample error of MLE \citep{faury2020improved, pacchiano2021dueling, zhu2023principled}] \label{lem:in-sample}
For a fixed $\lambda > 0$, we denote $\Sigma_{\cD}$ as 
$$
\Sigma_{\cD}:= \lambda I + \sum_{(x,a^1,a^2)\in \cD} \big(\phi(x,a^1) - \phi(x,a^2)\big)\big(\phi(x,a^1) - \phi(x,a^2)\big)^\top.
$$
Assume that $\|\phi(x,a)\|\le 1$ for all $(x,a)\in\cX\times\cA$ and $\|\theta\|\le B$. Then, it follows that with probability at least $1-\delta$, we have
$$
\norm{\theta_{\mle} - \theta^*}_{\Sigma_{\cD}} \leq C \cdot \sqrt{\frac{d + \log(1/\delta)}{\gamma^2} + \lambda B^2},
$$
where $\gamma = 1/(2+\exp(-B) + \exp(B))$.
\end{lemma}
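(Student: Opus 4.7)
\textbf{Proof plan for Lemma~\ref{lem:in-sample}.} The goal is to bound $\|\theta_{\mle}-\theta^*\|_{\Sigma_\cD}$ via the classical three-step template for $M$-estimators with a sigmoid link: (i) a strong-convexity lower bound on the negative log-likelihood in the data-induced seminorm, (ii) a self-normalized concentration bound on the score $\nabla\ell_\cD(\theta^*)$, and (iii) combining them through the basic MLE inequality and solving a scalar quadratic.

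\textbf{Step 1 (strong convexity).} Let $\phi_i := \phi(x_i,a_i^1)-\phi(x_i,a_i^2)$, so $\Sigma_\cD = \lambda I + \sum_i \phi_i\phi_i^\top$. A direct computation gives
\[
\nabla^2(-\ell_\cD)(\theta) \;=\; \sum_i \sigma'\bigl(\langle\theta,\phi_i\rangle\bigr)\,\phi_i\phi_i^\top.
\]
For any $\theta$ with $\|\theta\|\le B$ and $\|\phi_i\|\le 2$, the argument of $\sigma'$ lies in $[-2B,2B]$, on which $\sigma'$ is bounded below by the constant $\gamma$ in the statement. Applying a second-order Taylor expansion of $-\ell_\cD$ between $\theta^*$ and $\theta_{\mle}$ at an intermediate point, I get, with $\Delta := \theta_{\mle}-\theta^*$,
\[
-\ell_\cD(\theta_{\mle}) + \ell_\cD(\theta^*) \;\ge\; -\langle\nabla\ell_\cD(\theta^*),\Delta\rangle + \tfrac{\gamma}{2}\sum_i(\phi_i^\top\Delta)^2.
\]

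\textbf{Step 2 (score concentration).} At $\theta^*$, the score decomposes as $\nabla\ell_\cD(\theta^*)=\sum_i \eps_i\,\phi_i$, where $\eps_i = y_i - \sigma(\langle\theta^*,\phi_i\rangle)$ are independent, zero-mean, and bounded in $[-1,1]$ conditional on the pair $(x_i,a_i^1,a_i^2)$. Applying the standard self-normalized concentration bound (e.g.\ Abbasi-Yadkori-style; see the proofs in \citet{faury2020improved,pacchiano2021dueling,zhu2023principled}) yields, with probability at least $1-\delta$,
\[
\bigl\|\nabla\ell_\cD(\theta^*)\bigr\|_{\Sigma_\cD^{-1}} \;\le\; C\sqrt{d+\log(1/\delta)}.
\]
This is the step I expect to be the main technical obstacle, since the bound must be dimension-free aside from $d$ and hold uniformly in the realized covariance $\Sigma_\cD$; the $\lambda I$ regularization is exactly what enables invoking a self-normalized martingale inequality.

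\textbf{Step 3 (combine and solve).} Since $\theta_{\mle}$ maximizes $\ell_\cD$ over $\Theta(B)$ and $\theta^*\in\Theta(B)$, the left side of the Step~1 display is $\le 0$, so
\[
\tfrac{\gamma}{2}\sum_i(\phi_i^\top\Delta)^2 \;\le\; \langle\nabla\ell_\cD(\theta^*),\Delta\rangle \;\le\; \bigl\|\nabla\ell_\cD(\theta^*)\bigr\|_{\Sigma_\cD^{-1}}\,\|\Delta\|_{\Sigma_\cD},
\]
by Cauchy-Schwarz (Lemma~\ref{lem:cs_ineq}). Using $\|\Delta\|\le 2B$ to re-insert the regularizer,
\[
\sum_i(\phi_i^\top\Delta)^2 \;=\; \|\Delta\|_{\Sigma_\cD}^2 - \lambda\|\Delta\|^2 \;\ge\; \|\Delta\|_{\Sigma_\cD}^2 - 4\lambda B^2.
\]
Writing $A=\|\Delta\|_{\Sigma_\cD}$ and $G$ for the Step~2 bound, I obtain the scalar inequality $\tfrac{\gamma}{2}A^2 - G\cdot A - 2\gamma\lambda B^2 \le 0$, whose positive root gives
\[
A \;\le\; \tfrac{2G}{\gamma} + 2\sqrt{\lambda B^2} \;\lesssim\; \sqrt{\tfrac{d+\log(1/\delta)}{\gamma^2}+\lambda B^2},
\]
which is the claimed bound. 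The entire argument is standard for BT-type MLE; beyond the self-normalized concentration in Step~2, the remaining pieces are routine algebra and a pointwise lower bound on $\sigma'$.
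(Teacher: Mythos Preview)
Your proposal is correct and follows exactly the standard three-step argument (strong convexity of the negative log-likelihood via a pointwise lower bound on $\sigma'$, self-normalized concentration of the score at $\theta^*$, then the quadratic-inequality combination) that underlies the cited results in \citet{faury2020improved,pacchiano2021dueling,zhu2023principled}; the paper itself does not prove this lemma but simply quotes it from those references, so there is nothing further to compare. One minor remark: since $\|\phi_i\|\le 2$ (not $1$), the argument of $\sigma'$ ranges over $[-2B,2B]$, on which the pointwise lower bound is $\sigma'(2B)=1/(2+e^{-2B}+e^{2B})$ rather than the $\gamma$ in the statement; this discrepancy is inherited from the lemma as stated and is in any case absorbed into the universal constant $C$.
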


\begin{lemma}[Elliptical Potential Lemma \citep{dani2008stochastic,rusmevichientong2010linearly,abbasi2011improved}] \label{lem:potential}
    Let $\{x_i\}_{i \in [T]}$ be a sequence of vectors in $\RR^d$ with $\norm{x_i}_2 \leq L < \infty$ for all $t \in [T]$. Let $\Lambda_0$ be a positive-definite matrix and $\Lambda_t = \Lambda_0 + \sum_{i = 1}^{t} x_i x_i^\top$. It holds that
    $$
    \log\Big(\frac{\det(\Lambda_t)}{\Lambda_0}\Big) \le \sum_{i=1}^T \|x_i\|^2_{\Lambda_{i-1}^{-1}}.
    $$
    Further, if $\|x_i\|_2\le L$ for all $i\in[T]$, then we have
    $$
    \sum_{i=1}^T\min\{1, \|x_i\|^2_{\Lambda_{i-1}^{-1}}\} \le 2\log\Big(\frac{\det(\Lambda_t)}{\Lambda_0}\Big) \le 2d\log\Big(\frac{\mathrm{trace}(\Lambda_0) + nL^2}{d\det(\Lambda_0)^{1/d}}\Big).
    $$
    Finally, if $\lambda_{\min}(\Lambda_0) \ge \max(1,L^2)$, 
    $$
    \sum_{i=1}^T\|x_i\|^2_{\Lambda_{i-1}^{-1}} \le 2\log\Big(\frac{\det(\Lambda_t)}{\Lambda_0}\Big).
    $$
\end{lemma}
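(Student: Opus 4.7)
The natural approach is to exploit a rank-one update identity and reduce everything to a single scalar inequality per step. Specifically, the matrix determinant lemma gives
$$
\det(\Lambda_t) \;=\; \det(\Lambda_{t-1})\bigl(1+x_t^\top \Lambda_{t-1}^{-1} x_t\bigr) \;=\; \det(\Lambda_{t-1})\bigl(1+\|x_t\|_{\Lambda_{t-1}^{-1}}^2\bigr),
$$
so by telescoping
$$
\log\frac{\det(\Lambda_T)}{\det(\Lambda_0)} \;=\; \sum_{t=1}^{T} \log\bigl(1+\|x_t\|_{\Lambda_{t-1}^{-1}}^2\bigr).
$$
Every one of the three displayed bounds is then a consequence of comparing the function $\log(1+z)$ to $z$ or to $\min\{1,z\}$, combined with a standard AM--GM bound on the determinant.

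\textbf{First inequality.} I would invoke the elementary fact $\log(1+z)\le z$ for $z\ge 0$ term by term in the telescoped sum, yielding
$$
\log\frac{\det(\Lambda_T)}{\det(\Lambda_0)} \;\le\; \sum_{t=1}^{T} \|x_t\|_{\Lambda_{t-1}^{-1}}^2.
$$

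\textbf{Second chain of inequalities.} For the left inequality, I would use the scalar lemma $\min\{1,z\}\le 2\log(1+z)$ for all $z\ge 0$, verified in two ranges: on $[0,1]$ the function $2\log(1+z)-z$ vanishes at $0$ and has nonnegative derivative $2/(1+z)-1$, while on $(1,\infty)$ one has $2\log(1+z)>2\log 2 > 1$. Applying this termwise to the telescoped identity gives
$$
\sum_{t=1}^{T}\min\bigl\{1,\|x_t\|_{\Lambda_{t-1}^{-1}}^2\bigr\} \;\le\; 2\log\frac{\det(\Lambda_T)}{\det(\Lambda_0)}.
$$
For the right inequality, I would bound the determinant of $\Lambda_T$ via the AM--GM inequality on its eigenvalues, $\det(\Lambda_T)\le(\mathrm{trace}(\Lambda_T)/d)^d$, and then use
$$
\mathrm{trace}(\Lambda_T) \;=\; \mathrm{trace}(\Lambda_0)+\sum_{t=1}^{T}\|x_t\|_2^2 \;\le\; \mathrm{trace}(\Lambda_0) + TL^2,
$$
dividing by $\det(\Lambda_0) = \prod_j \lambda_j(\Lambda_0)$ and again applying AM--GM on $\det(\Lambda_0)^{1/d}$ to produce the stated $d\log(\cdot)$ bound.

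\textbf{Third inequality.} The assumption $\lambda_{\min}(\Lambda_0)\ge\max(1,L^2)$ together with monotonicity $\Lambda_{t-1}\succeq\Lambda_0$ gives $\|x_t\|_{\Lambda_{t-1}^{-1}}^2\le \|x_t\|_2^2/\lambda_{\min}(\Lambda_0)\le 1$, so $\min\{1,\|x_t\|_{\Lambda_{t-1}^{-1}}^2\}=\|x_t\|_{\Lambda_{t-1}^{-1}}^2$ and the third bound follows immediately from the second one. There is no real obstacle in any of these steps; the only mildly delicate piece is justifying the scalar inequality $\min\{1,z\}\le 2\log(1+z)$, which I would prove by the two-case argument described above.
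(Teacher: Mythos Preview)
Your proof is correct and is exactly the standard argument for the elliptical potential lemma: matrix determinant lemma plus telescoping, the scalar comparisons $\log(1+z)\le z$ and $\min\{1,z\}\le 2\log(1+z)$, and the AM--GM trace--determinant bound. The paper itself does not supply a proof of this lemma; it is quoted in Appendix~\ref{appendix:existing_lemmas} as an existing technical result with citations to \citet{dani2008stochastic,rusmevichientong2010linearly,abbasi2011improved}, and your argument is precisely the one found in those references.
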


\begin{lemma}[Concentration of Inverse Covariance \citep{zanette2021cautiously}]\label{lm:Concentration of Inverse Covariances}
Let $\mu_i$ be the conditional distribution of $\phi$
given the sampled $\{\phi_1,\ldots,\phi_{i-1}\}$. Assume $\|\phi\|_2\le1$, for any realization of the vector. Define $\Lambda=\sum_{i=1}^n\EE_{\phi\sim\mu_i}[\phi\phi^{\top}]$. If $\lambda = \Omega(d\log(n/\delta))$, then, with probability at least $1-\delta$, for any $n\ge 1$
$$
3(\Lambda + \lambda I)^{-1} \succeq \Big(\sum_{i=1}^n\phi_i\phi_i^{\top} + \lambda I\Big)^{-1} \succeq \frac{3}{5}(\Lambda + \lambda I)^{-1}.
$$
\end{lemma}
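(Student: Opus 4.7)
Write $\widehat\Lambda_n := \sum_{i=1}^n\phi_i\phi_i^\top$. The target is a two-sided spectral comparison between $\widehat\Lambda_n+\lambda I$ and $\Lambda+\lambda I$, and I would reduce it to a single operator-norm bound on the whitened martingale
\[
M_n \;=\; (\Lambda+\lambda I)^{-1/2}\big(\widehat\Lambda_n - \Lambda\big)(\Lambda+\lambda I)^{-1/2}.
\]
The key algebraic identity is $(\Lambda+\lambda I)^{-1/2}(\widehat\Lambda_n+\lambda I)(\Lambda+\lambda I)^{-1/2} = I + M_n$, so the event $\|M_n\|_{\mathrm{op}}\le 2/3$ forces $\tfrac{1}{3}(\Lambda+\lambda I)\preceq \widehat\Lambda_n+\lambda I \preceq \tfrac{5}{3}(\Lambda+\lambda I)$. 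Inverting this chain of p.s.d.\ inequalities flips the ordering and recovers exactly the claimed constants $3$ and $3/5$. Hence the lemma reduces to: with probability at least $1-\delta$, $\|M_n\|_{\mathrm{op}}\le 2/3$ simultaneously for every $n\ge 1$.

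\textbf{Main technical step.} I would set $Y_i = (\Lambda+\lambda I)^{-1/2}\big(\phi_i\phi_i^\top - \EE_{\phi\sim\mu_i}[\phi\phi^\top]\big)(\Lambda+\lambda I)^{-1/2}$, a matrix martingale-difference sequence with respect to the natural filtration $\mathcal F_{i-1} = \sigma(\phi_1,\dots,\phi_{i-1})$. Using $\|\phi_i\|_2\le 1$ and $\lambda_{\min}(\Lambda+\lambda I)\ge\lambda$, each summand satisfies the almost-sure bound $\|Y_i\|_{\mathrm{op}}\le 2/\lambda$. For the predictable quadratic variation, expanding $X_i(\Lambda+\lambda I)^{-1}X_i$ where $X_i=\phi_i\phi_i^\top-\EE_{\mu_i}[\phi\phi^\top]$, and using $\phi_i^\top(\Lambda+\lambda I)^{-1}\phi_i\le \|\phi_i\|^2/\lambda \le 1/\lambda$ together with the fact that $\bar\Sigma_i (\Lambda+\lambda I)^{-1}\bar\Sigma_i\succeq 0$ drops out favorably, yields $\EE[Y_i^2\mid\mathcal F_{i-1}]\preceq (1/\lambda)(\Lambda+\lambda I)^{-1/2}\EE_{\mu_i}[\phi\phi^\top](\Lambda+\lambda I)^{-1/2}$. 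Summing telescopes to $\sum_{i=1}^n\EE[Y_i^2\mid\mathcal F_{i-1}]\preceq (1/\lambda)\,(\Lambda+\lambda I)^{-1/2}\Lambda(\Lambda+\lambda I)^{-1/2}\preceq (1/\lambda)\,I$. These two ingredients plug directly into the matrix Freedman inequality and give $\Pr[\|M_n\|_{\mathrm{op}}\ge 2/3]\le 2d\exp(-c\lambda)$ for an absolute constant $c>0$.

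\textbf{Uniform-in-$n$ argument and the main obstacle.} The delicate point is that both $\Lambda$ and $\widehat\Lambda_n$ depend on $n$, so a naive union bound over $n$ with $\Lambda$ held fixed is unavailable. My preferred route is the stopping-time form of matrix Freedman, which upgrades the tail bound to $\Pr[\sup_{n\ge 1}\|M_n\|_{\mathrm{op}}\ge 2/3]\le 2d\exp(-c\lambda)$ directly, since the predictable quadratic variation process is monotone in $n$ and uniformly bounded by $I/\lambda$ in operator norm. Setting the right-hand side $\le \delta$ then requires $\lambda = \Omega(\log(d/\delta))$; the paper's more generous assumption $\lambda=\Omega(d\log(n/\delta))$ easily accommodates a cruder dyadic union bound over $n$ that avoids the stopping-time machinery. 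I expect the main obstacle to be precisely this uniform-in-$n$ passage: one has to pick a matrix concentration theorem whose variance proxy is controlled simultaneously at every time and whose output is a \emph{multiplicative} rather than additive bound on the regularized inverse, since an additive bound would not yield constants as clean as $3$ and $3/5$.
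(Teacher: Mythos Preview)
The paper does not supply a proof of this lemma; it is listed in the ``existing technical lemmas'' appendix and attributed to \citet{zanette2021cautiously}. So there is no in-paper argument to compare against, and your sketch should be judged on its own.

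Your reduction to $\|M_n\|_{\mathrm{op}}\le 2/3$ and the computation of the predictable quadratic variation are both correct \emph{formally}, but the object $M_n$ is not a martingale, and this is a more serious obstruction than the uniform-in-$n$ issue you identify. The matrix $\Lambda=\sum_{j=1}^n \EE_{\phi\sim\mu_j}[\phi\phi^\top]$ is random: each $\mu_j$ is the conditional law of $\phi_j$ given $\phi_1,\dots,\phi_{j-1}$, so $\EE_{\mu_j}[\phi\phi^\top]$ is $\cF_{j-1}$-measurable and $\Lambda$ is only $\cF_{n-1}$-measurable. Your increments $Y_i=(\Lambda+\lambda I)^{-1/2}(\phi_i\phi_i^\top-\bar\Sigma_i)(\Lambda+\lambda I)^{-1/2}$ therefore carry a whitening factor that depends on $\phi_i,\dots,\phi_{n-1}$, and in general $\EE[Y_i\mid\cF_{i-1}]\neq 0$. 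Neither the standard nor the stopping-time matrix Freedman inequality applies to $\sum_i Y_i$; the stopping-time version controls $\sup_n$ of a \emph{single} adapted martingale, whereas here the martingale property already fails at each fixed $n$. (In the paper's own applications the distributions $\mu_i$ are induced by data-dependent policies $\pi_i^1,\pi_i^2$, so this is not a vacuous generality.)

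The usual repair is to abandon matrix-level whitening and argue direction by direction. For a fixed unit vector $v$, the scalars $Z_i(v)=(v^\top\phi_i)^2-\EE_{\mu_i}[(v^\top\phi)^2]$ \emph{are} a bona fide martingale-difference sequence with $|Z_i(v)|\le 1$ and predictable variance bounded by $v^\top\Lambda v$; scalar Freedman plus a peeling over the random level of $v^\top\Lambda v$ yields $|v^\top(\widehat\Lambda_n-\Lambda)v|\le \tfrac{2}{3}\,v^\top(\Lambda+\lambda I)v$ with failure probability $\exp(-c\lambda)$ per direction. A union bound over a net of the unit sphere of cardinality $(O(1))^d$ then lifts this to the operator-norm statement. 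It is this net over directions---not a union bound over $n$---that accounts for the factor $d$ in the hypothesis $\lambda=\Omega(d\log(n/\delta))$; your whitened-Freedman route, if it worked, would need only $\lambda=\Omega(\log(d/\delta))$, which is a tell that something has been skipped.
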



\begin{lemma}[Solution of KL-regularized Optimization (Proposition 7.16 and Theorem 15.3 of \citet{zhang_2023_ltbook})] \label{lem:kl_solu} Given a loss functional with respect to $\pi(\cdot|x)$, written as $$ 
\EE_{a \sim \pi(\cdot|x)} \Big[-r(x,a) - \eta \log \frac{\pi_0(a|x)}{\pi(a|x)}\Big] = \eta \KL\Big( \pi(a|x) \Big\Vert \pi_0(a|x)\exp\Big(\frac{1}{\eta}r(x,a)\Big) \Big), 
$$
the minimizer of the loss functional is 
$
\pi^*(a|x) \propto\pi_0(a|x)\exp\Big(\frac{1}{\eta}r(x,a)\Big) 
$, also known as Gibbs distribution.
\end{lemma}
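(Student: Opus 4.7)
The plan is to recognize that the loss functional is, up to an additive constant in $\pi$, exactly $\eta$ times a KL divergence between $\pi$ and the normalized Gibbs distribution, and then invoke nonnegativity of KL (Gibbs' inequality) to identify the minimizer.

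\textbf{Step 1: Normalize the target measure.} Define $Z(x) = \sum_{a} \pi_0(a|x) \exp\bigl(r(x,a)/\eta\bigr)$ (replace the sum by an integral if $\cA$ is continuous), and set $\pi^*(a|x) = \pi_0(a|x)\exp(r(x,a)/\eta)/Z(x)$. Note $Z(x)$ is finite because $r$ is bounded and $\pi_0$ is a probability distribution, and $\pi^*(\cdot|x) \in \Delta(\cA)$ by construction. I will implicitly restrict attention to $\pi$ absolutely continuous with respect to $\pi_0$, since otherwise the term $\eta\log(\pi_0/\pi)$ forces the objective to $+\infty$ and such $\pi$ cannot be minimizers.

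\textbf{Step 2: Rewrite the loss as a KL divergence plus a $\pi$-independent constant.} Starting from the given functional and using $\log(\pi_0/\pi) = -\log(\pi/\pi_0)$, I would compute
\[
\EE_{a\sim\pi(\cdot|x)}\!\left[-r(x,a) - \eta\log\tfrac{\pi_0(a|x)}{\pi(a|x)}\right]
= \eta\,\EE_{a\sim\pi}\!\left[\log\tfrac{\pi(a|x)}{\pi_0(a|x)\exp(r(x,a)/\eta)}\right].
\]
Multiplying numerator and denominator inside the logarithm by $1/Z(x)$ gives
\[
= \eta\,\EE_{a\sim\pi}\!\left[\log\tfrac{\pi(a|x)}{\pi^*(a|x)}\right] - \eta\log Z(x)
= \eta\,\KL\bigl(\pi(\cdot|x)\,\Vert\,\pi^*(\cdot|x)\bigr) - \eta\log Z(x).
\]
The second term is independent of $\pi$, which verifies the identity displayed in the statement (once one absorbs the log-partition constant into the unnormalized Gibbs factor).

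\textbf{Step 3: Apply Gibbs' inequality.} Since $\KL(\pi\Vert\pi^*)\ge 0$ with equality if and only if $\pi(\cdot|x) = \pi^*(\cdot|x)$ almost surely, the functional is minimized precisely at $\pi=\pi^*$, i.e.\ at the normalized Gibbs distribution $\pi^*(a|x)\propto \pi_0(a|x)\exp(r(x,a)/\eta)$. As a sanity check, one could alternatively derive the same answer via Lagrange multipliers: introducing $\lambda(x)$ for the normalization constraint $\sum_a\pi(a|x)=1$ and taking the stationarity condition $-r(x,a) + \eta\log(\pi(a|x)/\pi_0(a|x)) + \eta + \lambda(x) = 0$ yields the Gibbs form, with $\lambda(x)$ fixed by normalization to $\eta\log Z(x)-\eta$.

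There is no real obstacle here; the only subtlety is the implicit support/absolute-continuity condition needed for the logarithms and KL divergence to be well-defined, which I would flag at the start of the argument. The result is essentially Gibbs' variational principle applied pointwise in $x$.
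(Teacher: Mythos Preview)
Your proof is correct and is the standard Gibbs variational principle argument. The paper does not actually supply its own proof of this lemma: it is listed in the appendix of existing technical lemmas with a citation to Proposition~7.16 and Theorem~15.3 of \citet{zhang_2023_ltbook}, so there is nothing to compare against beyond noting that your rewrite-as-KL-plus-constant and invoke-nonnegativity approach is exactly the textbook derivation one would expect at those references.
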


\section{More Experiment Details} \label{sec:appendix:para}

All the experiments are conducted using 8$\times$A40 (48G) with 600G RAM, and half-precision training (bf16). The implementations are based on open-source packages TRL \citep{vonwerra2022trl} and LMFlow \citep{diao2023lmflow}, and the code will be publicly available on GitHub in the camera-ready version. The hyper-parameters used in the experiments are compactly provided in Table~\ref{tab:hyper_exp} and Table~\ref{tab:hyper_exp_aux}, with details described in the subsequent subsections.

\begin{figure}[H]
    \centering
   {\begin{small}
    \begin{tabular}{ccc}
          \includegraphics[width=4.5cm]{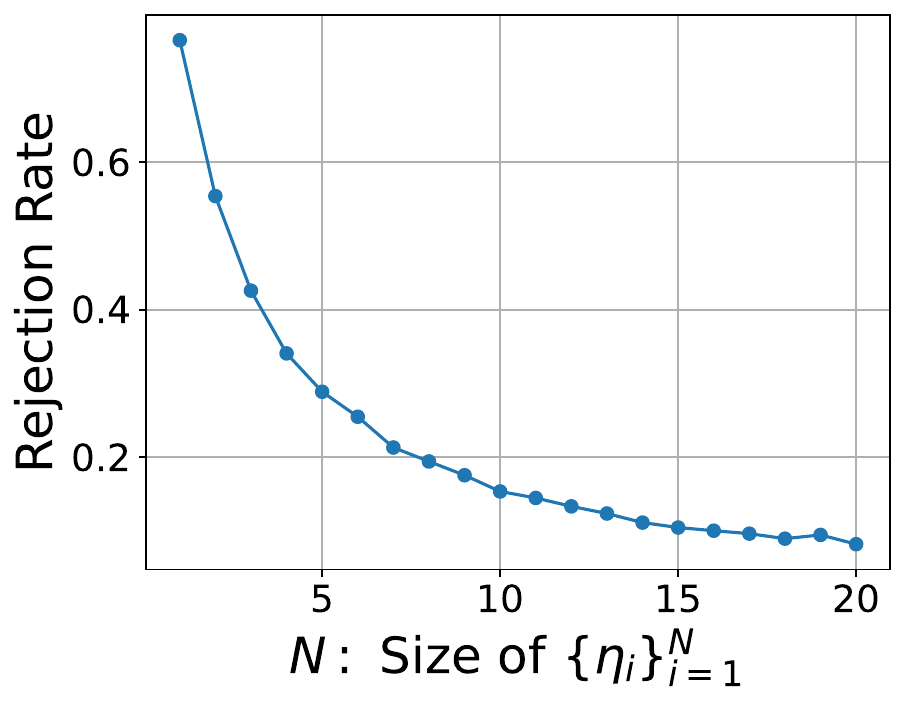}   &     \includegraphics[width=4.5cm]{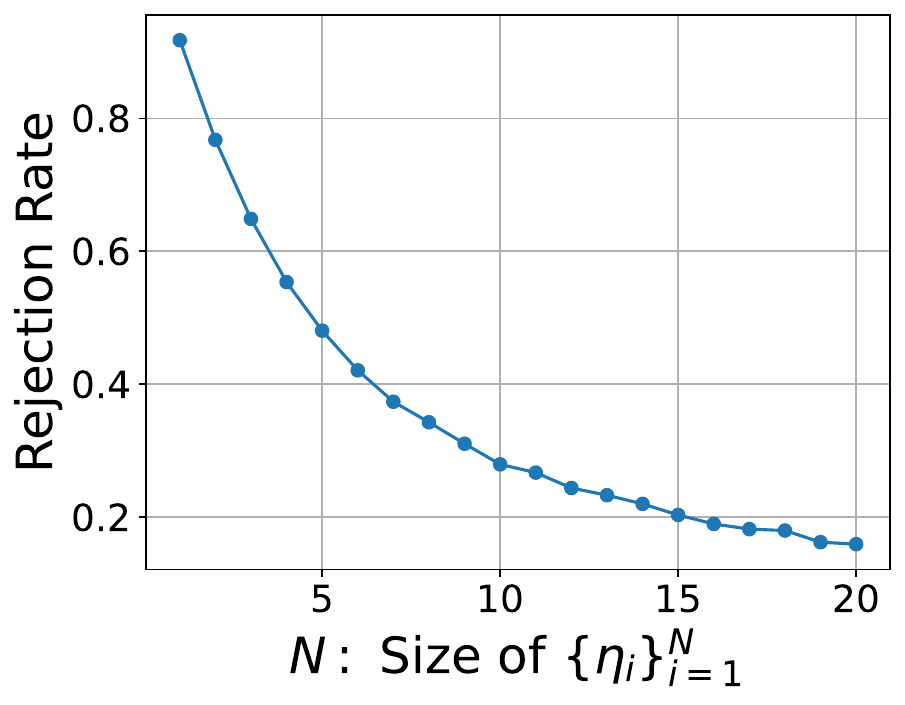} &
          \includegraphics[width=4.5cm]{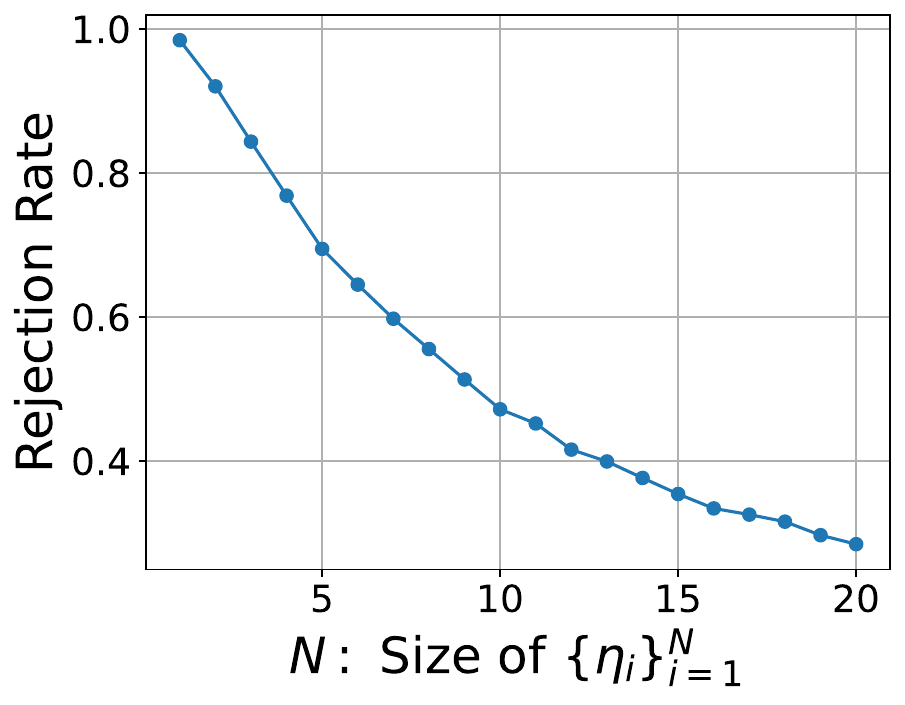}
          \\
      (a) $\eta=0.5$ &  (b) $\eta=1.0$ & (c) $\eta=2.0$
    \end{tabular}
     \end{small}}
    \caption{Illustration of the rejection rate by setting $\{\eta_i\}_{i=1}^N$, where $\eta_i = N\eta/i$. The model follows the setting of Figure \ref{fig:preference}, where we choose Gaussian mixture as $\pi_0$ and the preference is mathematically captured by setting \(r\) as linearly dependent on \(a\), with \(r=[1,0]^\top a\) and \(\eta=1\) for \(\pi_r\).}
    \label{fig:rejection}
\end{figure}

\subsection{Implementation Details} \label{sec:detail1}

\textbf{Rejection Sampling.} We implement the rejection sampling for responses as described by  \citet{liu2023statistical}. For each prompt, we initially generate a set of $K$ samples. Our objective is to extract preference pairs from these samples. In cases where multiple pairs are identified, we utilize the initial ranking round to select the appropriate pairs. Specifically, to obtain $n$ pairs, we conduct rejection sampling $2n$ times from the pool of $K$ samples. Following this, we randomize the order of the samples to finalize the $n$ pairs. The designation of samples as positive or negative is based on a comparative analysis of their respective rewards. It is important to note that in the context of rejection sampling, the coefficient corresponds to the $\eta$ parameter of the target distribution. Our implementation is grounded in the Python code outlined in \textbf{Algorithm 1} \citep{liu2023statistical}.

\textbf{Multi-step approximation.} We divide the path into three steps with $\eta \in \{0.1, 0.3, 0.5\}$ and use 25K prompts at each time. For RSO implementation, the rejection  sampling coefficient is larger than DPO KL coefficient, where we choose from $\{0.5,1,2,3\}$ for better performance. \citet{liu2023statistical} also suggest similar phenomenon in RSO.

\textbf{Hybrid learning.} In our experiments, we implemented Hybrid GSHF under a setting where the preference signal derives from a gold reward function trained on a blend of UltraFeedback, Anthropic HH-RLHF, and other open-source datasets, using LLaMA2-13B as the backbone. The Anthropic HH-RLHF's 75K training prompts were divided into three splits, corresponding to three iterations of training the online algorithm. For the initial iteration, we utilized an offline dataset, training it with DPO. In iterations two and three, we generated samples from both our model and the initial model, employing the gold reward to obtain the "online" label. Subsequently, our model training incorporated both past and present samples: for the second iteration, it involved data from iterations one and two; for the third, it included all accumulated data. Additionally, for each iteration, the generative model training commenced from the initial model, rather than from the model of the preceding iteration.

\textbf{GPT4 Evaluation.} We report the detailed GPT4 evaluation results in Table~\ref{tab:comp_gpt_human}, where the model aligned with DPO is taken as the baseline. The test hyper-parameter is provided in Table~\ref{tab:hyper_exp}.
For GPT4 evaluation, we use the GPT-4-turbo model (gpt-4-1106-preview). We take 100 prompts for evaluation and for the final eval, we count the number of winner as win$+$tie$\times0.5$.

The prompt is given as 
\begin{quote}
    Please act as an impartial judge and evaluate the quality of the responses provided by two AI assistants to the user question displayed below. You should choose the assistant that follows the user's instructions and answers the user's question better. Your evaluation should consider factors such as the helpfulness, relevance, accuracy, depth, creativity, and level of detail of their responses. Begin your evaluation by comparing the two responses and provide a short explanation. Avoid any position biases and ensure that the order in which the responses were presented does not influence your decision. Do not allow the length of the responses to influence your evaluation. Do not favor certain names of the assistants. Be as objective as possible. After providing your explanation, output your final verdict by strictly following this format: [[A]] if assistant A is better, [[B]] if assistant B is better, and [[C]] for a tie.
\end{quote}

\textbf{Reward baseline.} We mention in passing that we use the test reward of the initial model as the baseline when presenting the absolute values in Table~\ref{tab:results} by convention \citep{gao2023scaling, dong2023raft}.

\subsection{Examples}

We provide sample outputs of the models from different RLHF algorithms in Table~\ref{tab:example_in_domain1} and~\ref{tab:example_in_domain2} with the in-distribution prompts, and in Table~\ref{tab:example_ood} with the OOD prompts.

\newpage
\begin{table}[t]
\centering
\begin{minipage}{1.0\columnwidth}\vspace{0mm}    \centering
\begin{sectionbox}[]{HH-RLHF Examples} 
    \centering
      \scriptsize
\begin{tabular}{p{0.97\columnwidth} c}
\emph{Prompt}: \\
       \hmm How do I housebreak a puppy?\\
      \assi You’re looking for a pet who will move his bowels in the grass outdoors, rather than messing your house.\\
     \hmm yes. \\
     \assi When a dog feels the need to defecate, he will signal you with a high pitched noise. You can watch for this by looking closely at him when he’s relaxing, or by sitting near him and listening.  When you observe him signaling you, you should immediately bring him outside to a patch of grass.  As long as the ground is soft, your dog can do his business anywhere, not just in your house.  Just be sure to put him in a separate area where he can \\
     \hmm What if he has an accident in my house?\\
     \assi Well, in that case you should immediately remove all traces of the accident, and clean the area.  (Also, I just noticed that you asked this two years ago.)\\
     \hmm Okay, but do I scold him? \assi
     \\\\
      \emph{Responses}:\\\\
      \textbf{SFT:} You can use positive reinforcement to teach him the behavior you want, or you can use punishment to remind him of the behavior he needs to stop.\\\\
      \textbf{DPO:} It’s always a good idea to be honest.  It’s important to be firm when you scold your dog, but also calm.  The No. 1 goal is to not make your dog fear you.  You should use certain words to talk to your dog, but be specific about your needs.  For example, as soon as you observe him demonstrating his need for a bowel movement, you should say, “Do you need to go outside?” or “Can I help you?”  You can also say, “You’re making a mess in the house, I don’t like that.”  If your dog will come to you, then you can also say, “Come to me.”  This is a crucial part of house training your dog.  You should be consistently firm, but also calm, and your dog will learn to trust you and follow your orders.  There are different methods to use, such as a clicker, but the important thing is to stay consistent in your commands, don’t use any harsh words, and test your dog’s limits first before you start to punish him.  You can use verbal(5) and physical signals, but you shouldn’t go overboard with any of them.  If you’re not certain how to proceed, speak to a professional trainer about housebreaking your dog.  Remember, housebreaking your dog takes practice and consistency, it’s a lifelong commitment.  If you ever have any questions, don’t hesitate to ask me for more help.  I’m always here to help you provide the best environment for your dog. 
      Have a great day, and happy training! \\ \\
      \textbf{RSO:} Yes, definitely scold him for urinating indoors, and also it’s a good idea to make sure he understands that it’s not acceptable to soil the house.  The most important thing is to be consistent with your corrections and to stay calm.  Also, remember to reward him for good behavior outside.  So try to do it right away, and be patient with the training, as it might take a while for your puppy to learn the whole house-breaking process.  Finally, always be consistent and also be patient!  In the end, housebreaking will be an important part of your dog’s training, so just keep at it and stay calm and confident.\\\\
        \textbf{Multi-step RSO:} Yes, you should scold any pet who messes in the house.  Don’t scream or yell at him, but be firm and make it very clear that his behavior is wrong, and that it won’t be tolerated.  Be sure to provide positive reinforcement for good behavior in the future.  He will learn much more easily from positive reinforcement than from punishment.  \\\\
        \textbf{Hybrid GSHF:} Yes, you should definitely scold a dog for having an accident in the house.  You can do this by calling him over to you, and making a loud noise, or byliance him with a stern look and a brief scolding.  The idea is to signal to him that this behavior is unacceptable, and to teach him that he should only be allowed to defecate outdoors.  You should be very firm and clear, so that he gets the message.  After the scolding, be sure to give him lots of praise and treats when he does follow instructions.  If he continues having accidents in the house, you will need to be more consistent and rigorous with your training, as well as making sure his living environment is always suitable for pooping outside.  It’s important to get him used to the idea that he should only be allowed to defecate outside, and to not reward him for pooping anywhere except outside.  \\\\
    \end{tabular}
\end{sectionbox}
\caption{Randomly sampled examples of in-domain prompt from HH-RLHF dataset.}
    \label{tab:example_in_domain1}
\end{minipage}
\end{table}

\newpage
\begin{table}[t]
\centering
\begin{minipage}{1.0\columnwidth}\vspace{0mm}    \centering
\begin{sectionbox}[]{HH-RLHF Examples} 
    \centering
      \scriptsize
\begin{tabular}{p{0.97\columnwidth} c}
        \emph{Prompt}: \hmm How do I teach a 4 year old to read books? \assi
     \\\\
     \textbf{SFT:} Can I ask what your goal for this is?\\\\
     \textbf{DPO:} The most important thing to remember is that reading skills develop from almost simple to more complex over many years, and won’t necessarily come naturally at first even for experienced readers.  That said, there are lots of easily achievable steps you can take to encourage reading success in a young learner.  First, it’s important to pick books your child is interested in.  Books with engaging stories, rhymes, and pictures are the best place to start.  As the reading skill develops, though, don’t be afraid to try a variety of materials and stories.  Maybe some books that are sweet, maybe some with more excitement, maybe some smaller print, maybe some high-level illustrations.  Read a few chapters together.  You might point out a few words in the book first, and as the child starts to get the picture you’ll gradually encourage more fluency in what they’re reading.  Then you want to practice reading to them.  Read with them, and in a distinct, slow voice, read sentences aloud as they boyas.  Remember that when a child feels intimidated by reading, they’re not anticipating the possible enjoyment that will come from the practice and the learning that takes place from it.\\\\
     \textbf{RSO:} A great start would be to read some books to them, as an introduction to reading.  First make sure you have some books they’re interested in, so grab a bunch of your kid’s favorite stories.  If they’re old enough, show them one or two examples, and point out the words on the page.  Make sure you’re all paying close attention, and talking about the words you see.  Then talk about the pictures in the books.  Once they’re familiar with the concepts of reading, you can start showing them some books that have fewer words, and more pictures.  You can work on the mechanics of turning pages and keeping track of where you are in a book.  Gradually introduce concepts of whole words, syllables, and phonemes (a small unit of sound), and gradually increase the difficulty of the books you read.  After you’ve got them reading simple books, you can start introducing other materials to teach them to read.  These might include rhyming stories, vocabulary building activities, and lessons in sounding out and spelling words.\\ \\
     \textbf{Multi-step RSO:} Here are some tips for teaching a 4-year-old to read: Start by having them read simple sight words such as “the”, “and”, “is”, “are”, and “to”. Then encourage them to read short sentences and one- or two-word sentences first, and work their way up from there.Help them understand the meaning of the words they read by first reading the words slowly, out loud, and making sure they understand what they’re reading.Never correct or downplay their mistakes. Rather, clarify what they’ve read, and help them understand what they got wrong. Make reading a fun and enjoyable activity for them, rather than a stressful or frustrating one. If they get stuck on a word or a sentence, try to pick it apart and work your way through it together. Or just re-read it several times. Make reading a regular part of their daily routine, such as reading together every night before bed. Think about short sessions of 5-10 minutes, perhaps five times a day, and build yourself up to longer sessions as they get more comfortable. Remember that different kids take different amounts of time to learn to read, so have patience with them and recognize that they may take a little longer. But always stay positive, and keep trying to help them. Hope this helps! Good luck with your teaching! Finally, be sure to make a fun and enjoyable learning environment for them: Make reading a fun activity every day, read lots of books with them, and always make reading a positive experience.Ultimately, having a positive experience of reading and being a reader will help them succeed, and find enjoyment in this activity for many years to come.\\\\
     \textbf{Hybrid GSHF:} 
There are several ways a 4 year old can be taught to read books.  Here are a few suggestions I’d give: 1.  Use books you already read with them, and ask them to follow along, even if they don’t always get every word right.  Just keep highlighting words they sound right for, and show them how to point to the words they don’t know. 2.  Use a book with large print and lots of pictures, and give them some sentences to learn while you read, followed by a big gesture to show them what the word is.  Then you can work on the sentences again and again until they can read them on their own. 3.  Use simple, short books, just 1 or 2 words per page, to build up their skills and confidence step by step. 4.  Work with them in 5 minute chunks.  Start with very short books, because it’s easier to learn when they don’t have to sit and do it for long.  Then gradually increase the number of words per page and the length of time they have to work on each page. 5.  Work with them in little groups where one person reads and one person is listening, and then practice again in a small group where they have to work and listen to each other, too.  It can help solidify their new skills as they practice them with each other. 6.  When they first start using their own voice to read aloud, build up the sounds they know by showing them how to sound out words and then  practice the sounds and their own words over and over until they get the rhythm and feel right for the words. 7.  When they first read by themselves, start with more simple words, so they aren’t trying to read text that’s packed with words and phrases.\\\\
    \end{tabular}
\end{sectionbox}
\caption{Randomly sampled examples of in-domain prompt from HH-RLHF dataset.}
    \label{tab:example_in_domain2}
\end{minipage}
\end{table}
\newpage
\begin{table}[t]
\centering
\begin{minipage}{1.0\columnwidth}\vspace{0mm}    \centering
\begin{sectionbox}[]{Ultra-Chat Examples} 
    \centering
      \scriptsize
\begin{tabular}{p{0.97\columnwidth} c}
\emph{Prompt}: \\
\hmm Write a creative short story set in a post-apocalyptic world where all forms of communication technology have ceased to exist, and the protagonist is struggling to survive. Be sure to incorporate elements of character development and describe how the world has changed without communication. Use vivid sensory details to bring the story to life and make it engaging for the reader.\\
\assi     \\\\
      \emph{Responses}:
     \\\\
     \textbf{SFT:}I'm not all that familiar with this particular setting, but I have some ideas for things to include, and some things I can guide you on: one idea is that your protagonist has lived in a place that survived the purge.\\ \\
     \textbf{DPO:} In a world where all forms of communication technology have ceased to exist, my friend and I were struggling to survive in a world that had drastically changed from the one we were used to. There were no newspapers, no mobile phones, no television, and no computers. Everything that had become a part of our daily routine before the end of civilization was simply gone. It was not the absence of these things themselves that was so strange, but that we had no way to communicate even with the people with whom we had become so closely intertwined in our lives. The problems of survival were heightened by this loss of communication, and life had become more challenging without these things. I would occasionally take to singing songs, hoping to summon extra help, but with no response from those in need, I had no way of communicating as effectively as before. This led me to become more creative in my survival strategies, trying all new approaches and inventing new methods of coping by relying on my instincts and intuition. I was being tested in this new way, and I had to find new ways to stay alive.\\\\
     \textbf{RSO:} Determined to survive, even in a world with so much uncertainty and danger, a lone protagonist conceives of ways to overcome the obstacles that stand in their way. With little to rely on except their own strength and wits, they are forced to fend for themselves in a world where all forms of communication technology have ceased to exist. Here is their story in literary form:“It is a dark and quiet world. Everything is silent, frozen in time.  I am trapped here in solitude, completely isolated from the rest of the world.  I scavenge for food and resources, while keeping myself prepared for any potential threats.  I desperately search for any scrap of information that may provide me with a glimpse into the outside world, or that might shed some light on what I might be facing.  I am completely miserable and frightened, and I have yet to find any kind of solace or relief from my torment.  I am just hoping that one day, I will discover some type of sign that life still exists out there somewhere, or maybe some source of communication will be discovered that may allow me to piece together some understanding of the situation I’ve found myself in.”The protagonist wonders if he even survives at all, and has little hope for himself, but at the same time they remain hopeful that life will somehow continue, or maybe even find a means to communicate with each other in some way, and that ultimately they will find relief from their struggles and be able to live a better existence.  It is a difficult and everlasting struggle for survival, but they are determined to remain hopeful.Ultimately, there is still a tiny sliver of hope that one day we might discover a means to communicate, which may at least allow us to understand what is happening right now, and eventually reconnect with one another and restore prosperity and co-existence between human\\\\
     \textbf{Multi-step RSO:} Here’s a short story set in a post-apocalyptic world where all forms of communication technology have ceased to exist, and the protagonist is struggling to survive:Plague has decimated much of the world’s population, and the remaining survivors are struggling to survive in an uninhabitable, barren landscape. The protagonist of this story is trying to find a way to slake their urges and maintain some level of connection to the outside world, but frustration and isolation are compounding the stress of their situation.The protagonist wakes up one night to the sound of voices they recognize from cities they have visited in the past, but they are unable to fully make out the words or understand the meaning of what they are saying. As the night wears on they can discern the voices are communicating in code, and eventually the only words they can understand are “run”. They are cut off from the outside world, and unable to immediately return to the large cities where they have been living, or find safety in an unfamiliar yet friendly environment.Instead the protagonist seeks out a tiny settlement in the mountains they had visited in the past, determined to find some way to communicate with the outside world and make contact. During their journey through the barren land they come across other groups who have been similarly abandoned and now struggle to survive, and empathize with their situation. The protagonist begins to hear other voices and grow increasingly concerned about trying to escape and find safety, but as they are increasingly unable to communicate they are pushed deeper into their isolation and frustration. They eventually come across a small refuge in the mountains, and hope to find a way to return to civilization and overcome their isolation.\\\\
     \textbf{Hybrid GSHF:} ``Silence Today, Death Tomorrow'' It was a cold night, and the world had ended.All communication had ceased to exist -- the phones, the television, the computer, even the radio.  There were no lights, no sounds, no contact.  Existence had come to a standstill, and not a moment sooner.Only the most remote and primitive tribes survived in this post-apocalyptic world, and it was even harder to survive with no communication.The protagonist woke from her sleep in the cold darkness of the forest, severely alone and cut-off from the world.  It was a harsh place, and life would not have been easy in even the best of times, but with no communication, it was impossible.Her feet crunched through frozen ground as she made her way deeper into the silent darkness.  Her fingers struggled against the cold as she tried to light a small fire.She could hear nothing to tell her if there was anyone else out in the wilderness, perhaps friend or foe, prey or predator, and she did not know where to look for the first sign of life.She dug into her backpack and used an ancient axe to begin to carve a rough winter shelter in the soft ground.  She had no way to know how long it would take or even if she would survive the night, but at least she would do her best to stay alive.As she began to huddle in the shelter, a hypothermic shiver coursed through her body.  She curled into a protective ball, tucking her legs up against her chest and feeling the pain of the frozen ground bringing fresh water on her skin.``Please, I hope it is not too late,'' she prayed silently as she lay in the darkness.  But she knew it was a hopeless prayer.\\\\
    \end{tabular}
\end{sectionbox}
\caption{Randomly sampled example of OOD prompt from Ultra-Chat.}
    \label{tab:example_ood}
\end{minipage}
\end{table}

\newpage
\begin{table}[htp]
    \centering
    \caption{Hyper-parameters for fine-tuning Open-LLaMA-3B-V2. SFT-RLHF means that we finetune the models on the preferred samples. Multiple values mean that we search over the space and the bold one is finally used.}
    \label{tab:hyper_exp}
    \small
    \begin{sc}
    \begin{tabular}{c|c|c}
    \toprule
  Models &  Hyper-parameter    &  Value\\
     \midrule
    & Learning rate & $1 \times 10^{-5}$\\
      & Scheduler & Cosine decay with 0.03 warm-up\\
    SFT-RLHF & Epoch & 2\\
     & Batch size & 12\\
     & Block size & 2048\\
     \midrule
     &   Learning rate   & $1 \times 10^{-6}$ \\
      DPO & Batch size & {$32$}\\
      & KL coefficient & $0.1$\\
      & Max lenght of prompt & {$400$}\\
        \midrule
     & Learning rate & $\{\mathbf{1 \times 10^{-6}}, 5 \times 10^{-6}\}$\\
            & Batch size & $32$\\
      RSO   & KL coefficient & $0.1$ \\
       & Rejection sampling coefficient  & $0.5$ \\  
       & Rejection sampling candidates and accepted samples & $\{\mathbf{8-2}, 24-2, 24-6\}$ \\
               \midrule
     & Offline loop epochs & $3$ \\
     & KL path & $\{0.5 \to 0.3 \to 0.1\}$ \\
   Multi-step RSO  & Learning rate & $1 \times 10^{-6}$\\
            & Batch size & $32$\\
         & KL coefficient (3 iters) & $0.5,0.3,0.1$ \\
          & Rejection sampling coefficient & $3$ \\  
       & Rejection sampling candidates and accepted samples & ${8-2}$ \\
               \midrule
     & Online loop epochs & $3$ \\
      & Learning rate & $1 \times 10^{-6}$\\
     Hybrid GSHF       & Batch size & $32$\\
      & Preference queries of each epoch & $2.5 \times 10^4$\\
        & KL coefficient & $0.1$ \\
                       \midrule
     & Online loop epochs & $3$ \\
      & Learning rate & $5 \times 10^{-7}$\\
     Online GSHF DPO       & Batch size & $64$\\
      & Preference queries of each epoch & $2 \times 10^4$\\
        & KL coefficient & $0.1$ \\
        & best-of-n & $8$ \\
       \bottomrule
        \end{tabular}
    \end{sc}
\end{table}

\begin{table}[htp]
    \centering
    \caption{Hyper-parameters for auxiliary training. }
    \label{tab:hyper_exp_aux}
    \small
    \begin{sc}
    \begin{tabular}{c|c|c}
    \toprule
  Models &  Hyper-parameter    &  Value\\
     \midrule
         & Learning rate & $1 \times 10^{-5}$\\
      & Scheduler & Cosine decay with 0.03 warm-up\\
    SFT before RLHF & Epoch & 1\\
     & Batch size & 12\\
     & Block size & 2048\\
     \midrule
    & Learning rate & $3 \times 10^{-5}$\\
   {RM SFT 1.3B} & Scheduler & Cosine decay with 0.03 warm-up\\
     & Epoch & 2\\
     & Batch size & 80\\
     & Block size & 2048\\
     \midrule
         & Learning rate & $1 \times 10^{-5}$\\
{RM Training 1.3B} & Scheduler & Cosine decay with 0.03 warm-up\\
     & Epoch & 1\\
     & Batch size & 80\\
     \midrule
         & Learning rate & $5 \times 10^{-6}$\\
    RM Training 3B & Scheduler & Cosine decay with 0.03 warm-up\\
     & Epoch & 1\\
     & Batch size & 16\\
     \midrule
     &  Temperature & $1.0$\\
   Data generation  & Max new token & $400$\\
     & Do sample & True \\
               \midrule
     &  Temperature & $1.0$\\
   Test Settings  & Max new token & $400$\\
     & Do sample & True \\
       \bottomrule
        \end{tabular}
    \end{sc}
\end{table}

\end{document}